\documentclass{article}


\usepackage{wrapfig}


\usepackage[final,nonatbib]{neurips_2025}


\usepackage{microtype} 

\usepackage[utf8]{inputenc} 
\usepackage[T1]{fontenc}    
\usepackage{hyperref}       
\usepackage{url}            
\usepackage{booktabs}       
\usepackage{amsfonts}       
\usepackage{nicefrac}       

\usepackage{xcolor}

\usepackage{algorithm}
\usepackage{algpseudocode}
\usepackage{amsmath}
\usepackage{amssymb}
\usepackage{amsthm}
\usepackage{mathtools}
\usepackage{braket}
\usepackage[utf8]{inputenc}
\usepackage{colortbl}
\usepackage{enumitem}
\usepackage{lipsum}
\usepackage{wrapfig}
\usepackage[square,numbers]{natbib}

\algnewcommand{\LineComment}[1]{\State $\triangleright$ #1}

\DeclareMathOperator*{\R}{\mathbb{R}}

\DeclareMathOperator{\E}{\mathbb{E}}
\DeclareMathOperator*{\cO}{\mathcal{O}}

\DeclareMathOperator{\TV}{\mathrm{TV}}

\newtheorem{assumption}{Assumption}

\newtheorem{definition}{Definition}
\newtheorem{theorem}{Theorem}

\newtheorem{lemma}{Lemma}
\newtheorem{remark}{Remark}

\newcommand{\norm}[1]{\left \lVert #1 \right\rVert }

\title{Regret Analysis of Average-Reward Unichain MDPs via an Actor-Critic Approach
}
\author{
  Swetha Ganesh  \\
  Purdue University, USA \\
  \texttt{ganesh49@purdue.edu} \And
  Vaneet Aggarwal \\
  Purdue University, USA \\
  \texttt{vaneet@purdue.edu} \\
}

\begin{document}

\maketitle

\begin{abstract}
Actor-Critic methods are widely used for their scalability, yet existing theoretical guarantees for infinite-horizon average-reward Markov Decision Processes (MDPs) often rely on restrictive ergodicity assumptions. We propose NAC-B, a Natural Actor-Critic with Batching, that achieves order-optimal regret of $\tilde{O}(\sqrt{T})$ in infinite-horizon average-reward MDPs under the unichain assumption, which permits both transient states and periodicity. This assumption is among the weakest under which the classic policy gradient theorem remains valid for average-reward settings. NAC-B employs function approximation for both the actor and the critic, enabling scalability to problems with large state and action spaces. The use of batching in our algorithm helps mitigate potential periodicity in the MDP and reduces stochasticity in gradient estimates, and our analysis formalizes these benefits through the introduction of the constants $C_{\text{hit}}$ and $C_{\text{tar}}$, which characterize the rate at which empirical averages over Markovian samples converge to the stationary distribution.
\end{abstract}

\section{Introduction}

Reinforcement Learning (RL) involves an agent interacting with an unknown environment to maximize long-term rewards. It has been successfully applied to diverse areas such as traffic engineering, resource allocation, and ride-sharing \citep{geng2020multi, chen2024learning, al2019deeppool}. RL problems are commonly framed as episodic, discounted, or (infinite-horizon) average-reward; the average-reward setting is especially suited to real-world tasks due to its ability to better capture long-term behavior.


Our focus is on the average-reward Markov Decision Process (MDP) setting, where a key measure of algorithmic performance is expected regret. \citep{auer2008near} established that for a broad class of MDPs, any algorithm incurs expected regret lower bounded by $\Omega(\sqrt{T})$, where $T$ is the time horizon. Many existing regret analyses in this setting assume ergodicity, a strong assumption that simplifies analysis by ensuring fast mixing to a stationary distribution, but is often difficult to verify or justify in practice.

In the absence of ergodicity, existing algorithms are typically designed for model-based or tabular settings, with computational complexity that scales with $|\mathcal{S}|$ and $|\mathcal{A}|$, the cardinalities of the state and action spaces, respectively. These requirements are prohibitive for environments with large or continuous state and action spaces. Although linear MDPs reduce improves dependence on $|\mathcal{S}|$, they still rely on strong structural assumptions, namely, linearity of the transition dynamics and reward functions. Moreover, value-based methods in such settings often remain computationally expensive due to the need to maximize over all actions at each iteration.



To overcome these limitations, a promising alternative is the use of Policy Gradient (PG) methods, which directly optimize parameterized policies, often modeled by neural networks, via gradient descent. While PG methods have shown strong empirical performance, most theoretical analyses assume ergodicity. In contrast, our work develops PG methods that do not rely on ergodicity. Specifically, we make only a minimal assumption: each policy induces a unichain MDP, i.e., it has a single recurrent class. This relaxation enables analysis in more general and realistic environments.



\subsection{Related works}

\begin{table*}[t]\label{tab:regret}
\begin{center}
\footnotesize
\begin{tabular}{|c|c|c|c|c|}
\hline
\textbf{ Algorithm} & \textbf{Regret}  & \textbf{Ergodicity-free} & \textbf{General Policy }\\ \hline

  MDP-OOMD {\small\citep{wei2020model}}& $\tilde{O}(\sqrt{T})$ & No & No \\
     \hline

   Optimistic Q-learning {\small\citep{wei2020model}}&  $\tilde{O}(T^{2/3})$ &Yes$^{\color{blue}(1)}$& No \\
      \hline
    MDP-EXP2 {\small\citep{wei2021learning}}& $\tilde{O}(\sqrt{T})$ &  No & No \\
     \hline

 UCB-AVG \citep{pmlr-v195-zhang23b}& $\tilde{O}(\sqrt{T})$& Yes$^{\color{blue}(1)}$ & No\\
    \hline

  PPG \citep{bai2023regret}& $\tilde{O}(T^{3/4})$ & No & Yes\\
     \hline

   PHAPG \citep{ganesh2024variance}& $\tilde{O}(\sqrt{T})$ & No & Yes\\
      \hline
   Optimistic Q-learning {\small\citep{agrawal2025optimistic}} &  $\tilde{O}(\sqrt{T})$ & Yes& No \\
   \hline
   $\gamma$-DC-LSCVI-UCB \citep{hong2025comp}& $\tilde{O}(\sqrt{T})$& Yes$^{\color{blue}(1)}$ & No\\
   \hline
    \rowcolor{green!25} This work (Algorithm \ref{alg:acb})& $\tilde{O}(\sqrt{T})$& Yes & Yes\\
   \hline
\end{tabular}
\caption{A comparison of regret bounds for model-free RL algorithms in infinite-horizon average-reward MDPs. By "General Policy," we refer to algorithms that employ a parameterized, policy-based approach where the parameter vector $\theta \in \R^d$, possibly with $d\ll|\mathcal{S}||\mathcal{A}|$. $^{\color{blue}(1)}$ These analyses consider the more general setting of weakly communicating MDPs or Bellman optimality. \vspace{-5mm}
}
\end{center}
\end{table*}

\textbf{Value-Based Approaches:} Model-based algorithms like those in \citep{auer2008near,agrawal2017optimistic,jaksch2010near,zurek2024spanbased}, as well as recent model-free methods, achieve the optimal $\mathcal{O}(\sqrt{T})$ regret. However, most are tabular and value-based, maintaining Q-values for each state action pair, an approach that scales poorly to large or continuous spaces. Examples include optimistic Q-learning methods \citep{wei2020model, agrawal2025optimistic, pmlr-v195-zhang23b}. The $\gamma$-DC-LSCVI-UCB algorithm \citep{hong2025comp} improves scalability with respect to $|\mathcal{S}|$ but requires maximization over the entire action set at each iteration, which is still computationally expensive for large $|\mathcal{A}|$. Furthermore, its analysis assumes a linear MDP with known feature representations, whereas our setting is considerably more general and does not rely on such structural knowledge or assumptions.

\textbf{Direct Policy Gradient Approaches:} Unlike value-based methods, policy gradient (PG) approaches are well-suited to environments with large or continuous state and action spaces, and are relatively easy to implement. Despite their empirical success, theoretical analysis of PG methods has largely focused on ergodic MDPs. This is due to favorable properties in the ergodic setting, such as bounded hitting times and fast mixing, which simplify analysis and enable near-optimal regret bounds. These properties ensure that samples collected $t_{\mathrm{mix}}$ steps apart approximate independent draws from the stationary distribution. Algorithms like MDP-OOMD \citep{wei2020model}, PPG \citep{bai2023regret}, and PHAPG \citep{ganesh2024variance} exploit these characteristics to construct low-bias value estimates. However, to the best of our knowledge, no prior work provides regret guarantees for PG methods in the more general unichain setting, where such strong mixing assumptions do not hold.


\textbf{Actor-Critic Approaches:} In contrast to direct policy gradient methods that estimate gradients from sampled trajectories, Actor-Critic (AC) methods use Temporal Difference (TD) based critics to aid policy gradient through bootstrapped value estimates. Further, direct methods scale poorly and rely on access to mixing or hitting times, typically unknown in practice. AC methods are more sample-efficient but introduce addtional bias, making theoretical analysis harder. Regret results for AC methods are limited and focus on global convergence (pseudo-regret), as seen in \citep{patel2024global, wang2024nonasymptotic, ganesh2024order}. The MLMC-NAC algorithm in \citep{ganesh2024order} achieves order-optimal convergence using multi-level Monte Carlo to reduce bias, but still assumes fast mixing. Extending such results to unichain MDPs, which lack exponential mixing, requires fundamentally different techniques and is the focus of this paper.



\textbf{Unichain Analyses}: A few recent works explicitly address the unichain setting \citep{agrawal2025optimistic, li2024stochastic}. The Optimistic Q-learning algorithm in \citep{agrawal2025optimistic}, discussed earlier, is value-based. The SPMD method in \citep{li2024stochastic} takes a policy-based approach with order-optimal sample complexity but uses a tabular policy, leading to poor scalability with large action spaces. It also assumes access to a simulator (generator), which is often unrealistic. Moreover, both works rely on stronger assumptions than the classical unichain condition used in our analysis (see Remark~\ref{rem:unichain}).

\textbf{Discounted to Average Techniques}: The most common strategy for relaxing ergodicity assumptions in reinforcement learning is to reduce the average-reward problem to a discounted MDP. This reduction is widely adopted in works addressing weakly communicating or Bellman optimality settings \citep{wei2020model,pmlr-v195-zhang23b,hong2025comp,zurek2024spanbased}, where the discounted problem is solved with a discount factor $\gamma$ close to 1. In particular, these works select $\gamma$ satisfying $1/(1-\gamma) \simeq T^\beta$ for some $\beta>0$. Notably, this approach requires very sharp bounds in DMDPs in terms of $1/(1-\gamma)$ whereas, existing guarantees for policy gradient methods in the discounted setting exhibit very poor dependence on this term, resulting in much weaker performance bounds if such Discounted to Average Techniques are used (see Appendix~\ref{app:disc-avg}). Additionally, these approaches require access to a simulator that allows sampling from a given distribution $\rho$ at each iteration, which  can be impractical. 

\subsection{Main contributions}

We propose an actor-critic algorithm in Algorithm~\ref{alg:acb}, Natural Actor Critic with Batching (NAC-B), that achieves order-optimal regret of $\tilde{O}(\sqrt{T})$ in average-reward Markov decision processes (MDPs) under the unichain assumption. Our key contributions are summarized as follows:

\begin{itemize}[leftmargin=*]
    \item {\bf First Regret Guarantees for General Policy Gradients Beyond Ergodicity:} While policy gradient (PG) methods with general parameterizations have been analyzed under the restrictive ergodicity assumption, we extend this to the weaker unichain condition. This aligns with the assumption used in the foundational policy gradient theorem for average-reward MDPs \citep{sutton1999policy}. Our work provides the first regret guarantees for parameterized PG methods under this setting.

    


\item {\bf Online Learning Without Simulator Access:} Unlike discounted-MDP reductions that require a simulator to reset to arbitrary initial distributions at each iteration, our algorithm operates entirely online in unichain MDPs. This enables learning in environments where simulator access is unavailable or costly (e.g., real-time network traffic engineering, robotics).

\item {\bf Technical Challenges:} Analyses in the ergodic setting rely crucially on the property of exponentially fast mixing. In contrast, the classical unichain setting does not generally admit analogous results. For example, in a periodic Markov decision process, the quantity $\lim_{t\to\infty} (P^{\pi_\theta})^t(s_0, \cdot)$ may not even exist. To address this challenge arising from periodicity, we employ large-batch averaging. Specifically, we show that the time-averaged distribution $
\left\| \frac{1}{t}\sum_{i=1}^t(P^{\pi_\theta})^i(s_0, \cdot) - d^{\pi_\theta}(\cdot) \right\|_{\mathrm{TV}}
$ converges to the stationary distribution at a rate of $\cO(1/t)$ (see Lemma \ref{lem:ergodic-avg}). This averaging approach enables us to show that the bias and variance of our estimators decay at a sufficiently fast rate, albeit not exponentially (see Lemma \ref{lem:bias-variance-unichain}). An additional consequence of these results is that they facilitate the derivation of bounds on the value and $Q$-functions.

Despite the reduction in bias and variance, transient states continue to pose analytical challenges. Intuitively, this is because they do not contribute meaningful long-term information. A more technical explanation is provided in Section~\ref{subsec:main-result}. To isolate the impact of transient states, we prove a rapid entry into the recurrent class (Lemma~\ref{lem:prob-exponential}). Combined with the strong Markov property, this allows us to restrict our analysis to the recurrent class.


\end{itemize}

\section{Problem Formulation and Preliminaries}

We study an infinite-horizon reinforcement learning problem with an average reward criterion, modeled as a Markov Decision Process (MDP). The MDP is represented by the tuple $\mathcal{M} = (\mathcal{S}, \mathcal{A}, r, P, \rho)$, where $\mathcal{S}$ denotes the finite state space and $\mathcal{A}$ represents the finite action space. The reward function $r: \mathcal{S} \times \mathcal{A} \to [0,1]$ assigns a bounded reward to each state-action pair. The state transition function $P: \mathcal{S} \times \mathcal{A} \to \Delta^{|\mathcal{S}|}$ determines the probability distribution over the next state given the current state and action, where $\Delta^{|\mathcal{S}|}$ denotes the probability simplex over $\mathcal{S}$. The initial state distribution is given by $\rho: \mathcal{S} \to [0,1]$. A policy $\pi: \mathcal{S} \to \Delta^{|\mathcal{A}|}$ specifies a probability distribution over actions for each state. Given a policy $\pi$, the long-term average reward is defined as  
\begin{equation}
    J_{\rho}^{\pi} \triangleq \lim\limits_{T\rightarrow \infty}\frac{1}{T}\E\bigg[\sum_{t=0}^{T-1}r(s_t,a_t) \bigg| s_0 \sim \rho \bigg],
\end{equation}  
where the expectation is taken over trajectories generated by executing actions according to $a_t \sim \pi(\cdot | s_t)$ and transitioning states via $s_{t+1} \sim P(\cdot | s_t, a_t)$, for all $t \geq 0$. For simplicity, we drop the dependence on $\rho$ whenever it is clear from the context.  We focus on a parametrized policy class $\Pi$, where each policy is indexed by a parameter $\theta \in \Pi_\Theta$, with $\Pi_\Theta \subset \mathbb{R}^{d}$. Our goal is to solve the optimization problem:  $    \max_{\theta\in\Pi_\Theta} ~ J^{\pi_{\theta}} \triangleq J(\theta).$

A policy $\pi_{\theta} \in \Pi$ induces a transition function $P^{\pi_{\theta}}: \mathcal{S} \to \Delta^{|\mathcal{S}|}$, given as  $
    P^{\pi_{\theta}}(s, s') = \sum_{a\in\mathcal{A}} P(s' | s, a) \pi_{\theta}(a | s), \quad \forall s, s' \in \mathcal{S}$. The corresponding stationary distribution is defined as:

\begin{definition}  
    Let $d^{\pi_{\theta}} \in \Delta^{|\mathcal{S}|}$ denote the stationary distribution of the Markov chain induced by $\pi_{\theta}$, given by  
\begin{equation}
    d^{\pi_{\theta}}(s) = \lim_{T\rightarrow \infty} \frac{1}{T} \sum_{t=0}^{T-1} \mathrm{Pr}(s_t = s \mid s_0 \sim \rho, \pi_{\theta}).
\end{equation}  
\end{definition}  

We assume the following property for the MDP, which is defined as a unichain MDP:

\begin{assumption}  
    \label{assump_mdp}  
    The MDP $\mathcal{M}$ is such that, for every policy $\pi \in \Pi$, the induced Markov chain has a single recurrent class. 
\end{assumption}  


\begin{remark} \label{rem:unichain} An MDP $\mathcal{M}$ satisfying this property is referred to as a unichain MDP \citep{puterman2014markov}. This assumption does not require irreducibility, as transient states may be present, nor does it impose aperiodicity. Consequently, it is strictly weaker than the standard ergodicity assumptions.

Under this condition, the stationary distribution $d^{\pi_{\theta}}$ is well-defined, independent of the initial distribution $\rho$, and satisfies the balance equation:
$ (P^{\pi_{\theta}})^{\top} d^{\pi_{\theta}} = d^{\pi_{\theta}}. $

This assumption forms the basis of the foundational policy gradient theorem for average-reward MDPs \citep{sutton1999policy}. Although various alternative definitions of unichain MDPs appear in the literature, the notion employed here is classical and, in fact, weaker than several recent formulations. For example, \citep{agrawal2025optimistic} assume the existence of a state $s_0$ that is recurrent under all policies, a stronger condition than ours. Similarly, \citep{li2024stochastic} adopt a mixing unichain assumption, which additionally requires aperiodicity. In contrast, our analysis does not depend on either of these stronger conditions.
\end{remark}



Since we cannot rely on the mixing time commonly used in the analysis of ergodic MDPs, we consider two alternative quantities, $C_{\mathrm{hit}}^{\theta}$ and $C_{\mathrm{tar}}^{\theta}$, which are more appropriate in the unichain setting. These are defined with respect to the Markov chain induced by a stationary policy $\pi$ as follows:

Let $\mathcal{S}^{\theta}_R \subseteq \mathcal{S}$ denote the recurrent class under policy $\pi_{\theta}$, and let $T_{\theta} \coloneqq \inf\{t \geq 0 : s_t \in \mathcal{S}^{\theta}_R\}$ denote the first hitting time of the recurrent class. Then we define
\begin{align}
\label{eq:C-hit-def}
    C_{\mathrm{hit}}^{\theta} \coloneqq \max_{s \in \mathcal{S}} \mathbb{E}_s^{\theta}[T_{\theta}],
\end{align} 
as the worst-case expected time to enter the recurrent class when starting from any state $s \in \mathcal{S}$. Note that if there are no transient states, then $C_{\mathrm{hit}}^{\theta} =0$.

Similarly, for any $s, s' \in \mathcal{S}$, let $T_{s'} \coloneqq \inf\{t \geq 0 : s_t = s'\}$ be the first hitting time to state $s'$. Then we define
  \begin{align}
  \label{eq:C-tar-def}
       C_{\mathrm{tar}}^{\theta} \coloneqq \sum_{s' \in \mathcal{S}} d^{\pi}(s') \mathbb{E}_s^{\theta}[T_{s'}],
  \end{align}
  as the expected time to reach a state drawn from $d^{\pi_{\theta}}$, starting from a fixed state $s\in \mathcal{S}^{\theta}_R$. Notably, by the Random Hitting Target lemma (Corollary 2.14, \citep{aldous2002reversible}), this quantity is independent of the choice of $s$. We define $C_{\mathrm{hit}} \coloneqq \sup_{\theta} C_{\mathrm{hit}}^{\theta}$ and $C_{\mathrm{tar}} \coloneqq \sup_{\theta} C_{\mathrm{tar}}^{\theta}$ and $C\coloneqq C_{\mathrm{hit}}+C_{\mathrm{tar}}$.

We write the average reward as:
\begin{equation}
    \label{eq_r_pi_theta}
        J(\theta) = \E_{s\sim d^{\pi_{\theta}}, a\sim \pi_{\theta}(\cdot|s)}[r(s, a)] = (d^{\pi_{\theta}})^\top r^{\pi_{\theta}}  \text{, where}~r^{\pi_{\theta}}(s) \triangleq \sum_{a\in\mathcal{A}}r(s, a)\pi_{\theta}(a|s), ~\forall s\in \mathcal{S}
\end{equation}
The average reward $J(\theta)$ is also independent of the initial distribution, $\rho$. Furthermore, $\forall \theta\in\Pi_\Theta$, there exist a function $Q^{\pi_{\theta}}: \mathcal{S}\times \mathcal{A}\rightarrow \mathbb{R}$ such that the following Bellman equation is satisfied $\forall (s, a)\in\mathcal{S}\times\mathcal{A}$. 
\begin{equation}
    \label{eq_bellman}
    Q^{\pi_{\theta}}(s,a)=r(s,a)-J(\theta)+\E_{s'\sim P(\cdot|s, a)}\left[V^{\pi_{\theta}}(s')\right]
\end{equation}
where the state value function, $V^{\pi_{\theta}}:\mathcal{S}\rightarrow \mathbb{R}$ is defined as    $V^{\pi_{\theta}}(s) = \sum_{a\in\mathcal{A}}\pi_{\theta} (a|s)Q^{\pi_{\theta}}(s, a), ~\forall s\in\mathcal{S}$ \citep{puterman2014markov}.  Note that if $(\ref{eq_bellman})$ is satisfied by $Q^{\pi_{\theta}}$, then it is also satisfied by $Q^{\pi_{\theta}}+c$ for any arbitrary constant, $c$.

Additionally, we define the advantage function as $
    A^{\pi_{\theta}}(s, a) \triangleq Q^{\pi_{\theta}}(s, a) - V^{\pi_{\theta}}(s)$. If Assumption~\ref{assump_mdp} holds, it is known that the policy gradient at $\theta$, $\nabla_{\theta}J(\theta)$ can be expressed as follows \citep{sutton1999policy}:
     \begin{align}
       \nabla_\theta J(\theta) = \E_{(s,a) \sim \nu^{\pi_{\theta}}} [A^{\pi_{\theta}}(s,a)\nabla_\theta \log \pi_{\theta}(a|s)],
   \end{align}
where 
$\nu^{\pi_{\theta}}(s,a) \coloneqq d^{\pi_{\theta}}(s)\pi_{\theta}(a|s)$.
This is commonly referred to as the Policy Gradient theorem and while the proof can be found in \citep{sutton1999policy}, we include it in Appendix~\ref{sec:PG:proof} for completeness.

Natural Policy Gradient (NPG) methods update $\theta$ along the NPG direction $\omega^*_\theta$ defined as
    \begin{align}
    \label{eq_exact_npg}
        \omega^*_{\theta} = F(\theta)^{\dagger} \nabla_{\theta} J(\theta),
    \end{align}
    where $\dagger$ denotes the Moore-Penrose pseudoinverse and $F(\theta)$ is the Fisher matrix defined as 
    \begin{align}
        F(\theta) = \E \left[\nabla_{\theta}\log\pi_{\theta}(a|s) \nabla_{\theta}\log\pi_{\theta}(a|s)^\top \right]
    \end{align}
where the expectation is taken over ${(s, a)\sim \nu^{\pi_{\theta}}}$. This yields the NPG update
$$ \theta_{k+1} = \theta_k + \alpha \omega_k^*.$$

\begin{wrapfigure}{r}{0.51\textwidth}
  \vspace{-23pt}
  \begin{center}
  \fbox{%
    \begin{minipage}{0.52\textwidth}
    \vspace{-.21in}
 \begin{algorithm}[H]
    \caption{Natural Actor-Critic with Batching}
    \label{alg:acb}
    \begin{algorithmic}[1]
        \State \textbf{Input:} Initial parameters $\theta_0$, $\{\omega_0^k\}$, and $\{\xi_0^k\}$, policy stepsize $\alpha$, critic parameters ($\beta$, $c_\beta$), NPG stepsize $\gamma$, initial state $s_0 \sim \rho$, outer loop size $K$, inner loop size $H$, batch-size $B$

        \For{$k = 0, 1, \cdots, K-1$}
            \LineComment{Average reward and critic estimation}
            \For{$h = 0, 1, \cdots, H-1$}
                \State $s_k^0 \gets s_0$
                \For{$b = 1, 2, \cdots, B$}
                    \State Take action $a_b^{kh} \sim \pi_{\theta_k}(\cdot | s_b^{kh})$
                    \State Collect next state $s_{b+1}^{kh} \sim P(\cdot | s_b^{kh}, a_b^{kh})$
                    \State Receive reward $r(s_b^{kh}, a_b^{kh})$
                \EndFor
                \State Update the combined average reward and critic estimate $\xi_h^k = [\eta_h^k,\zeta_h^k]^\top$ using \eqref{eq:critic_reward_update}-\eqref{eq_def_Az_matrix}
                \State $s_0 \gets s_B^{kh}$
            \EndFor
            \State $\xi_k \gets \xi_H^k$
             \LineComment{Natural Policy Gradient estimation}
            \For{$h = 0, 1, \cdots, H-1$}         
                \State $s_k^0 \gets s_0$
                \For{$b = 1, 2, \cdots, B$}
                    \State Take action $a_b^{kh} \sim \pi_{\theta_k}(\cdot | s_b^{kh})$
                    \State Collect next state $s_{b+1}^{kh} \sim P(\cdot | s_b^{kh}, a_b^{kh})$
                    \State Receive reward $r(s_b^{kh}, a_b^{kh})$
                \EndFor
                \State Update NPG estimate $\omega_h^k$ using \eqref{eq:adv_est}-\eqref{eq:ahatu}
                \State $s_0 \gets s_B^{kh}$
            \EndFor
            \LineComment{Policy update}
            \State $s_k^0 \gets s_0$
            \State Update policy parameter $\theta_k$ using $\theta_{k+1} \gets \theta_k + \alpha \omega_k$
            \State $s_0 \gets s_B^k$
        \EndFor
    \end{algorithmic}
\end{algorithm}
  \vspace{-.21in}
 \end{minipage}
   }
  \end{center}
  \vspace{-80pt}
\end{wrapfigure}

Let $J^* \triangleq \sup_{\boldsymbol{\theta}\in\Pi_\Theta} J(\theta)$. For a given MDP $\mathcal{M}$ and a time horizon $T$, the regret of an algorithm $\mathbb{A}$ is defined as follows.
\begin{align}
    \mathrm{Reg}_T(\mathbb{A}, \mathcal{M}) \triangleq \sum_{t=0}^{T-1} \left(J^*-r(s_t, a_t)\right)
\end{align}
where the action, $a_t$, $t\in\{0, 1, \cdots \}$ is chosen by following the algorithm $\mathbb{A}$ based on the trajectory up to time, $t$, and the state, $s_{t+1}$ is obtained by following the state transition function, $P$. Wherever there is no confusion, we shall simplify the notation of regret to $\mathrm{Reg}_{T}$. 

\section{Proposed Algorithm}



We propose a \emph{Natural Actor–Critic with Batching} algorithm (Algorithm~\ref{alg:acb}), which runs for $K$ outer iterations (or \emph{epochs}) of natural policy gradient updates. Each outer iteration includes $H$ inner iterations to estimate the average reward and value function, followed by another $H$ iterations to estimate the NPG direction. We first present the necessary preliminaries, followed by the detailed description of the algorithm.

\textbf{Preliminaries}: For any fixed $\theta_k$, it is known that $\omega_k^*$ is the solution to the optimization problem

$$
\omega_k^* = \arg \min_{\omega} L_{\nu^{\pi_{\theta_k}}}(\omega, \theta_k),
$$

where $L_{\nu^{\pi_\theta}}(\omega, \theta)$ is defined as follows
\begin{align}
\label{eq_def_L_nu}
    &L_{\nu^{\pi_\theta}}(\omega, \theta) \nonumber\\
    &= \dfrac{1}{2}\E\left[\big(A^{\pi_{\theta}}(s,a)-\omega^\top\nabla_{\theta}\log\pi_{\theta}(a| s)\big)^2\right],
\end{align}

where the expectation is taken over ${(s, a)\sim \nu^{\pi_{\theta}}}$. It can be seen that $L_{\nu^{\pi_\theta}}(\omega, \theta)$ is a quadratic function in $\omega$ with gradient given by $F(\theta) \omega - \nabla J(\theta)$. Since neither $F(\theta_k)$ nor $\nabla J(\theta_k)$ is available in closed form, we estimate them from samples and perform stochastic gradient descent on $L_{\nu^{\pi_{\theta_k}}}(\omega, \theta_k)$ to obtain an approximate NPG direction $\omega_k$.

We estimate $F(\theta_k)$ using the outer product $[\nabla \log \pi_{\theta_k}(a | s)  \nabla \log \pi_{\theta_k}(a |s)^\top]$ computed from samples $(s, a) \sim \nu^{\pi_{\theta_k}}$. However, estimating $\nabla J(\theta_k)$ is more challenging, as it requires access to the advantage function, which is not directly observable. We note that $A^{\pi_{\theta_k}}(s, a)$  can be expressed using $J(\theta_k)$ and $V^{\pi_{\theta_k}}$, as follows from the definition of  $A^{\pi_{\theta_k}}(s, a)$ and Bellman’s equation. Consequently, we focus next on estimating these quantities.

Notice that $\eta_k = J(\theta_k)$ is the solution to the following equation:
\begin{align} \label{eq_grad_R_eta} R_{{\theta_k}}(\eta_k) = 0, \quad \text{where} \quad R_{{\theta_k}}(\eta) = \E_{(s,a)\sim \nu^{\pi_{\theta_k}}}[\eta - r(s, a)]. \end{align}

 Given access to $R_{\theta_k}(\eta)$, the solution to the above can be computed via the following iterative update: \begin{align} \label{eq:eta-update} \eta_{h+1}^k = \eta_h^k - c_{\beta}\beta R_{\theta_k}(\eta_h^k), \end{align} where $\beta$ and $c_{\beta}$ are step-size parameters. 


To approximate the value function $V^{\pi_{\theta_k}}(\cdot)$, we employ a linear critic $\hat{V}(\zeta_{\theta_k}, \cdot) = \zeta_{\theta_k}^\top \phi(\cdot)$, where $\zeta_{\theta_k} \in \mathbb{R}^m$ denotes the critic parameter and $\phi(s) \in \mathbb{R}^m$ is a feature mapping satisfying $|\phi(s)| \leq 1$ for all $s \in \mathcal{S}$. We also assume that the feature vectors ${\phi(s)}$ are linearly independent.
The parameter $\zeta_{\theta_k}\in\mathbb{R}^m$ is obtained by solving the following optimization problem:
\begin{align}
\label{value_function_problem} \min_{\zeta \in\mathbb{R}^m} E(\theta_k, \zeta) \coloneqq \frac{1}{2}\E_{s \sim d^{\pi_{\theta_k}}} \left( V^{\pi_{\theta_k}}(s) - \hat{V}(\zeta, s) \right)^2. 
\end{align} 

This formulation enables the use of a gradient-based iterative procedure to compute $\zeta_{\theta_k}$. Specifically, the gradient of $E(\theta_k, \zeta)$ is given by:
\begin{align} 
\label{eq_nu_grad} 
\begin{split}
\nabla_{\zeta} E(\theta_k, \zeta) &= \E_{s \sim d^{\pi_{\theta_k}}} \left[\left( V^{\pi_{\theta_k}}(s) - \hat{V}(\zeta, s) \right) \nabla_{\zeta} \hat{V}(\zeta, s)\right] \\ 
&= \E_{s \sim d^{\pi_{\theta_k}}} \left[ \left( V^{\pi_{\theta_k}}(s) - \zeta^{\top}\phi(s) \right) \phi(s)\right] 
\end{split}
\end{align}

The above quantity can be approximated as follows
\begin{align}
   \nabla_{\zeta} E(\theta_k, \zeta) &= \E_{s \sim d^{\pi_{\theta_k}}} \left[ \left( V^{\pi_{\theta_k}}(s) - \zeta^{\top}\phi(s) \right) \phi(s)\right]  \\ 
&= \E_{(s,a) \sim \nu^{\pi_{\theta_k}},s'\sim P(\cdot|s,a)} \left[   \left( r(s,a)-J(\theta_k)+V^{\pi_{\theta_k}}(s') - \zeta^{\top}\phi(s) \right) \phi(s)\right]\\
&\approx \E_{(s,a) \sim \nu^{\pi_{\theta_k}},s'\sim P(\cdot|s,a)} \left[   \left( r(s,a)-J(\theta_k)+\zeta^{\top}\phi(s') - \zeta^{\top}\phi(s) \right) \phi(s)\right]\\
&\coloneqq  \widetilde{\nabla}_{\zeta}E(\theta_k, \zeta)
\end{align}
Using this gradient, $\zeta_{\theta_k}$ can be computed iteratively via: \begin{align} \label{eq:exact-zeta} \zeta_{h+1}^k = \zeta_h^k - \beta \widetilde{\nabla}_{\zeta} E(\theta_k, \zeta_h^k). \end{align}

The updates in \eqref{eq:eta-update}, and \eqref{eq:exact-zeta} can be combined into one update for $\xi_h^k = [(\eta^k_h)^\top, (\zeta_h^k)^\top]^{\top}$ as follows
\begin{align}
\label{eq:exact_xi}
    \xi_{h+1}^k = \xi_h^k - \beta [c_{\beta}R_{\theta}(\eta_h^k)^\top,\widetilde{\nabla}_{\zeta}E(\theta_k,\zeta_h^k)^\top]^\top
\end{align}

The joint update rule for $\eta^k_h$ and $\zeta^k_h$ in \eqref{eq:exact_xi} forms the basis of the critic step in Algorithm~\ref{alg:acb}, where sample averages from the trajectory are used in place of the intractable expectations.


Algorithm \ref{alg:acb} operates using a single trajectory of total length $T = 2KHB$. For each outer iteration $k$, the algorithm collects a contiguous segment of $2BH$ samples by following the current policy $\pi_{\theta_k}$, starting from the final state of the previous iteration. The first $BH$ samples from  $\pi_{\theta_k}$ are used to perform $H$ iterations of the critic update subroutine, and the next $BH$ samples are used for $H$ iterations of the NPG update.

\textbf{Critic subroutine}: For each $h \in {0, \dots, H-1}$, we use a sample averaged version of \eqref{eq:exact_xi} to obtain the update \begin{align} \label{eq:critic_reward_update} \xi_{h+1}^k = \xi_h^k - \beta \left[\frac{1}{B} \sum_{b=1}^{B} v_k(z_b^{kh}; \xi_h^k)\right], \end{align} where $\beta$ is the critic learning rate, $B$ is the batch size, $z_b^{kh} = (s_b^{kh}, a_b^{kh}, s_{b+1}^{kh})$ is a sampled transition, and $v(z_b^{kh}; \xi_h^k)$ is given by: \begin{align} \label{eq:vt_exp} v_k(z_b^{kh}; \xi_h^k) = A_{v}(\theta_k, z_b^{kh}) \xi_h^k - b_{v}(\theta_k, z_b^{kh}), \end{align} where the matrices are defined as 
\begin{equation} \label{eq_def_Az_matrix} A_{v}(\theta_k, z_b^{kh}) = \begin{bmatrix} c_\beta & 0 \\ \phi(s_b^{kh}) & \phi(s_b^{kh}) \left[\phi(s_b^{kh}) - \phi(s_{b+1}^{kh})\right]^\top \end{bmatrix}, \text{  } b_{v}(\theta_k, z_b^{kh}) = \begin{bmatrix} c_\beta r(s_b^{kh}, a_b^{kh}) \\ r(s_b^{kh}, a_b^{kh}) \phi(s_b^{kh}) \end{bmatrix} \end{equation}

The resulting update follows standard constructions in Temporal Difference learning (e.g., \cite{NEURIPS2021_096ffc29}). After $H$ critic updates, we take $\eta_k\coloneqq\eta_H^k$ and $\zeta_k\coloneqq\zeta_H^k$ as the final estimates of the average reward and critic parameters, giving $\hat{V}(\zeta_k,\cdot)=\zeta_k^\top\phi(\cdot)$ as the value estimate.

\textbf{Natural Policy Gradient estimation}: The \emph{advantage estimate} at sample $z_b^{kh}$ is computed from the Bellman equation using $\eta_k$ and $\hat{V}(\zeta_k,\cdot)$ as follows
 \begin{equation} \label{eq:adv_est} \hat{A}^{\pi_{\theta_k}}(z_b^{kh}; \xi_k) = r(s_b^{kh}, a_b^{kh}) - \eta_k + \zeta_k^\top \left[\phi(s_{b+1}^{kh}) - \phi(s_b^{kh})\right]. \end{equation}

Based on the above estimate, we construct an approximate policy gradient: 
\begin{align} \label{eq:g-est}b_{u}(\theta_k, \xi_k, z_b^{kh}) = \hat{A}^{\pi_{\theta_k}}(z_b^{kh}; \xi_k) \nabla_\theta \log \pi_{\theta_k}(a_b^{kh} | s_b^{kh}), \end{align}
and refine the natural gradient estimate $\omega_k$ via $H$ iterations of the update:
\begin{align} \label{eq:NPG_update} \omega_{h+1}^k = \omega_h^k + \gamma \left[\frac{1}{B} \sum_{b=1}^B u_k(z_b^{kh}; \xi_k)\right], 
\end{align} 
where $\gamma$ is the NPG learning rate. The update direction $u_k(z_b^{kh}; \xi_k)$ is given by \begin{align} \label{eq:ut_exp} u_k(z_b^{kh}; \xi_k) = A_{u}(\theta_k, z_b^{kh}) \omega_h^k - b_{u}(\theta_k, \xi_k, z_b^{kh}), \end{align} with \begin{align} A_{u}(\theta_k, z_b^{kh}) &= \nabla_\theta \log \pi_{\theta_k}(a_b^{kh} | s_b^{kh})  \nabla_\theta \log \pi_{\theta_k}(a_b^{kh}| s_b^{kh})^\top. \label{eq:ahatu}\end{align}

Set $\omega_k\coloneqq\omega_H^k$. The policy is updated as: $ \label{eq:policy_update} \theta_{k+1} = \theta_k + \alpha \omega_k,$ where $\alpha$ is the policy learning rate.



\section{Regret Guarantee for Unichain MDP}

\subsection{Assumptions}
\label{sec:assump}

Let $A_{v}(\theta) \coloneqq \E_{\theta}\left[ A_{v}(\theta, z)\right]$, and $b_{v}(\theta) \coloneqq \E_{\theta} \left[b_{v}(\theta, z)\right]$ where $A_{v}(\theta, z)$, $b_{v}(\theta, z)$ are defined in \eqref{eq_def_Az_matrix} and \eqref{eq:g-est} and the expectation $\E_\theta$ is computed over the distribution of $z=(s, a, s')$ where $(s, a)\sim \nu^{\pi_\theta}$, $s'\sim P(\cdot|s, a)$. For arbitrary policy parameter $\theta$, we denote $\xi_{\theta}^*=[A_{v}(\theta)]^{\dagger}b_{v}(\theta)=[\eta_{\theta}^*, \zeta_{\theta}^*]^{\top}$. Using these notations, below we state some assumptions related to the critic analysis.
\begin{definition}
\label{def:critic-error}
    We define the critic approximation error, $\epsilon_{\mathrm{app}}$ as follows.
\begin{align}
    \epsilon_{\mathrm{app}} = \sup_{\theta} \inf_{\zeta} \left\{\frac{1}{2}\E_{s \sim d^{\pi_{\theta}}} \left( V^{\pi_{\theta}}(s) - \hat{V}(\zeta, s) \right)^2\right\}. 
\end{align}

\end{definition}

The definition~\ref{def:critic-error} is widely adopted in the literature \citep{suttle2023beyond,ganesh2024order,xu2019sample,patel2024global,chen2023finitetime}, although there may be minor variations in notation, such as the inclusion of a square root. This definition is closely linked to the representational capacity of the chosen feature map, with $\epsilon_{\mathrm{app}}$ characterizing the resulting approximation error. A well-constructed feature map can yield a small or even vanishing $\epsilon_{\mathrm{app}}$.

\begin{assumption} \label{assum:critic_positive_definite}
Let $M_{\theta} \coloneqq \mathbb{E}_{\theta} \left[ \phi(s) \left( \phi(s) - \phi(s') \right)^\top \right]$,  
where the expectation $\mathbb{E}_{\theta}$ is taken over $s \sim d^{\pi_{\theta}}$ and $s' \sim P^{\pi_{\theta}}(s, \cdot)$. Then, for all $\theta$, there exists a constant $\lambda > 0$ such that for all $x \in \ker(M_{\theta})^{\perp}$ (i.e., the subspace orthogonal to the kernel of $M_{\theta}$), the following inequality holds:
\begin{align}
    x^\top M_{\theta} x \geq \lambda \| x \|^2.
\end{align}
\end{assumption}
 

\textbf{Comments on Assumption \ref{assum:critic_positive_definite}:}  We emphasize that this condition is substantially weaker than the commonly imposed requirement of strict positive-definiteness of $M_{\theta}$, which appears in nearly all Actor-Critic works \citep{ganesh2024order,patel2024global,wang2024nonasymptotic,panda2024two,suttle2023beyond}. In Section~\ref{subsec:main-result}, we explain why the strict positive definiteness of the critic matrix cannot be guaranteed in the unichain case, as opposed to the ergodic case, and discuss the resulting analytical challenges.


We will now state some assumptions related to the policy parameterization. Before proceeding, we define the policy approximation error.
\begin{definition}
\label{assump:function_approx_error}
Define $\epsilon_{\mathrm{bias}}$ as the least upper bound on the \textit{transferred compatible function approximation error}, $L_{\nu^{\pi^*}}(\omega_{\theta}^*; \theta)$, i.e.,
\begin{align*}
\epsilon_{\mathrm{bias}} \coloneqq \sup_{\theta} L_{\nu^{\pi^*}}(\omega_{\theta}^*; \theta) 
\end{align*} 
where $\omega_{\theta}^*$ denotes the exact Natural Policy Gradient direction at $\theta$ \eqref{eq_exact_npg}, $\pi^*$ indicates the optimal policy, and $L_\nu$ is defined in \eqref{eq_def_L_nu}.
\end{definition}
The term $\epsilon_{\mathrm{bias}}$ is a standard quantity in the literature on parameterized PG methods \citep{agarwal2020optimality,fatkhullin2023stochastic,mondal2024improved,Masiha_SCRN_KL,suttle2023beyond,wang2024nonasymptotic,ganesh2024order,ganesh2024variance,xu2019sample}, and it reflects the expressivity of the chosen policy class. For example, when the policy class is expressive enough to represent any stochastic policy, such as with softmax parameterization, we have $\epsilon_{\mathrm{bias}} = 0$ \citep{agarwal2021theory}. In contrast, under more restrictive parameterizations that do not cover all stochastic policies, we may have $\epsilon_{\mathrm{bias}} > 0$. Nevertheless, this bias is often negligible when using rich neural network parameterizations \citep{wang2019neural}.

\begin{assumption}
    \label{assump:score_func_bounds}
    For all $\theta, \theta_1,\theta_2$ and $(s,a)\in\mathcal{S}\times\mathcal{A}$, the following statements hold.
    \begin{align}
   (a) \text{  }\Vert\nabla_{\theta}\log\pi_\theta(a\vert s)\Vert\leq G_1 \quad (b) \text{  } \Vert \nabla_{\theta}\log\pi_{\theta_1}(a\vert s)-\nabla_\theta\log\pi_{\theta_2}(a\vert s)\Vert\leq G_2\Vert \theta_1-\theta_2\Vert \nonumber
    \end{align}
\end{assumption}

\begin{assumption}[Fisher non-degenerate policy]
    \label{assump:FND_policy}
    There exists a constant $\mu>0$ such that $F(\theta)-\mu I_{d}$ is positive semidefinite where $I_{d}$ denotes an identity matrix.
\end{assumption}

{\bf Comments on Assumptions \ref{assump:function_approx_error}-\ref{assump:FND_policy}:}  We emphasize that these are standard in the PG literature \citep{liu2020improved,fatkhullin2023stochastic,mondal2024improved,Masiha_SCRN_KL,suttle2023beyond,wang2024nonasymptotic,ganesh2024order,ganesh2024variance}. Assumption~\ref{assump:score_func_bounds} stipulates that the score function is both bounded and Lipschitz-continuous, an assumption frequently used in error decomposition and in bounding the policy gradient. Assumption~\ref{assump:FND_policy} requires that the eigenvalues of the Fisher information matrix are uniformly bounded away from zero, a condition commonly employed to establish global complexity guarantees for PG methods. Notably, Assumptions~\ref{assump:score_func_bounds}-\ref{assump:FND_policy} have recently been verified for a range of policy classes, including Gaussian and Cauchy distributions having parameterized means with clipping \citep{liu2020improved,fatkhullin2023stochastic}.

\subsection{Main Result}
\label{subsec:main-result}
The following theorem gives the regret bound for the proposed algorithm. 

\begin{theorem}[Main Result]
\label{thm:main}
Consider Algorithm~\ref{alg:acb} and suppose Assumptions~\ref{assump_mdp}–\ref{assump:FND_policy} hold. Let $J$ be $L$ smooth and set $K=\Theta(\sqrt{T}/(\log T))$, $B=\Theta(\sqrt{T})$ and $H=\Theta(\log T)$. Then, for a suitable choice of learning parameters, the expected regret satisfies
\begin{align}
    \E[\mathrm{Reg}_T] \leq \Tilde{\mathcal{O}} \left( T(\sqrt{\epsilon_{\mathrm{bias}}} + \sqrt{\epsilon_{\mathrm{app}}}) + \sqrt{T}(C^2C_{\mathrm{tar}}+C_{\mathrm{tar}}) \right),
\end{align}
where $C\coloneqq C_{\mathrm{tar}}+C_{\mathrm{hit}}$.
\end{theorem}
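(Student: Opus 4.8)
The plan is to separate the regret into a policy-optimization error and a trajectory-concentration error. Writing $k(t)$ for the epoch index containing step $t$, and recalling that each epoch contributes $2HB$ steps so that $T=2KHB$, I would begin from the decomposition
\[
\mathrm{Reg}_T \;=\; \underbrace{\sum_{t=0}^{T-1}\big(J^*-J(\theta_{k(t)})\big)}_{(\mathrm{I})}\;+\;\underbrace{\sum_{t=0}^{T-1}\big(J(\theta_{k(t)})-r(s_t,a_t)\big)}_{(\mathrm{II})},
\]
where $(\mathrm{I})=2HB\sum_{k=0}^{K-1}\big(J^*-J(\theta_k)\big)$ accumulates the suboptimality of the iterates and $(\mathrm{II})$ measures how far realized rewards drift from their stationary value along the single trajectory.

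For term $(\mathrm{I})$ I would run a natural-policy-gradient descent argument in the average-reward setting. Starting from the performance-difference identity $J^*-J(\theta_k)=\E_{(s,a)\sim\nu^{\pi^*}}[A^{\pi_{\theta_k}}(s,a)]$, I replace the advantage by its compatible linear surrogate $(\omega^*_{\theta_k})^\top\nabla_\theta\log\pi_{\theta_k}$ at the price of $\sqrt{\epsilon_{\mathrm{bias}}}$ (Definition~\ref{assump:function_approx_error}) and absorb the critic mismatch into $\sqrt{\epsilon_{\mathrm{app}}}$ (Definition~\ref{def:critic-error}). Controlling the KL drift of the update $\theta_{k+1}=\theta_k+\alpha\omega_k$ through the smoothness of $\log\pi_\theta$ (Assumption~\ref{assump:score_func_bounds}) gives a per-epoch inequality of the form
\[
J^*-J(\theta_k)\;\le\;\sqrt{\epsilon_{\mathrm{bias}}}+\mathcal{O}\!\big(\sqrt{\epsilon_{\mathrm{app}}}\big)+\tfrac1\alpha\big(\mathrm{KL}_k-\mathrm{KL}_{k+1}\big)+G_1\big\|\E[\omega_k]-\omega^*_{\theta_k}\big\|+\mathcal{O}(\alpha),
\]
with $\mathrm{KL}_k\coloneqq\mathrm{KL}(\pi^*\,\|\,\pi_{\theta_k})$. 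The role of Assumption~\ref{assump:FND_policy} is to upgrade the resulting online-mirror-descent bound from $\mathcal{O}(\sqrt K)$ to a fast rate, so that after telescoping the transient (non-floor) contribution sums to $\tilde{\mathcal{O}}(1)$ rather than $\mathcal{O}(\sqrt K)$; this is what ultimately keeps $(\mathrm{I})$ at $\tilde{\mathcal{O}}(\sqrt T)$ rather than $\tilde{\mathcal{O}}(T^{3/4})$. The decisive statistical quantity is then the estimator bias $\|\E[\omega_k]-\omega^*_{\theta_k}\|$: by Lemma~\ref{lem:bias-variance-unichain}, after $H=\Theta(\log T)$ inner iterations the initialization error contracts to $\mathrm{poly}(1/T)$ while the residual bias is $\mathcal{O}(C^2C_{\mathrm{tar}}/B)$ (the $C^2$ from the span of $V^{\pi_{\theta_k}}$ and $Q^{\pi_{\theta_k}}$, the $C_{\mathrm{tar}}$ from the $\mathcal{O}(1/t)$ averaging rate), and the variance enters only the lower-order $\mathcal{O}(\alpha)$ term. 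Multiplying by $2HB$ and summing, the floor terms give $2HBK(\sqrt{\epsilon_{\mathrm{bias}}}+\sqrt{\epsilon_{\mathrm{app}}})=T(\sqrt{\epsilon_{\mathrm{bias}}}+\sqrt{\epsilon_{\mathrm{app}}})$, while the bias gives $2HBK\cdot\mathcal{O}(C^2C_{\mathrm{tar}}/B)=\tilde{\mathcal{O}}(\sqrt T\,C^2C_{\mathrm{tar}})$ since $HK=\tilde{\Theta}(\sqrt T)$.

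For term $(\mathrm{II})$ I would invoke the Poisson (Bellman) equation~\eqref{eq_bellman}: since the policy is fixed within an epoch, $\sum_{t}\big(J(\theta_k)-r(s_t,a_t)\big)$ over epoch $k$ telescopes against $V^{\pi_{\theta_k}}$ up to a mean-zero martingale, so its expectation is at most $\mathrm{span}(V^{\pi_{\theta_k}})$. In the unichain setting this span is controlled by the hitting-time constants: combining Lemma~\ref{lem:ergodic-avg} with Lemma~\ref{lem:prob-exponential} and the strong Markov property to discard the transient phase yields $\mathrm{span}(V^{\pi_{\theta_k}})=\mathcal{O}(C_{\mathrm{tar}})$. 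Summing over the $K=\tilde{\Theta}(\sqrt T)$ epochs produces the remaining $\tilde{\mathcal{O}}(\sqrt T\,C_{\mathrm{tar}})$ contribution, and adding the two terms gives the claimed bound.

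The hard part is Lemma~\ref{lem:bias-variance-unichain}, i.e.\ establishing the $\mathcal{O}(C^2C_{\mathrm{tar}}/B)$ bias and bounded variance of the batch-averaged critic and NPG updates without exponential mixing. Periodicity means $(P^{\pi_\theta})^t$ need not converge, so the only available handle is the time-averaged convergence of Lemma~\ref{lem:ergodic-avg}, which degrades the decay from geometric to $\mathcal{O}(1/B)$; transient states force an additional strong-Markov decomposition built on Lemma~\ref{lem:prob-exponential} to restrict attention to the recurrent class. A second, subtler difficulty is that Assumption~\ref{assum:critic_positive_definite} only guarantees positive-definiteness of $M_\theta$ on $\mathrm{Ker}(M_\theta)^\perp$, so the TD fixed point $\xi^*_\theta=[A_v(\theta)]^\dagger b_v(\theta)$ is pinned down only modulo the kernel; I would need to check that the kernel component never reaches the advantage estimate~\eqref{eq:adv_est} (which sees $\zeta$ only through $\phi(s')-\phi(s)$) and therefore leaves both the NPG direction and the final regret unaffected.
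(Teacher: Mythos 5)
Your overall route is the same as the paper's: the same split into $2HB\sum_k(J^*-J(\theta_k))$ plus a Poisson-equation/telescoping term bounded by the span of $V^{\pi_{\theta_k}}$ (which the paper bounds by $2C(K+1)$ via Lemma~\ref{lem:q-v-a-bound}), the same compatible-function-approximation/KL-telescoping treatment of the suboptimality sum (the paper's Lemma~\ref{lemma:local_global}), and the same reliance on the $\cO(1/B)$ \emph{bias} (not just $\cO(1/\sqrt{B})$ standard deviation) of the batched NPG estimate, obtained from Lemma~\ref{lem:ergodic-avg} together with the strong-Markov/hitting-time argument restricting to the recurrent class. Your closing observation that the kernel component of $\zeta$ never enters the advantage estimate because $\langle\phi(s')-\phi(s),\cdot\rangle$ annihilates $\mathrm{Ker}(M_\theta)$ on the recurrent class is exactly how the paper handles the projection $\Pi$ in Theorem~\ref{thm:NPG-final}.

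There is, however, one genuine gap: your treatment of the quadratic term as a harmless ``$+\mathcal{O}(\alpha)$.'' The per-epoch inequality carries a term of the form $\tfrac{\alpha G_2}{2}\E\|\omega_k\|^2$, and after multiplying by $2HB$ and summing over $k$ this contributes $\alpha G_2 HB\sum_{k}\E\|\omega_k\|^2$. To keep the telescoped KL contribution $\tfrac{HB}{\alpha}\mathrm{KL}(\pi^*\|\pi_{\theta_0})$ at $\tilde{\cO}(\sqrt{T})$ you must take $\alpha=\Theta(1)$ (the paper sets $\alpha=\mu^2/(4G_1^2L)$); but then the only a priori bound $\|\omega_k\|\lesssim\|\omega_k^*\|+\|\omega_k-\omega_k^*\|\lesssim CG_1/\mu$ makes this term $\Theta(\alpha HBK)=\Theta(T)$, which destroys the claimed rate. (Taking $\alpha\to 0$ instead inflates the KL term to $T^{3/4}$ or worse, so the tension cannot be resolved by tuning $\alpha$ alone.) The missing ingredient is precisely the $L$-smoothness of $J$, which appears as a hypothesis of the theorem but is never used in your argument: the paper applies the ascent inequality $J(\theta_{k+1})\geq J(\theta_k)+\tfrac{\alpha}{2G_1^2}\|\nabla_\theta J(\theta_k)\|^2-\bigl(\tfrac{\alpha G_1^2}{2}+\alpha^2L\bigr)\|\omega_k-\omega_k^*\|^2-\dots$ and telescopes it to show $\sum_{k}\|\nabla_\theta J(\theta_k)\|^2\leq \cO(L/\alpha)+\cO\bigl(\sum_k\|\omega_k-\omega_k^*\|^2\bigr)$, i.e.\ the sum of squared gradients is $\cO(1)$ in $K$ up to estimation error, rather than $\cO(K)$. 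Only after this step (combined with $\|\omega_k^*\|^2\leq\mu^{-2}\|\nabla_\theta J(\theta_k)\|^2$ from Assumption~\ref{assump:FND_policy}) does the quadratic term become $\tilde{\cO}(\sqrt{T})$. Your appeal to Assumption~\ref{assump:FND_policy} alone to ``upgrade to a fast rate'' does not supply this: Fisher non-degeneracy converts $\|\omega_k^*\|$ into $\|\nabla_\theta J(\theta_k)\|/\mu$ but gives no control on the sum of squared gradients without the smoothness telescoping.
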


The choice of learning rates along with the regret bound containing all problem-specific constants is provided in Appendix~\ref{app:final}. We now provide a brief overview of the proof. We first begin with a regret decomposition obtained using standard techniques:
\begin{lemma}
\label{lem:regret-decomp}
Consider the setting in Theorem \ref{thm:main}. Then, the expected regret of Algorithm~\ref{alg:acb} satisfies
\begin{align}\label{eq:general_bound}
			&\E [\mathrm{Reg}_T]\leq T\sqrt{\epsilon_{\mathrm{bias}}}+HBG_1\sum_{k=0}^{K-1}\E\Vert(\E_k\left[\omega_k\right]-\omega^*_k)\Vert +\alpha G_2HB\sum_{k=0}^{K-1}\E\Vert \omega_k-\omega_k^*\Vert^2\nonumber \\
            &+\frac{\alpha G_2HB}{\mu^2}\sum_{k=0}^{K-1}\E\Vert \nabla_{\theta} J(\theta_k)\Vert^2+\frac{HB}{\alpha }\E_{s\sim d^{\pi^*}}[\mathrm{KL}(\pi^*(\cdot\vert s)\Vert\pi_{\theta_0}(\cdot\vert s))]+2C(K+1).
		\end{align}
\end{lemma}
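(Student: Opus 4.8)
The plan is to decompose the regret over the $K$ epochs and, within each epoch, split the instantaneous gap $J^*-r(s_t,a_t)$ into a \emph{policy-suboptimality} part $J^*-J(\theta_k)$ and a \emph{transient} part $J(\theta_k)-r(s_t,a_t)$, using that the policy is frozen at $\pi_{\theta_k}$ throughout epoch $k$. So I would write $\E[\mathrm{Reg}_T]=\sum_k\E\big[\sum_{t\in\mathrm{epoch}\,k}(J^*-J(\theta_k))\big]+\sum_k\E\big[\sum_{t\in\mathrm{epoch}\,k}(J(\theta_k)-r(s_t,a_t))\big]$. For the transient part I would invoke the Bellman equation \eqref{eq_bellman}: taking the conditional expectation over $a_t\sim\pi_{\theta_k}$ and $s_{t+1}\sim P(\cdot\mid s_t,a_t)$ gives $\E[r(s_t,a_t)-J(\theta_k)\mid s_t]=V^{\pi_{\theta_k}}(s_t)-\E[V^{\pi_{\theta_k}}(s_{t+1})\mid s_t]$, so by the tower property the within-epoch sum telescopes in expectation to $\E[V^{\pi_{\theta_k}}(s^{\mathrm{end}})]-\E[V^{\pi_{\theta_k}}(s^{\mathrm{start}})]$, which is bounded by $\mathrm{span}(V^{\pi_{\theta_k}})$. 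The unichain value-function bounds (the consequences of Lemma~\ref{lem:ergodic-avg} and Lemma~\ref{lem:bias-variance-unichain}) control this span uniformly by $\mathcal{O}(C)$, and summing over epochs and boundary states yields the $2C(K+1)$ term.

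For the suboptimality part I would combine the average-reward performance-difference identity $J^*-J(\theta_k)=\E_{(s,a)\sim\nu^{\pi^*}}[A^{\pi_{\theta_k}}(s,a)]$ with a descent lemma for the potential $\Phi_k\coloneqq\E_{s\sim d^{\pi^*}}[\mathrm{KL}(\pi^*(\cdot\mid s)\,\|\,\pi_{\theta_k}(\cdot\mid s))]$. Using the score Lipschitzness (Assumption~\ref{assump:score_func_bounds}(b), constant $G_2$) to second-order expand $\log\frac{\pi_{\theta_{k+1}}(a\mid s)}{\pi_{\theta_k}(a\mid s)}$ along $\theta_{k+1}-\theta_k=\alpha\omega_k$, I obtain the one-step drift $\Phi_{k+1}-\Phi_k\le-\alpha\,\E_{\nu^{\pi^*}}[\nabla\log\pi_{\theta_k}^\top\omega_k]+\tfrac{G_2\alpha^2}{2}\|\omega_k\|^2$. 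The crucial algebraic step is to rewrite the inner-product term through the performance-difference identity, inserting $\omega_k^*$: $\E_{\nu^{\pi^*}}[\nabla\log\pi_{\theta_k}^\top\omega_k]=(J^*-J(\theta_k))-\E_{\nu^{\pi^*}}[A^{\pi_{\theta_k}}-\nabla\log\pi_{\theta_k}^\top\omega_k^*]-\E_{\nu^{\pi^*}}[\nabla\log\pi_{\theta_k}^\top(\omega_k^*-\omega_k)]$. By Cauchy--Schwarz the first error term is at most $\sqrt{2\,L_{\nu^{\pi^*}}(\omega_k^*,\theta_k)}\le\sqrt{2\epsilon_{\mathrm{bias}}}$, and the second is at most $G_1\|\omega_k^*-\omega_k\|$ by Assumption~\ref{assump:score_func_bounds}(a).

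I would then take total expectations and telescope over $k$. The bias term requires care with the filtration: conditioning on the history up to epoch $k$, the quantities $\nu^{\pi^*}$, $\nabla\log\pi_{\theta_k}$ and $\omega_k^*$ are all measurable, so the \emph{linear} term in $\omega_k$ collapses to the conditional mean, giving $G_1\,\E\|\E_k[\omega_k]-\omega_k^*\|$; this is precisely why the second term of the bound carries $\E_k[\omega_k]$ while the later \emph{quadratic} term keeps $\omega_k$. Dropping the nonnegative terminal KL, dividing by $\alpha$, and bounding $\|\omega_k\|^2\le2\|\omega_k-\omega_k^*\|^2+2\|\omega_k^*\|^2$ together with $\|\omega_k^*\|=\|F(\theta_k)^\dagger\nabla J(\theta_k)\|\le\|\nabla J(\theta_k)\|/\mu$ (Assumption~\ref{assump:FND_policy}) produces the $\|\omega_k-\omega_k^*\|^2$ and $\|\nabla J(\theta_k)\|^2/\mu^2$ terms. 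Multiplying the resulting bound on $\sum_k(J^*-J(\theta_k))$ by the per-epoch horizon (of order $HB$) and adding the transient term assembles the claimed inequality, with the $\frac{HB}{\alpha}\Phi_0$ contribution coming from the initial potential.

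\textbf{Main obstacle.} The step that departs from the ergodic playbook is the transient term. Without mixing, $(P^{\pi_{\theta_k}})^t$ need not converge, so I cannot bound $J(\theta_k)-r(s_t,a_t)$ by appealing to geometric mixing; the telescoping-plus-span argument must instead be coupled to the unichain span bound $\mathrm{span}(V^{\pi_{\theta_k}})=\mathcal{O}(C)$, which itself rests on the $\mathcal{O}(1/t)$ Ces\`aro-averaging rate of Lemma~\ref{lem:ergodic-avg} and the hitting-time constants $C_{\mathrm{hit}},C_{\mathrm{tar}}$. A secondary subtlety is that consecutive epochs share a boundary state (each epoch starts where the previous ended), so the telescoping and the martingale-difference cancellations must be justified against the correct filtration rather than assuming independent restarts, and the span bound must hold uniformly over the (random) $\theta_k$.
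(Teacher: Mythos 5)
Your proposal is correct and follows essentially the same route as the paper: the same epoch-wise split into $J^*-J(\theta_k)$ plus a transient term, the same Bellman-equation telescoping bounded by the unichain value-function bound $\mathcal{O}(C)$ to get $2C(K+1)$, and the same treatment of $\|\omega_k\|^2$ via $\omega_k^*$ and Assumption~\ref{assump:FND_policy}. The only cosmetic difference is that you re-derive the KL-potential/performance-difference bound on $\sum_k(J^*-\E[J(\theta_k)])$ from scratch, whereas the paper imports it as Lemma~\ref{lemma:local_global} from \citep{ganesh2024order}; your derivation (including the filtration point explaining why the linear term carries $\E_k[\omega_k]$) is exactly the content of that cited lemma.
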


The proof of the above is given in Appendix~\ref{app:proof-lemma1}. This regret decomposition crucially depends on the bias, $\E\Vert(\E_k\left[\omega_k\right]-\omega^*_k)\Vert$, and second-order error of the NPG estimate, $\E\Vert \omega_k-\omega_k^*\Vert^2$. These in turn dependent on the quality of the critic estimates. Hence the main focus of our proof consists of bounding these terms. The analysis of remaining terms then follows from these bounds; details are provided in Appendix~\ref{app:final}. 

\textbf{Properties of Markovian Sample Average}: Analyses in the ergodic setting often rely on the property of exponentially fast mixing. In contrast, even irreducible but periodic Markov chains do not exhibit analogous exponential mixing behavior. A common strategy to address periodicity is to apply time-averaging, for which convergence results are available in the literature for irreducible, periodic chains \citep{Roberts01011997}. In what follows, we extend such results to Markov chains with a single recurrent class.

\begin{lemma} 
\label{lem:ergodic-avg}
    Let Assumption \ref{assump_mdp} hold and consider any $\theta \in \Pi_\Theta$. Then, the following bound holds:  
\begin{equation}
\label{eq:avg2}
    \norm{\frac{1}{t} \sum_{i=1}^{t} (P^{\pi_{\theta}})^i(s_0, \cdot) - d^{\pi_{\theta}}(\cdot)}_{\TV} \leq \frac{C}{t}, \quad \forall s_0 \in \mathcal{S}.
\end{equation} 
\end{lemma}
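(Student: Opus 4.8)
The plan is to pass to the dual (test‑function) form of total variation and reduce the Cesàro‑averaged deviation to the \emph{oscillation} of a solution of the Poisson equation, which I then control through the hitting‑time constants $C_{\mathrm{hit}}$ and $C_{\mathrm{tar}}$. Fix $\theta$, abbreviate $P=P^{\pi_\theta}$, $d=d^{\pi_\theta}$, and let $(X_i)_{i\ge 0}$ be the induced chain with $X_0=s_0$. Since $\frac1t\sum_{i=1}^t P^i(s_0,\cdot)$ and $d$ are both probability measures, I would use $\|\mu-d\|_{\TV}=\sup_{f:\mathcal S\to[0,1]}\big(\langle f,\mu\rangle-\langle f,d\rangle\big)$, so it suffices to bound $\big|\frac1t\sum_{i=1}^t\E_{s_0}[f(X_i)]-\bar f\big|$ uniformly over $f$ with values in $[0,1]$, where $\bar f\coloneqq \E_{s\sim d}[f(s)]$.

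The engine is the Poisson equation $\hat f-P\hat f=f-\bar f$. Given a bounded solution $\hat f$, one has $\E_{s_0}[f(X_i)]-\bar f=\E_{s_0}[\hat f(X_i)]-\E_{s_0}[\hat f(X_{i+1})]$, and summing over $i=1,\dots,t$ telescopes to $\E_{s_0}[\hat f(X_1)]-\E_{s_0}[\hat f(X_{t+1})]$. Hence $\big|\frac1t\sum_{i=1}^t\E_{s_0}[f(X_i)]-\bar f\big|\le \frac1t\,\mathrm{osc}(\hat f)$, where $\mathrm{osc}(\hat f)=\max_s\hat f(s)-\min_s\hat f(s)$, because the right‑hand side is a difference of two averages of $\hat f$. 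Everything then reduces to the single bound $\mathrm{osc}(\hat f)\le C$.

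To build $\hat f$ and bound its oscillation I would use the \textbf{random‑target representation}. Drawing $Z\sim d$ independently of the chain and setting $\tau_Z=\inf\{i\ge 0:X_i=Z\}$, define $\hat f(s)=\E_s\big[\sum_{i=0}^{\tau_Z-1}(f(X_i)-\bar f)\big]$ (expectation also over $Z$). That this solves the Poisson equation follows from a first‑step analysis for $s\ne Z$ and from Kac's cycle formula at $s=Z$ (valid since $Z\in\mathcal S_R^\theta$ almost surely, where $d$ is the normalized expected occupation over a return cycle); averaging the fixed‑target identity over $Z\sim d$ removes the $Z$‑dependence on the right‑hand side. Finiteness holds because $\E_s[\tau_Z]=\sum_{s'}d(s')\,\E_s^\theta[T_{s'}]<\infty$. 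Writing $\hat f(s)=A(s)-\bar f\,B(s)$ with $A(s)=\E_s[\sum_{i=0}^{\tau_Z-1}f(X_i)]\in[0,B(s)]$ and $B(s)=\E_s[\tau_Z]$, the bounds $0\le A(s)\le B(s)$ and $0\le\bar f\le 1$ give $\hat f(s)\in[-\bar f\,B(s),\,(1-\bar f)B(s)]$, whence $\mathrm{osc}(\hat f)\le \max_s B(s)$.

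It remains to show $\max_s B(s)\le C$, and this is where the unichain structure (transient states together with the random‑target lemma) enters, which I expect to be the \textbf{main obstacle}. For $s\in\mathcal S_R^\theta$, $B(s)=\sum_{s'}d(s')\,\E_s^\theta[T_{s'}]=C_{\mathrm{tar}}^\theta$ by the Random Hitting Target lemma (Corollary 2.14 of \citep{aldous2002reversible}), independent of $s$. For transient $s$, a target $s'\in\mathcal S_R^\theta$ can be hit only after the chain first enters the recurrent class at time $T_\theta$; applying the strong Markov property at $T_\theta$ and then the random‑target identity from the (recurrent) entry state gives $B(s)=\E_s^\theta[T_\theta]+C_{\mathrm{tar}}^\theta\le C_{\mathrm{hit}}^\theta+C_{\mathrm{tar}}^\theta\le C$. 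Combining, $\mathrm{osc}(\hat f)\le C$ for every $f\in[0,1]$, and taking the supremum over $f$ yields $\big\|\frac1t\sum_{i=1}^t P^i(s_0,\cdot)-d\big\|_{\TV}\le C/t$ for all $s_0\in\mathcal S$. The delicate points are verifying the Poisson identity at the target state via Kac's formula — periodicity rules out a direct mixing/coupling argument, so this algebraic route through the Cesàro telescoping is essential — and keeping the hitting‑time bound uniform over transient starting states.
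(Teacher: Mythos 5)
Your proof is correct, but it takes a genuinely different route from the paper's. The paper works directly with the $\ell_1$ norm of $\sum_{i=1}^{t}[(P^{\pi_\theta})^i(s_0,\cdot)-d^{\pi_\theta}(\cdot)]$, splits it over transient versus recurrent states and over times before versus after the hitting time $T_\theta$, bounds the transient/pre-hitting contributions by $2C_{\mathrm{hit}}$ via occupation-time counting, and then invokes an external result (Corollary~3 of \citep{Roberts01011997}, stated as Lemma~\ref{lem:rosenthal}) for the Ces\`aro convergence of the chain restricted to the irreducible recurrent class, contributing $2C_{\mathrm{tar}}$; halving the total gives $C/t$. You instead pass to the dual form of total variation and run a Poisson-equation/telescoping argument: the random-target solution $\hat f(s)=\E_s\bigl[\sum_{i=0}^{\tau_Z-1}(f(X_i)-\bar f)\bigr]$ with $Z\sim d^{\pi_\theta}$ has oscillation at most $\max_s\E_s[\tau_Z]$, which equals $C_{\mathrm{tar}}^\theta$ on the recurrent class by the Random Hitting Target lemma and is at most $C_{\mathrm{hit}}^\theta+C_{\mathrm{tar}}^\theta$ from transient states by the strong Markov property at $T_\theta$. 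All the individual steps check out: the first-step analysis and Kac's cycle formula do verify the Poisson identity for each recurrent target, the telescoping over $i=1,\dots,t$ yields $\frac1t\lvert\E[\hat f(X_1)]-\E[\hat f(X_{t+1})]\rvert\le\frac1t\,\mathrm{osc}(\hat f)$, and your bound $\mathrm{osc}(\hat f)\le\max_s B(s)$ via $\hat f(s)\in[-\bar f B(s),(1-\bar f)B(s)]$ is a clean observation. What your approach buys is self-containedness (it effectively re-derives the Roberts--Rosenthal-type estimate rather than citing it) and a unified treatment of transient and recurrent starting states through a single oscillation bound; what the paper's approach buys is brevity, since the hard recurrent-class case is outsourced to a known result and the transient correction reduces to elementary occupation-time bookkeeping. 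Both arguments land on the same constant $C=C_{\mathrm{hit}}+C_{\mathrm{tar}}$.
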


Using the result above, we show in the following lemma that employing a batch size of $B$ reduces both the bias and variance of the estimate by a factor of $1/B$. Accordingly, in Algorithm~\ref{alg:acb}, we choose a large batch size $B = \Theta(\sqrt{T})$ to obtain an order-optimal dependence on $T$ in the regret bound.

\begin{lemma}
\label{lem:bias-variance-unichain}
    Let Assumption~\ref{assump_mdp} hold, and let $f : \mathcal{S} \to \mathbb{R}^d$ satisfy $\|f(s)\| \leq C_f$ for all $s \in \mathcal{S}$, for some constant $C_f > 0$. Then, the following bounds hold:
    \begin{align*}
    (i)\text{   } \norm{\E\left[\frac{1}{B}\sum_{i=1}^Bf(s_i)\right]-\mu}  \leq \frac{\sqrt{d}C_f C}{B} \quad (ii)\text{   } \E \norm{\left[\frac{1}{B}\sum_{i=1}^Bf(s_i)\right]-\mu}^2 \leq \frac{C_f^2 + 2\sqrt{d}C_f^2C }{B}
\end{align*}
where $\E$ denotes the expectation of the Markov chain $\{s_i\}$ induced by $\pi_{\theta}$ starting from any $s_0\in \mathcal{S}$ and $\mu = \E_{s \sim d^{\pi_{\theta}}}[f(s)]$.
\end{lemma}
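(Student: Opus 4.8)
The plan is to reduce both claims to Lemma~\ref{lem:ergodic-avg} by rewriting the relevant quantities in terms of the accumulated transition mass $\sum_{i}(P^{\pi_\theta})^i(s_0,\cdot)$, and, for the second moment, to control a double sum of covariances whose off-diagonal part is the crux.

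For part (i), since $s_i$ has law $(P^{\pi_\theta})^i(s_0,\cdot)$, I would write the bias as a single signed-measure average,
\[
\E\left[\frac1B\sum_{i=1}^B f(s_i)\right]-\mu \;=\; \sum_{s'\in\mathcal S} g(s')\,f(s'), \qquad g(s') \coloneqq \frac1B\sum_{i=1}^B (P^{\pi_\theta})^i(s_0,s') - d^{\pi_\theta}(s'),
\]
where $\sum_{s'}g(s')=0$ and, by Lemma~\ref{lem:ergodic-avg} with $t=B$, $\norm{g}_{\TV}\le C/B$. Bounding each coordinate by $\lvert f_p(s')\rvert \le \norm{f(s')}\le C_f$ and combining the $d$ coordinates through $\norm{v}\le \sqrt d\,\max_p\lvert v_p\rvert$ then gives $\sqrt d\,C_f C/B$.

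For part (ii), I would center at $\mu$ and split the squared norm into diagonal and off-diagonal parts,
\[
\E\norm{\frac1B\sum_{i=1}^B f(s_i)-\mu}^2 = \frac1{B^2}\sum_{i=1}^B \E\norm{f(s_i)-\mu}^2 + \frac{2}{B^2}\sum_{1\le i<j\le B} \E\big\langle f(s_i)-\mu,\, f(s_j)-\mu\big\rangle .
\]
The diagonal is immediate: $\E\norm{f(s_i)-\mu}^2 \le C_f^2$ (up to a lower-order term governed by $\norm{\E f(s_i)-\mu}$, controlled exactly as in part (i)), so this contribution is at most $C_f^2/B$. For the off-diagonal sum I would condition on $s_i$ and use the Markov property, $\E[f(s_j)\mid s_i] = \big((P^{\pi_\theta})^{j-i} f\big)(s_i)$, so that summing over $j>i$ collapses the inner sum into
\[
\sum_{j=i+1}^{B} \E\big\langle f(s_i)-\mu,\, f(s_j)-\mu\big\rangle = \E\Big\langle f(s_i)-\mu,\ v_i\Big\rangle, \qquad v_i \coloneqq \sum_{\ell=1}^{B-i}\big(((P^{\pi_\theta})^\ell f)(s_i)-\mu\big).
\]

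The hard part will be bounding $v_i$. In the ergodic case every covariance decays geometrically in $j-i$, so this sum is trivially $\cO(1)$; under the unichain assumption the chain may be periodic and no such per-term decay is available, so individual covariances cannot be controlled at all and only the Ces\`aro estimate of Lemma~\ref{lem:ergodic-avg} may be used. The key observation is that $v_i$ is again an accumulated deviation: coordinate-wise, $v_{i,p} = \sum_{s'}\big[\sum_{\ell=1}^{L}(P^{\pi_\theta})^\ell(s_i,s') - L\,d^{\pi_\theta}(s')\big] f_p(s')$ with $L=B-i$, and the bracketed signed measure is exactly $L$ times the deviation in Lemma~\ref{lem:ergodic-avg} started at $s_i$, hence has accumulated mass at most $C$. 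This yields $\norm{v_i}\le \sqrt d\,C_f C$ uniformly in $i$ and $s_i$. Combining this with Cauchy--Schwarz and the bound $\E\norm{f(s_i)-\mu}\le C_f$ gives $\frac{2}{B^2}\sum_i \E\langle f(s_i)-\mu, v_i\rangle \le 2\sqrt d\,C_f^2 C/B$, which together with the diagonal produces the stated bound. The one subtlety to keep in mind is that one must sum over the future index $j$ \emph{before} invoking Lemma~\ref{lem:ergodic-avg}: the lemma controls only the averaged transition mass, never a single $(P^{\pi_\theta})^{j-i}$.
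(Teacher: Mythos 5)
Your proposal is correct and follows essentially the same route as the paper: part (i) is the paper's coordinate-wise H\"older argument against the Ces\`aro deviation of Lemma~\ref{lem:ergodic-avg}, and part (ii) is the paper's diagonal/off-diagonal split in which the inner sum over future indices is collapsed via the Markov property into an accumulated deviation of length $B-i$, bounded uniformly by $\mathcal{O}(\sqrt{d}\,C_f C)$ before applying Cauchy--Schwarz. Your closing remark—that one must sum over $j$ before invoking the Ces\`aro bound because individual powers of $P^{\pi_\theta}$ are uncontrolled under periodicity—is precisely the mechanism the paper relies on, and the only discrepancies (e.g., the $\mathrm{TV}$ versus $\ell_1$ factor of two, and $\E\|f(s_i)-\mu\|\le C_f$ versus $2C_f$) are constant-factor loosenesses already present in the paper's own write-up.
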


Another consequence of Lemma~\ref{lem:ergodic-avg} is in providing bounds for the value and $Q$ functions. In the ergodic case, it is known that the functions $V^{\pi}$ and $Q^{\pi}$ are $\cO(t_{\mathrm{mix}})$. In the unichain case, using the above result, we show that these functions are still bounded, by $\cO(C)$ instead. The proof of these results are given in Appendix~\ref{app:unichain}. We next describe the error analysis of the critic and NPG subroutines in Algorithm \ref{alg:acb}. 


\textbf{Challenges due to transient states}: We observe that the critic and NPG updates can be interpreted as stochastic linear updates with Markovian noise. From the preceding lemmas, it follows that the bias and variance of the noise in these averaged update directions can be made sufficiently small. Nevertheless, the analysis still presents certain challenges.

We show that the kernel of the critic matrix $A_v(\theta)$ can change depending on the policy. More specifically, the kernel will depend on the set of transient states (see Lemma \ref{lem:critic_psd}). This is unlike in the ergodic case, where the kernel remains the same for all policies. As a result, strict positive definiteness of the critic matrix cannot, in general, be ensured by using a carefully selected set of feature vectors. 

Although a prior work in the ergodic case has considered the case where the critic matrix is singular \citep{NEURIPS2021_096ffc29}, our approach differs significantly, in addition to incorporating the sample averaging analysis discussed in the previous section. In particular, our regret guarantees require a sharp bias bound for the critic updates, $\|\mathbb{E}[\xi_h] - \xi^*\|$, in addition to the standard second-order error bound $\mathbb{E}[\|\xi_h - \xi^*\|^2]$. The analysis in \citep{NEURIPS2021_096ffc29} only addresses the second-order error.
Furthermore, to establish our convergence guarantees, we require the following condition:
\begin{align}
\label{eq:cond_critic}
\ker(A_v(\theta)) \subseteq \ker(A_v(z,\theta)).
\end{align}
Equation \eqref{eq:cond_critic} holds in the ergodic case, even if the critic matrix is singular, but generally does not hold in unichain settings (see Appendix~\ref{app:critic}). However, it is satisfied when the Markov chain is restricted to its recurrent class. To address this, we show that, with high probability, the chain induced by $\pi_{\theta_k}$ enters the recurrent class, $\mathcal{S}^{\theta_k}_R$, very early, allowing us to focus on the recurrent class.

More specifically, recall that in each outer iteration $k$, Algorithm~\ref{alg:acb} collects a contiguous segment of $2BH$ samples under $\pi_{\theta_k}$. We define the event $\mathcal{E}_B^k := \{ T_{\theta_k} \leq B \}$, where $T_{\theta_k}$ is the first hitting time of $\mathcal{S}^{\theta_k}_R$ for this segment. Lemma~\ref{lem:prob-exponential} shows that $\Pr((\mathcal{E}_B^k)^c) \leq 2^{-\lfloor B / 2C_{\mathrm{hit}} \rfloor}$, which implies that the chain reaches the recurrent class within the first $B$ steps with overwhelmingly high probability. We perform our analysis conditioned on this event.

Let $\mathcal{E}_B \coloneqq \bigcap_{k=1}^K \mathcal{E}_B^k$. We decompose the expected regret based on whether $\mathcal{E}_B$ holds:
$$
\mathbb{E}[\mathrm{Reg}_T] = \mathbb{E}[\mathrm{Reg}_T | \mathcal{E}_B] \Pr(\mathcal{E}_B) + \mathbb{E}[\mathrm{Reg}_T | \mathcal{E}_B^c] \Pr(\mathcal{E}_B^c).
$$
Since the instantaneous regret is at most 1, the total regret is bounded above by $T$, and applying a union bound over $K$ yields
$$
\mathbb{E}[\mathrm{Reg}_T] \leq \mathbb{E}[\mathrm{Reg}_T | \mathcal{E}_B] + T K 2^{-\lfloor B / 2C_{\mathrm{hit}} \rfloor}.
$$
This decomposition allows us to isolate the high-probability event $\mathcal{E}_B$, under which all policies quickly enter their recurrent classes. By choosing $B = \sqrt{T}$, the second term becomes negligible for large $T$, and we may therefore restrict our analysis to the case where $\mathcal{E}_B$ holds.

\textbf{Analysis under event $\mathcal{E}_B$}: By standard Markov chain theory, $T_{\theta}$ is a stopping time hence, the Strong Markov Property (Theorem 1.2.5 of \citep{norris1998markov}) applies at \( T_{\theta} \). In particular, conditioned on \( s_T \), the post-\( T_{\theta} \) trajectory is independent of the pre-\( T_{\theta} \) path and evolves as a Markov chain initialized at \( s_T \) under the original kernel \( P^{\pi_{\theta}} \). As a result, it suffices to analyze the critic and NPG subroutines with the samples restricted to the irreducible class. In Appendix~\ref{app:general_linear}, we analyze a generic stochastic linear recursion and show that these subroutines can be viewed as special cases of this recursion.

 \begin{theorem}
\label{thm:critic-final}
Consider Algorithm~\ref{alg:acb} under the assumptions of Theorem~\ref{thm:main}. Conditioned on the event $\mathcal{E}_B$, the critic estimate $\xi_k$ satisfies
\begin{align*}
\textstyle{\E\left[\|\Pi(\xi_k - \xi^*_k)\|^2|\mathcal{E}_B\right] \leq \Tilde{\cO}\Bigg(e^{-c_1H}\|\xi_0-\xi^*\|^2 +  \frac{C_{\mathrm{tar}}\sqrt{m}}{\lambda^{4}B}  \Bigg)}
\end{align*}
and
\begin{align*}
\|\Pi(\E[\xi_k|\mathcal{E}_B] - \xi^*_k)\|^2 \leq \textstyle{\Tilde{\cO}\Bigg(e^{-c_1H} \|\xi_0-\xi^*\|^2 +\frac{C_{\mathrm{tar}}^2 m}{\lambda^6 B^2}\Bigg)}
\end{align*}
where $c_1=\tfrac{\lambda^3}{16}$, $\xi_{k}^*=[A_{v}(\theta_k)]^{\dagger}b_{v}(\theta_k)=[\eta_{k}^*, \zeta_{k}^*]^{\top}$ and $\Pi$ denotes the projection onto the space $\ker(A_{v}(\theta))^{\perp}$.
\end{theorem}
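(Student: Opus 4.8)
The plan is to view the critic update \eqref{eq:critic_reward_update} as the generic stochastic linear recursion of Appendix~\ref{app:general_linear}, namely $\xi_{h+1}^k = \xi_h^k - \beta(\bar A_h \xi_h^k - \bar b_h)$ with batch means $\bar A_h = \tfrac1B\sum_b A_v(\theta_k,z_b^{kh})$ and $\bar b_h = \tfrac1B\sum_b b_v(\theta_k,z_b^{kh})$, and then apply the generic bias/second-moment bounds with the Markovian noise controlled by Lemma~\ref{lem:bias-variance-unichain}. Writing $e_h = \xi_h^k - \xi_k^*$ and using that on the recurrent class the fixed-point equation $A_v(\theta_k)\xi_k^* = b_v(\theta_k)$ is consistent (so $b_v(\theta_k)\in\mathrm{range}(A_v(\theta_k))$), the error obeys $e_{h+1} = (I-\beta\bar A_h)e_h - \beta\big((\bar A_h - A_v(\theta_k))\xi_k^* - (\bar b_h - b_v(\theta_k))\big)$, which cleanly separates a mean operator from a noise term whose expectation is the small batch bias.

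The first step is to justify projecting onto $\mathrm{Ker}(A_v(\theta_k))_\perp$. Conditioned on $\mathcal{E}_B$, Lemma~\ref{lem:prob-exponential} and the Strong Markov Property guarantee that after the first batch the chain evolves inside $\mathcal{S}^{\theta_k}_R$, where condition \eqref{eq:cond_critic}, $\mathrm{Ker}(A_v(\theta_k))\subseteq\mathrm{Ker}(A_v(z,\theta_k))$, holds; hence every per-sample matrix annihilates $\mathrm{Ker}(A_v(\theta_k))$, the projection $\Pi$ commutes with the update, and the recursion for $\Pi e_h$ closes on $\mathrm{Ker}(A_v(\theta_k))_\perp$. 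On this subspace the mean operator $\Pi(I-\beta A_v(\theta_k))\Pi$ contracts: since $A_v(\theta_k)$ is block-lower-triangular with the block $M_{\theta_k}$ controlled by Assumption~\ref{assum:critic_positive_definite} and is not symmetric, I would symmetrize and bound its quadratic form, choosing $\beta=\Theta(\lambda^2)$; the coupling through the off-diagonal $\phi$-column and the asymmetry cost an extra factor of $\lambda$, yielding the per-step contraction $(1-c_1)$ with $c_1=\lambda^3/16$. Restricting to $\mathcal{S}^{\theta_k}_R$ also removes transient states, so the averaging rate of Lemma~\ref{lem:ergodic-avg} improves from $C$ to $C_{\mathrm{tar}}$; this is the source of $C_{\mathrm{tar}}$ (rather than $C$) in both bounds.

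For the second-order bound I would unroll the $\Pi e_h$ recursion over the $H$ inner iterations: the initial error contracts as $(1-c_1)^H\asymp e^{-c_1H}$, while the accumulated noise yields a steady-state term of order (per-step noise second moment)$/c_1$. Lemma~\ref{lem:bias-variance-unichain}(ii), applied to the bounded maps $z\mapsto A_v(\theta_k,z)$ and $z\mapsto b_v(\theta_k,z)$ (bounded via $\norm{\phi}\le1$ and $r\in[0,1]$), bounds the per-step second moment by $\cO(\sqrt{m}\,C_{\mathrm{tar}}/B)$; combined with the $\norm{\xi_k^*}=\cO(1/\lambda)$-type factors and $c_1=\lambda^3/16$, this produces the claimed $\Tilde{\cO}(C_{\mathrm{tar}}\sqrt m/(\lambda^4 B))$ term.

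The main obstacle is the sharp bias bound, which must decay like $1/B^2$ rather than the $1/B$ that a crude $\norm{\E[\Pi e_H]}\le\sqrt{\E\norm{\Pi e_H}^2}$ would give, and which is precisely what goes beyond the second-order-only analysis of \citep{NEURIPS2021_096ffc29}. I would instead take expectations through the recursion to obtain a deterministic recursion for $\E[\Pi e_h\mid\mathcal{E}_B]$. The delicate term is the cross-correlation $\E[(\bar A_h - A_v(\theta_k))\Pi e_h]$, where $\bar A_h$ and $e_h$ share the same trajectory. I would control it by conditioning on the state at the start of inner iteration $h$: since $\Pi e_h$ is measurable with respect to the earlier samples, the tower property reduces the cross term to (conditional batch bias)$\times\E\norm{\Pi e_h}$, and Lemma~\ref{lem:bias-variance-unichain}(i) bounds the conditional batch bias by $\cO(\sqrt m\,C_{\mathrm{tar}}/B)$ uniformly over the starting state. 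Because $\E\norm{\Pi e_h}$ is already bounded by the second-order analysis, the resulting deterministic bias recursion has a steady state of order $\cO(\sqrt m\,C_{\mathrm{tar}}/(\lambda^3 B))$, and squaring gives the $\Tilde{\cO}(C_{\mathrm{tar}}^2 m/(\lambda^6 B^2))$ term. Finally, since $\Pr(\mathcal{E}_B^c)$ is exponentially small in $B/C_{\mathrm{hit}}$, conditioning on $\mathcal{E}_B$ perturbs all these expectations by negligible amounts absorbed into $\Tilde{\cO}$.
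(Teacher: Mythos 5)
Your proposal is correct and follows essentially the same route as the paper: casting the critic update as the generic stochastic linear recursion of Appendix~\ref{app:general_linear}, using $\mathcal{E}_B$ and the strong Markov property to restrict to the recurrent class so that $\mathrm{Ker}(A_v(\theta))\subseteq\mathrm{Ker}(A_v(z,\theta))$ holds and the projected recursion closes, invoking Lemma~\ref{lem:bias-variance-unichain} (with $C$ improved to $C_{\mathrm{tar}}$) to verify the noise conditions, and obtaining the sharp $1/B^2$ bias via a deterministic recursion for $\E[\Pi e_h]$ in which the cross term is controlled by the conditional batch bias through the tower property. The only cosmetic difference is that you symmetrize $A_v(\theta)$ to get the contraction, whereas the paper's Lemma~\ref{lem:td-pd} lower-bounds $\xi^\top A_v(\theta)\xi$ directly by choosing $c_\beta$ large enough; both yield the same $(A_8)$ condition and the same rate $c_1=\lambda^3/16$.
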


The details regarding the verification of all conditions of the generic linear recursion used in Theorem~\ref{thm_2} for the critic update is provided in Appendix~\ref{app:critic}.

\begin{theorem}
\label{thm:NPG-final}
Consider Algorithm~\ref{alg:acb} under the assumptions of Theorem~\ref{thm:main}. Conditioned on the event $\mathcal{E}_B$, the NPG estimate $\omega_k$ satisfies
\begin{align*}
&\textstyle{\E\left[\|\omega_k - \omega_k^*\|^2|\mathcal{E}_B\right] \leq \cO\Bigg(e^{-c_2H}\|\omega_0-\omega_k^*\|^2 + \frac{\sqrt{d}G_1^4C_{\mathrm{tar}}C^2}{\mu^2 B}+\frac{G_1^2\E[\|\Pi(\xi_k-\xi_k^*)\|^2|\mathcal{E}_B]}{\mu^{2}}+\frac{G_1^2 \epsilon_{\mathrm{app}}}{\mu^{2}}\Bigg)}
\end{align*}
and
\begin{align*}
&\textstyle{\|\E[\omega_k|\mathcal{E}_B] - \omega_k^*\|^2 \leq \cO\Bigg(e^{-c_2H}\|\omega_0-\omega_k^*\|^2+\frac{dG_1^8 C_{\mathrm{tar}}^2}{\mu^6 B^2}+\frac{G_1^2\|\Pi(\E[\xi_k|\mathcal{E}_B]-\xi_k^*)\|^2}{\mu^2}+\frac{G_1^2 \epsilon_{\mathrm{app}}}{\mu^2}\Bigg)}
\end{align*}
where $c_2=\tfrac{\mu^2}{4(1+4C)G_1}$.
\end{theorem}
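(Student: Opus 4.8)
The plan is to recognize the inner loop \eqref{eq:NPG_update}--\eqref{eq:ahatu} as a special case of the generic stochastic linear recursion of Theorem~\ref{thm_2}, and then convert that recursion's fixed point into the true natural gradient $\omega_k^*$ by separately charging the critic error and the approximation error $\epsilon_{\mathrm{app}}$. The update direction $u_k(z;\xi_k)=A_u(\theta_k,z)\omega_h^k-b_u(\theta_k,\xi_k,z)$ has system matrix $A_u(\theta_k,z)=\nabla_\theta\log\pi_{\theta_k}\otimes\nabla_\theta\log\pi_{\theta_k}$ with $\E_{\nu}[A_u(\theta_k,z)]=F(\theta_k)$ (writing $\nu\equiv\nu^{\pi_{\theta_k}}$). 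Crucially, unlike the critic matrix $A_v(\theta_k)$, Assumption~\ref{assump:FND_policy} gives $F(\theta_k)\succeq\mu I_d$, so its kernel is trivial and the inclusion \eqref{eq:cond_critic} holds automatically --- this is why no projection appears in the statement. Conditioned on $\mathcal{E}_B$, the strong Markov property lets me treat the batch as samples from the chain on the recurrent class, where Lemma~\ref{lem:bias-variance-unichain} bounds the bias and variance of $\tfrac1B\sum_b u_k(z_b;\xi_k)$ about its mean. Applying Theorem~\ref{thm_2} conditionally on $\xi_k$ and on $\mathcal{E}_B$ then gives, for the population fixed point $\tilde\omega_k:=F(\theta_k)^\dagger\,\E_{\nu}[\hat A^{\pi_{\theta_k}}(z;\xi_k)\nabla_\theta\log\pi_{\theta_k}(a|s)]$, an exponential term $e^{-c_2H}\|\omega_0-\tilde\omega_k\|^2$ with $c_2=\mu^2/(4(1+4C)G_1)$ coming from the strong monotonicity constant $\mu$ and the admissible step size, plus an $\cO(1/B)$ variance term and an $\cO(1/B^2)$ squared-bias term; the factors $\sqrt d,\,C_{\mathrm{tar}},\,C^2$ come from Lemma~\ref{lem:bias-variance-unichain} and the $\cO(C)$ bound on $|\hat A|$ (a consequence of Lemma~\ref{lem:ergodic-avg}), while the powers of $G_1,\mu$ come from $\|A_u\|\le G_1^2$ and $\|F(\theta_k)^\dagger\|\le\mu^{-1}$.

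Next I would bound $\|\tilde\omega_k-\omega_k^*\|$. Since $\omega_k^*=F(\theta_k)^\dagger\E_{\nu}[A^{\pi_{\theta_k}}\nabla_\theta\log\pi_{\theta_k}]$, subtracting gives $\tilde\omega_k-\omega_k^*=F(\theta_k)^\dagger\E_{\nu}[(\hat A^{\pi_{\theta_k}}(z;\xi_k)-A^{\pi_{\theta_k}})\nabla_\theta\log\pi_{\theta_k}]$, so $\|F(\theta_k)^\dagger\|\le\mu^{-1}$, $\|\nabla_\theta\log\pi_{\theta_k}\|\le G_1$ and Jensen yield $\|\tilde\omega_k-\omega_k^*\|^2\le(G_1^2/\mu^2)\,\E_{\nu}[(\hat A^{\pi_{\theta_k}}(z;\xi_k)-A^{\pi_{\theta_k}})^2]$. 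I then split the error through the critic fixed point $\xi_k^*$ as $\hat A(z;\xi_k)-A=(\hat A(z;\xi_k)-\hat A(z;\xi_k^*))+(\hat A(z;\xi_k^*)-A)$. Because $\hat A$ is affine in $\xi$ and its $\zeta$-part enters only through $\phi(s')-\phi(s)$, the first difference is controlled by $\|\Pi(\xi_k-\xi_k^*)\|$: on the recurrent class (under $\mathcal{E}_B$) the components of $\xi_k-\xi_k^*$ lying in $\mathrm{Ker}(A_v(\theta_k))$ do not affect the advantage, so only the $\Pi$-projection survives. The second difference, the fixed-point advantage error, is bounded by $\epsilon_{\mathrm{app}}$ via Definition~\ref{def:critic-error} together with the relation between the TD fixed point and the best linear value approximation. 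This gives $\|\tilde\omega_k-\omega_k^*\|^2\lesssim(G_1^2/\mu^2)\big(\|\Pi(\xi_k-\xi_k^*)\|^2+\epsilon_{\mathrm{app}}\big)$.

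To conclude, for the second-order bound I combine $\|\omega_k-\omega_k^*\|^2\le2\|\omega_k-\tilde\omega_k\|^2+2\|\tilde\omega_k-\omega_k^*\|^2$, take $\E[\,\cdot\,|\mathcal{E}_B]$, and insert the critic second-order bound $\E[\|\Pi(\xi_k-\xi_k^*)\|^2|\mathcal{E}_B]$ from Theorem~\ref{thm:critic-final} (the exponential term is stated against $\omega_k^*$ after absorbing $\|\tilde\omega_k-\omega_k^*\|^2$, which is higher order). For the bias bound I use that both $\hat A(z;\cdot)$ and the recursion are affine in $\xi_k$, so expectation commutes with the fixed-point map: $\E[\tilde\omega_k|\mathcal{E}_B]=F(\theta_k)^\dagger\E_{\nu}[\hat A(z;\E[\xi_k|\mathcal{E}_B])\nabla_\theta\log\pi_{\theta_k}]$, whence $\|\E[\tilde\omega_k|\mathcal{E}_B]-\omega_k^*\|^2\lesssim(G_1^2/\mu^2)(\|\Pi(\E[\xi_k|\mathcal{E}_B]-\xi_k^*)\|^2+\epsilon_{\mathrm{app}})$; adding the conditional-mean estimate $\|\E[\omega_k|\mathcal{E}_B]-\E[\tilde\omega_k|\mathcal{E}_B]\|$ from Theorem~\ref{thm_2} and the critic bias bound from Theorem~\ref{thm:critic-final} produces the second inequality.

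The main obstacle is the clean propagation of the \emph{bias}: since $\tilde\omega_k$ is itself random through $\xi_k$, I must pass the outer expectation through the (random) fixed-point map without generating cross-variance terms, which is legitimate only because of the exact affine dependence on $\xi_k$ identified above; an inexact or nonlinear critic coupling would instead leave a $\operatorname{Var}(\xi_k)$ remainder and break the $\cO(1/B^2)$ scaling. A close second is showing that the fixed-point advantage error collapses to $\epsilon_{\mathrm{app}}$ rather than to an uncontrolled TD-versus-$L^2$ gap; in the unichain average-reward setting this relies on restricting to the recurrent class so that $d^{\pi_{\theta_k}}$ governs the relevant expectations, and on checking that the constant $\lambda$ of Assumption~\ref{assum:critic_positive_definite} enters only the critic estimates (Theorem~\ref{thm:critic-final}) and not the coefficient of $\epsilon_{\mathrm{app}}$.
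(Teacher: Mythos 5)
Your proposal matches the paper's proof in all essentials: both cast the NPG inner loop as an instance of the generic linear recursion of Theorem~\ref{thm_2} (with $(A_8)$--$(A_9)$ supplied trivially by Assumption~\ref{assump:FND_policy}, so no projection is needed), verify the remaining conditions via Lemmas~\ref{lem:bias-markov}--\ref{lem:variance-markov} and the advantage bound of Lemma~\ref{lem:q-v-a-bound}, and decompose the advantage-estimate error into a critic term controlled by $G_1^2\|\Pi(\xi_k-\xi_k^*)\|^2$ (using, exactly as you note, that kernel directions of $A_v(\theta_k)$ do not affect $\phi(s_{i+1})-\phi(s_i)$ on the recurrent class under $\mathcal{E}_B$), an $\epsilon_{\mathrm{app}}$ term, and a Bellman residual that vanishes, with the affine dependence of $\hat{A}(\cdot;\xi)$ on $\xi$ yielding the sharper bias bound in terms of $\|\Pi(\E[\xi_k]-\xi_k^*)\|^2$. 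The only difference is organizational --- you pass through an intermediate fixed point $\tilde\omega_k$ and bound $\|\tilde\omega_k-\omega_k^*\|$ separately, whereas the paper folds the same $T_0+T_1+T_2$ decomposition directly into the $\delta_q,\bar\delta_q$ conditions of the recursion --- and your closing caveat about the TD-fixed-point-versus-best-$L^2$-approximation gap in the $\epsilon_{\mathrm{app}}$ step is a fair observation that the paper itself elides.
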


Similar to the critic update, the details regarding the verification of all conditions of the generic linear recursion used in Theorem~\ref{thm_2} for the NPG update is provided in Appendix~\ref{app:npg-final}.

\section{Conclusion}

We have presented NAC-B, a Natural Actor-Critic algorithm with batching, that achieves $\tilde{O}(\sqrt{T})$ regret in infinite-horizon average-reward MDPs under the unichain assumption. Unlike prior work that relies on stronger ergodicity conditions, NAC-B operates under a weaker structural assumption that allows for both transient states and periodic behavior, thereby broadening the applicability of policy gradient methods in average-reward settings. Our approach leverages function approximation for scalability and introduces batching to mitigate issues arising from periodicity and high variance in gradient estimates. Future works include extending this analysis to other settings such as constrained MDPs \cite{bai2024learning,xu2025global}, neural critics \cite{ganesh2025orderoptimal} and relaxing the unichain assumption, though we anticipate this to be a challenging task (See Remark \ref{rem:ext}).

\bibliography{references}
\bibliographystyle{abbrvnat}

\newpage
\section*{NeurIPS Paper Checklist}

\begin{enumerate}

\item {\bf Claims}
    \item[] Question: Do the main claims made in the abstract and introduction accurately reflect the paper's contributions and scope?
    \item[] Answer: 
    \answerYes{}
    \item[] Justification: The claims are demonstrated in the key results (mainly in Section \ref{subsec:main-result}), which match the description in the abstract and introduction.
    \item[] Guidelines:
    \begin{itemize}
        \item The answer NA means that the abstract and introduction do not include the claims made in the paper.
        \item The abstract and/or introduction should clearly state the claims made, including the contributions made in the paper and important assumptions and limitations. A No or NA answer to this question will not be perceived well by the reviewers. 
        \item The claims made should match theoretical and experimental results, and reflect how much the results can be expected to generalize to other settings. 
        \item It is fine to include aspirational goals as motivation as long as it is clear that these goals are not attained by the paper. 
    \end{itemize}

\item {\bf Limitations}
    \item[] Question: Does the paper discuss the limitations of the work performed by the authors?
    \item[] Answer: \answerYes{} 
    \item[] Justification:  The assumptions given in the paper give the limitations of this work. Further, future work direction in the conclusions describe another limitation of this work. 
    \item[] Guidelines:
    \begin{itemize}
        \item The answer NA means that the paper has no limitation while the answer No means that the paper has limitations, but those are not discussed in the paper. 
        \item The authors are encouraged to create a separate "Limitations" section in their paper.
        \item The paper should point out any strong assumptions and how robust the results are to violations of these assumptions (e.g., independence assumptions, noiseless settings, model well-specification, asymptotic approximations only holding locally). The authors should reflect on how these assumptions might be violated in practice and what the implications would be.
        \item The authors should reflect on the scope of the claims made, e.g., if the approach was only tested on a few datasets or with a few runs. In general, empirical results often depend on implicit assumptions, which should be articulated.
        \item The authors should reflect on the factors that influence the performance of the approach. For example, a facial recognition algorithm may perform poorly when image resolution is low or images are taken in low lighting. Or a speech-to-text system might not be used reliably to provide closed captions for online lectures because it fails to handle technical jargon.
        \item The authors should discuss the computational efficiency of the proposed algorithms and how they scale with dataset size.
        \item If applicable, the authors should discuss possible limitations of their approach to address problems of privacy and fairness.
        \item While the authors might fear that complete honesty about limitations might be used by reviewers as grounds for rejection, a worse outcome might be that reviewers discover limitations that aren't acknowledged in the paper. The authors should use their best judgment and recognize that individual actions in favor of transparency play an important role in developing norms that preserve the integrity of the community. Reviewers will be specifically instructed to not penalize honesty concerning limitations.
    \end{itemize}

\item {\bf Theory Assumptions and Proofs}
    \item[] Question: For each theoretical result, does the paper provide the full set of assumptions and a complete (and correct) proof?
    \item[] Answer: \answerYes{} 
    \item[] Justification: We have provided the assumptions used in the work in Section~\ref{sec:assump} or referenced in the theorem statements. All claims are backed with complete proofs in the Appendix.  
    \item[] Guidelines:
    \begin{itemize}
        \item The answer NA means that the paper does not include theoretical results. 
        \item All the theorems, formulas, and proofs in the paper should be numbered and cross-referenced.
        \item All assumptions should be clearly stated or referenced in the statement of any theorems.
        \item The proofs can either appear in the main paper or the supplemental material, but if they appear in the supplemental material, the authors are encouraged to provide a short proof sketch to provide intuition. 
        \item Inversely, any informal proof provided in the core of the paper should be complemented by formal proofs provided in appendix or supplemental material.
        \item Theorems and Lemmas that the proof relies upon should be properly referenced. 
    \end{itemize}

    \item {\bf Experimental Result Reproducibility}
    \item[] Question: Does the paper fully disclose all the information needed to reproduce the main experimental results of the paper to the extent that it affects the main claims and/or conclusions of the paper (regardless of whether the code and data are provided or not)?
    \item[] Answer: \answerNA{} 
    \item[] Justification: The experimental results are not provided, since the focus of the paper is on theoretical sample complexity analysis. 
    \item[] Guidelines:
    \begin{itemize}
        \item The answer NA means that the paper does not include experiments.
        \item If the paper includes experiments, a No answer to this question will not be perceived well by the reviewers: Making the paper reproducible is important, regardless of whether the code and data are provided or not.
        \item If the contribution is a dataset and/or model, the authors should describe the steps taken to make their results reproducible or verifiable. 
        \item Depending on the contribution, reproducibility can be accomplished in various ways. For example, if the contribution is a novel architecture, describing the architecture fully might suffice, or if the contribution is a specific model and empirical evaluation, it may be necessary to either make it possible for others to replicate the model with the same dataset, or provide access to the model. In general. releasing code and data is often one good way to accomplish this, but reproducibility can also be provided via detailed instructions for how to replicate the results, access to a hosted model (e.g., in the case of a large language model), releasing of a model checkpoint, or other means that are appropriate to the research performed.
        \item While NeurIPS does not require releasing code, the conference does require all submissions to provide some reasonable avenue for reproducibility, which may depend on the nature of the contribution. For example
        \begin{enumerate}
            \item If the contribution is primarily a new algorithm, the paper should make it clear how to reproduce that algorithm.
            \item If the contribution is primarily a new model architecture, the paper should describe the architecture clearly and fully.
            \item If the contribution is a new model (e.g., a large language model), then there should either be a way to access this model for reproducing the results or a way to reproduce the model (e.g., with an open-source dataset or instructions for how to construct the dataset).
            \item We recognize that reproducibility may be tricky in some cases, in which case authors are welcome to describe the particular way they provide for reproducibility. In the case of closed-source models, it may be that access to the model is limited in some way (e.g., to registered users), but it should be possible for other researchers to have some path to reproducing or verifying the results.
        \end{enumerate}
    \end{itemize}

\item {\bf Open access to data and code}
    \item[] Question: Does the paper provide open access to the data and code, with sufficient instructions to faithfully reproduce the main experimental results, as described in supplemental material?
    \item[] Answer: \answerNA{} 
    \item[] Justification: The paper does not include experiments requiring code.
    \item[] Guidelines:
    \begin{itemize}
        \item The answer NA means that paper does not include experiments requiring code.
        \item Please see the NeurIPS code and data submission guidelines (\url{https://nips.cc/public/guides/CodeSubmissionPolicy}) for more details.
        \item While we encourage the release of code and data, we understand that this might not be possible, so “No” is an acceptable answer. Papers cannot be rejected simply for not including code, unless this is central to the contribution (e.g., for a new open-source benchmark).
        \item The instructions should contain the exact command and environment needed to run to reproduce the results. See the NeurIPS code and data submission guidelines (\url{https://nips.cc/public/guides/CodeSubmissionPolicy}) for more details.
        \item The authors should provide instructions on data access and preparation, including how to access the raw data, preprocessed data, intermediate data, and generated data, etc.
        \item The authors should provide scripts to reproduce all experimental results for the new proposed method and baselines. If only a subset of experiments are reproducible, they should state which ones are omitted from the script and why.
        \item At submission time, to preserve anonymity, the authors should release anonymized versions (if applicable).
        \item Providing as much information as possible in supplemental material (appended to the paper) is recommended, but including URLs to data and code is permitted.
    \end{itemize}

\item {\bf Experimental Setting/Details}
    \item[] Question: Does the paper specify all the training and test details (e.g., data splits, hyperparameters, how they were chosen, type of optimizer, etc.) necessary to understand the results?
    \item[] Answer: \answerNA{} 
    \item[] Justification: Our work is theoretical, and hence there are no experiments.
    \item[] Guidelines:
    \begin{itemize}
        \item The answer NA means that the paper does not include experiments.
        \item The experimental setting should be presented in the core of the paper to a level of detail that is necessary to appreciate the results and make sense of them.
        \item The full details can be provided either with the code, in appendix, or as supplemental material.
    \end{itemize}

\item {\bf Experiment Statistical Significance}
    \item[] Question: Does the paper report error bars suitably and correctly defined or other appropriate information about the statistical significance of the experiments?
    \item[] Answer: \answerNA{} 
    \item[] Justification: The paper does not include experiments.
    \item[] Guidelines:
    \begin{itemize}
        \item The answer NA means that the paper does not include experiments.
        \item The authors should answer "Yes" if the results are accompanied by error bars, confidence intervals, or statistical significance tests, at least for the experiments that support the main claims of the paper.
        \item The factors of variability that the error bars are capturing should be clearly stated (for example, train/test split, initialization, random drawing of some parameter, or overall run with given experimental conditions).
        \item The method for calculating the error bars should be explained (closed form formula, call to a library function, bootstrap, etc.)
        \item The assumptions made should be given (e.g., Normally distributed errors).
        \item It should be clear whether the error bar is the standard deviation or the standard error of the mean.
        \item It is OK to report 1-sigma error bars, but one should state it. The authors should preferably report a 2-sigma error bar than state that they have a 96\% CI, if the hypothesis of Normality of errors is not verified.
        \item For asymmetric distributions, the authors should be careful not to show in tables or figures symmetric error bars that would yield results that are out of range (e.g. negative error rates).
        \item If error bars are reported in tables or plots, The authors should explain in the text how they were calculated and reference the corresponding figures or tables in the text.
    \end{itemize}

\item {\bf Experiments Compute Resources}
    \item[] Question: For each experiment, does the paper provide sufficient information on the computer resources (type of compute workers, memory, time of execution) needed to reproduce the experiments?
    \item[] Answer: \answerNA{} 
    \item[] Justification: The paper does not include experiments.
    \item[] Guidelines:
    \begin{itemize}
        \item The answer NA means that the paper does not include experiments.
        \item The paper should indicate the type of compute workers CPU or GPU, internal cluster, or cloud provider, including relevant memory and storage.
        \item The paper should provide the amount of compute required for each of the individual experimental runs as well as estimate the total compute. 
        \item The paper should disclose whether the full research project required more compute than the experiments reported in the paper (e.g., preliminary or failed experiments that didn't make it into the paper). 
    \end{itemize}
    
\item {\bf Code Of Ethics}
    \item[] Question: Does the research conducted in the paper conform, in every respect, with the NeurIPS Code of Ethics \url{https://neurips.cc/public/EthicsGuidelines}?
    \item[] Answer: \answerYes{} 
    \item[] Justification: All the points mentioned in the NeurIPS Code of Ethics are taken into consideration.
    \item[] Guidelines:
    \begin{itemize}
        \item The answer NA means that the authors have not reviewed the NeurIPS Code of Ethics.
        \item If the authors answer No, they should explain the special circumstances that require a deviation from the Code of Ethics.
        \item The authors should make sure to preserve anonymity (e.g., if there is a special consideration due to laws or regulations in their jurisdiction).
    \end{itemize}

\item {\bf Broader Impacts}
    \item[] Question: Does the paper discuss both potential positive societal impacts and negative societal impacts of the work performed?
    \item[] Answer: \answerNA{} 
    \item[] Justification: There are no specific societal impacts of this work beyond that the algorithms may have impact in the reinforcement learning algorithms in different applications.
    \item[] Guidelines:
    \begin{itemize}
        \item The answer NA means that there is no societal impact of the work performed.
        \item If the authors answer NA or No, they should explain why their work has no societal impact or why the paper does not address societal impact.
        \item Examples of negative societal impacts include potential malicious or unintended uses (e.g., disinformation, generating fake profiles, surveillance), fairness considerations (e.g., deployment of technologies that could make decisions that unfairly impact specific groups), privacy considerations, and security considerations.
        \item The conference expects that many papers will be foundational research and not tied to particular applications, let alone deployments. However, if there is a direct path to any negative applications, the authors should point it out. For example, it is legitimate to point out that an improvement in the quality of generative models could be used to generate deepfakes for disinformation. On the other hand, it is not needed to point out that a generic algorithm for optimizing neural networks could enable people to train models that generate Deepfakes faster.
        \item The authors should consider possible harms that could arise when the technology is being used as intended and functioning correctly, harms that could arise when the technology is being used as intended but gives incorrect results, and harms following from (intentional or unintentional) misuse of the technology.
        \item If there are negative societal impacts, the authors could also discuss possible mitigation strategies (e.g., gated release of models, providing defenses in addition to attacks, mechanisms for monitoring misuse, mechanisms to monitor how a system learns from feedback over time, improving the efficiency and accessibility of ML).
    \end{itemize}
    
\item {\bf Safeguards}
    \item[] Question: Does the paper describe safeguards that have been put in place for responsible release of data or models that have a high risk for misuse (e.g., pretrained language models, image generators, or scraped datasets)?
    \item[] Answer: \answerNA{} 
    \item[] Justification: There is no release of data or models. 
    \item[] Guidelines:
    \begin{itemize}
        \item The answer NA means that the paper poses no such risks.
        \item Released models that have a high risk for misuse or dual-use should be released with necessary safeguards to allow for controlled use of the model, for example by requiring that users adhere to usage guidelines or restrictions to access the model or implementing safety filters. 
        \item Datasets that have been scraped from the Internet could pose safety risks. The authors should describe how they avoided releasing unsafe images.
        \item We recognize that providing effective safeguards is challenging, and many papers do not require this, but we encourage authors to take this into account and make a best faith effort.
    \end{itemize}

\item {\bf Licenses for existing assets}
    \item[] Question: Are the creators or original owners of assets (e.g., code, data, models), used in the paper, properly credited and are the license and terms of use explicitly mentioned and properly respected?
    \item[] Answer: \answerNA{} 
    \item[] Justification: There is no  code, data, models, etc. involved. 
    \item[] Guidelines:
    \begin{itemize}
        \item The answer NA means that the paper does not use existing assets.
        \item The authors should cite the original paper that produced the code package or dataset.
        \item The authors should state which version of the asset is used and, if possible, include a URL.
        \item The name of the license (e.g., CC-BY 4.0) should be included for each asset.
        \item For scraped data from a particular source (e.g., website), the copyright and terms of service of that source should be provided.
        \item If assets are released, the license, copyright information, and terms of use in the package should be provided. For popular datasets, \url{paperswithcode.com/datasets} has curated licenses for some datasets. Their licensing guide can help determine the license of a dataset.
        \item For existing datasets that are re-packaged, both the original license and the license of the derived asset (if it has changed) should be provided.
        \item If this information is not available online, the authors are encouraged to reach out to the asset's creators.
    \end{itemize}

\item {\bf New Assets}
    \item[] Question: Are new assets introduced in the paper well documented and is the documentation provided alongside the assets?
    \item[] Answer: \answerNA{} 
    \item[] Justification:  There is no  code, data, models, etc. involved in the paper. 
    \item[] Guidelines:
    \begin{itemize}
        \item The answer NA means that the paper does not release new assets.
        \item Researchers should communicate the details of the dataset/code/model as part of their submissions via structured templates. This includes details about training, license, limitations, etc. 
        \item The paper should discuss whether and how consent was obtained from people whose asset is used.
        \item At submission time, remember to anonymize your assets (if applicable). You can either create an anonymized URL or include an anonymized zip file.
    \end{itemize}

\item {\bf Crowdsourcing and Research with Human Subjects}
    \item[] Question: For crowdsourcing experiments and research with human subjects, does the paper include the full text of instructions given to participants and screenshots, if applicable, as well as details about compensation (if any)? 
    \item[] Answer: \answerNA{} 
    \item[] Justification: The paper does not involve crowdsourcing nor research with human subjects.
    \item[] Guidelines:
    \begin{itemize}
        \item The answer NA means that the paper does not involve crowdsourcing nor research with human subjects.
        \item Including this information in the supplemental material is fine, but if the main contribution of the paper involves human subjects, then as much detail as possible should be included in the main paper. 
        \item According to the NeurIPS Code of Ethics, workers involved in data collection, curation, or other labor should be paid at least the minimum wage in the country of the data collector. 
    \end{itemize}

\item {\bf Institutional Review Board (IRB) Approvals or Equivalent for Research with Human Subjects}
    \item[] Question: Does the paper describe potential risks incurred by study participants, whether such risks were disclosed to the subjects, and whether Institutional Review Board (IRB) approvals (or an equivalent approval/review based on the requirements of your country or institution) were obtained?
    \item[] Answer: \answerNA{} 
    \item[] Justification: The paper does neither involve crowd-sourcing nor research with human subjects.
    \item[] Guidelines:
    \begin{itemize}
        \item The answer NA means that the paper does not involve crowdsourcing nor research with human subjects.
        \item Depending on the country in which research is conducted, IRB approval (or equivalent) may be required for any human subjects research. If you obtained IRB approval, you should clearly state this in the paper. 
        \item We recognize that the procedures for this may vary significantly between institutions and locations, and we expect authors to adhere to the NeurIPS Code of Ethics and the guidelines for their institution. 
        \item For initial submissions, do not include any information that would break anonymity (if applicable), such as the institution conducting the review.
    \end{itemize}
    \item {\bf Declaration of LLM usage}
    \item[] Question: Does the paper describe the usage of LLMs if it is an important, original, or non-standard component of the core methods in this research? Note that if the LLM is used only for writing, editing, or formatting purposes and does not impact the core methodology, scientific rigorousness, or originality of the research, declaration is not required.
    \item[] Answer: \answerNA{} 
    \item[] Justification: We do not use LLM in our paper.
    \item[] Guidelines:
    \begin{itemize}
        \item The answer NA means that the core method development in this research does not involve LLMs as any important, original, or non-standard components.
        \item Please refer to our LLM policy (\url{https://neurips.cc/Conferences/2025/LLM}) for what should or should not be described.
    \end{itemize}
\end{enumerate}
\newpage
\setcounter{section}{0}
\renewcommand{\thesection}{\Alph{section}}
\vspace{-3cm}

\section{Discussion on the Discounted to Average Method}
\label{app:disc-avg}

We note that while there are several sample complexity analyses for discounted policy gradient methods, most treat the term $(1 - \gamma)^{-1}$ as a constant. As a result, most results exhibit an overly pessimistic dependence on this factor, since it is not the main focus of these works. However, achieving an optimal dependence on $(1 - \gamma)^{-1}$ is essential to obtain optimal regret or convergence guarantees when using the discounted-to-average reduction. This reduction is particularly sensitive to the $(1 - \gamma)^{-1}$ factor.

We now compare the sample complexity, a closely related notion to regret, of existing policy gradient algorithms when applied to the average-reward MDP (AMDP) setting. We focus on sample complexity since it aligns with the typical form of guarantees in the discounted setting and facilitates direct comparison. To translate results from the discounted MDP (DMDP) to the AMDP setting, we use Theorem 1 from \citep{wang2022near}, which was also employed in \citep{zurek2024spanbased} to establish optimal sample complexity for average-reward MDPs under a generative model. This reduction bound shows that if the DMDP with discount factor $\gamma$ is solved to an accuracy of $\epsilon_{\gamma} = \cO((1-\gamma)^{-1}\epsilon )$, the
resulting policy will be of $\cO(\epsilon )$ accuracy in the original AMDP. 

Table~\ref{table:disc_PG} summarizes the sample complexity bounds for discounted policy gradient methods. The best guarantee among these is $\cO(\epsilon^{-5})$, which remains significantly worse than the optimal $\cO(\epsilon^{-2})$ rate. This is akin to a regret bound of order $\cO(T^{4/5})$. Although some actor-critic methods, such as that of \citep{xu2020improved}, offer improved sample complexity guarantees in terms of $(1 - \gamma)^{-1}$, they assume ergodicity even in the discounted setting and are therefore not considered.

\begin{table*}[h]
	\caption{
Overview of global optimality convergence for discounted PG algorithms. DMDP Sample Complexity: Number of samples to achieve $J_{\gamma}^* - \E{J_{\gamma}(\theta_T)} \leq \frac{\varepsilon + \sqrt{\varepsilon_{\mathrm{bias}}}}{1-\gamma}$. AMDP Sample Complexity: Number of samples to achieve $J^* - \E{J(\theta_T)} \leq \varepsilon + \sqrt{\varepsilon_{\mathrm{bias}}}$. 
 }
    \label{table:disc_PG}
	\centering
	\begin{tabular}{|c|c|c|}
		\hline
          Algorithm & DMDP Sample Complexity & AMDP Sample Complexity  \\
		\hline \hline
		Vanilla-PG \citep{Vanilla_PL_Yuan_21}  & $(1-\gamma)^{-2} \varepsilon^{-3}$ & $ \varepsilon^{-5}$ \\
	\hline	
      Natural-PG \citep{liu2020improved}  & $(1-\gamma)^{-4}\varepsilon^{-3} $& $ \varepsilon^{-7}$\\
		\hline
		SRVR-PG \citep{liu2020improved}  & $ (1-\gamma)^{-2.5} \varepsilon^{-3}  $ & $ \varepsilon^{-5.5}$\\
		\hline
		STORM-PG \citep{ding-et-al22}  & $(1-\gamma)^{-9} \varepsilon^{-3}$& $ \varepsilon^{-12}$ \\
		\hline
		SCRN   
		\citep{Masiha_SCRN_KL}  & $(1-\gamma)^{-5} \varepsilon^{-2.5}$ & $ \varepsilon^{-7.5}$\\
		\hline
		N-PG-IGT \citep{fatkhullin2023stochastic}  & $(1-\gamma)^{-5}\varepsilon^{-2.5}$& $ \varepsilon^{-7.5}$  \\
        \hline
		  (N)-HARPG \citep{fatkhullin2023stochastic}  &   $(1-\gamma)^{-4} \varepsilon^{-2}$&   $ \varepsilon^{-6}$ \\
		\hline
		  ANPG \citep{mondal2024improved}  &   $(1-\gamma)^{-4} \varepsilon^{-2}$&   $ \varepsilon^{-6}$ \\
          \hline
    \end{tabular}
\end{table*}


\section{Policy Gradient Theorem}\label{sec:PG:proof}
\begin{lemma}[Policy Gradient theorem, \citep{sutton1999policy}] Let Assumption \ref{assump_mdp} hold and assume a differentiable policy parameterization. Then, the following holds for all parameters $\theta \in \Theta$.
   \begin{align}
       \nabla J(\theta) = \E_{(s,a) \sim \eta^{\pi_{\theta}}} [Q^{\pi_{\theta}}(s,a)\nabla \log \pi_{\theta}(a|s)]
   \end{align}
   \label{lem:PG}
\end{lemma}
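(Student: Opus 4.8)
The plan is to start from the average-reward Bellman equation \eqref{eq_bellman} together with the definition $V^{\pi_{\theta}}(s) = \sum_{a} \pi_{\theta}(a|s) Q^{\pi_{\theta}}(s,a)$, differentiate the value function with respect to $\theta$, and then average the resulting per-state identity against the stationary distribution $d^{\pi_{\theta}}$ so that the bootstrapping (next-state) terms telescope via the balance equation. The distribution $\eta^{\pi_{\theta}}$ in the statement is the occupancy measure $\nu^{\pi_{\theta}}(s,a) = d^{\pi_{\theta}}(s)\pi_{\theta}(a|s)$ defined earlier, and the $A^{\pi_{\theta}}$ form quoted in the main text follows from the $Q^{\pi_{\theta}}$ form by the standard baseline argument.

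First I would differentiate $V^{\pi_{\theta}}(s) = \sum_{a} \pi_{\theta}(a|s) Q^{\pi_{\theta}}(s,a)$ by the product rule, obtaining $\nabla V^{\pi_{\theta}}(s) = \sum_{a} \nabla \pi_{\theta}(a|s)\, Q^{\pi_{\theta}}(s,a) + \sum_{a} \pi_{\theta}(a|s)\, \nabla Q^{\pi_{\theta}}(s,a)$. Differentiating the Bellman equation \eqref{eq_bellman}, and using that $r(s,a)$ does not depend on $\theta$, gives $\nabla Q^{\pi_{\theta}}(s,a) = -\nabla J(\theta) + \sum_{s'} P(s'|s,a)\, \nabla V^{\pi_{\theta}}(s')$. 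Substituting this back and folding the transition terms into $P^{\pi_{\theta}}$ yields the per-state identity $\nabla V^{\pi_{\theta}}(s) = \sum_{a} \nabla \pi_{\theta}(a|s)\, Q^{\pi_{\theta}}(s,a) - \nabla J(\theta) + \sum_{s'} P^{\pi_{\theta}}(s,s')\, \nabla V^{\pi_{\theta}}(s')$, which I rearrange to isolate $\nabla J(\theta)$.

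Next I would multiply this identity by $d^{\pi_{\theta}}(s)$ and sum over $s$. The balance equation $(P^{\pi_{\theta}})^{\top} d^{\pi_{\theta}} = d^{\pi_{\theta}}$, valid under Assumption~\ref{assump_mdp} (see Remark~\ref{rem:unichain}), makes $\sum_{s} d^{\pi_{\theta}}(s) \sum_{s'} P^{\pi_{\theta}}(s,s')\, \nabla V^{\pi_{\theta}}(s')$ equal to $\sum_{s'} d^{\pi_{\theta}}(s')\, \nabla V^{\pi_{\theta}}(s')$, which exactly cancels the $-\sum_{s} d^{\pi_{\theta}}(s)\, \nabla V^{\pi_{\theta}}(s)$ contribution. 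Since $\sum_{s} d^{\pi_{\theta}}(s) = 1$, the $\nabla J(\theta)$ term survives, leaving $\nabla J(\theta) = \sum_{s} d^{\pi_{\theta}}(s) \sum_{a} \nabla \pi_{\theta}(a|s)\, Q^{\pi_{\theta}}(s,a)$. The log-derivative identity $\nabla \pi_{\theta}(a|s) = \pi_{\theta}(a|s)\, \nabla \log \pi_{\theta}(a|s)$ then rewrites the right-hand side as $\E_{(s,a) \sim \nu^{\pi_{\theta}}}[Q^{\pi_{\theta}}(s,a)\, \nabla \log \pi_{\theta}(a|s)]$, the claimed expression.

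The main obstacle is not the algebra but justifying the regularity that licenses it: I must argue that $J(\theta)$, $V^{\pi_{\theta}}$, and $Q^{\pi_{\theta}}$ are differentiable in $\theta$ and that the gradient and the (finite) sums may be interchanged. In the finite-state unichain setting these quantities solve a finite Poisson linear system, so differentiability follows from smoothness of $\theta \mapsto P^{\pi_{\theta}}$ together with the unique solvability guaranteed by Assumption~\ref{assump_mdp}. I would also note that $Q^{\pi_{\theta}}$ is only determined up to an additive constant $c$, but this ambiguity is harmless: replacing $Q^{\pi_{\theta}}$ by $Q^{\pi_{\theta}} + c$ changes the inner sum by $c \sum_{a} \nabla \pi_{\theta}(a|s) = c\, \nabla_{\theta} \sum_{a} \pi_{\theta}(a|s) = 0$, so the gradient expression is well-defined. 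The same observation, applied with the baseline $V^{\pi_{\theta}}(s)$ in place of $c$, replaces $Q^{\pi_{\theta}}$ by $A^{\pi_{\theta}}$ and recovers the advantage form used in the main text.
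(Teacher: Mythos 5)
Your proposal is correct and follows essentially the same route as the paper's proof in Appendix~\ref{sec:PG:proof}: differentiate $V^{\pi_{\theta}}(s)=\sum_a \pi_{\theta}(a|s)Q^{\pi_{\theta}}(s,a)$ and the Bellman equation, substitute, isolate $\nabla J(\theta)$, and average against $d^{\pi_{\theta}}$ so the $\nabla V^{\pi_{\theta}}$ terms cancel via stationarity. Your added remarks on differentiability and on the additive-constant ambiguity of $Q^{\pi_{\theta}}$ go beyond what the paper writes but do not change the argument.
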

\begin{proof}[Proof of Lemma \ref{lem:PG}]
We begin by noting that
\begin{align}
        V^{\pi_{\theta}}(s) = \sum_a \pi_{\theta}(a|s)Q^{\pi_{\theta}}(s,a).
    \end{align}
By differentiating both sides of the above equation with respect to $\theta$, we obtain  \begin{align}
\label{eq:diff-v-temp}
        \nabla V^{\pi_{\theta}}(s) = \sum_a \nabla \pi_{\theta}(a|s)Q^{\pi_{\theta}}(s,a) +   \pi_{\theta}(a|s) \nabla Q^{\pi_{\theta}}(s,a) 
    \end{align}
Recall that
\begin{align}
    Q^{\pi_{\theta}}(s,a)=r(s,a)+\sum_{s'}P(s'|s,a) V^{\pi_{\theta}}(s')-J(\theta)
\end{align}
In a similar manner to \eqref{eq:diff-v-temp}, taking the derivative of both sides of the above equation with respect to $\theta$ gives \begin{align}
\label{eq:diff-q-temp}
    \nabla Q^{\pi_{\theta}}(s,a)=\sum_{s'}P(s'|s,a) \nabla V^{\pi_{\theta}}(s')-\nabla J(\theta).
\end{align}
Substituting \eqref{eq:diff-q-temp} in \eqref{eq:diff-v-temp} yields
    \begin{align}
        \nabla V^{\pi_{\theta}}(s) = \sum_a \left[ \nabla \pi_{\theta}(a|s)Q^{\pi_{\theta}}(s,a) +   \pi_{\theta}(a|s) \left(\sum_{s'}P(s'|s,a) \nabla V^{\pi_{\theta}}(s')-\nabla J(\theta)\right)\right]
    \end{align}
Re-arranging the above equation, we arrive at the following expression for $\nabla J(\theta)$
    \begin{align}
       \nabla J(\theta) = \sum_a \left[ \nabla \pi_{\theta}(a|s)Q^{\pi_{\theta}}(s,a) +\sum_{s'}  \pi_{\theta}(a|s) P(s'|s,a) \nabla V^{\pi_{\theta}}(s')\right]-\nabla V^{\pi_{\theta}}(s) 
    \end{align}
Taking expectation over $s \sim d^{\pi_{\theta}}$ on both sides, we obtain
    \begin{align}
    \begin{split}
       \nabla J(\theta) &= \sum_{s,a}   \left[ d^{\pi_{\theta}}(s) \nabla \pi_{\theta}(a|s)Q^{\pi_{\theta}}(s,a) +\sum_{s'}   d^{\pi_{\theta}}(s)\pi_{\theta}(a|s)  P(s'|s,a) \nabla V^{\pi_{\theta}}(s')\right]\\
       &\quad -\sum_s d^{\pi_{\theta}}(s) \nabla V^{\pi_{\theta}}(s) 
    \end{split}
    \end{align}
Since $\sum_s  d^{\pi_{\theta}}(s)P^{\pi_{\theta}}(s,s')=d^{\pi_{\theta}}(s')$, we arrive at the following expression.
    \begin{align}
    \label{eq:grad-expression}
       \nabla J(\theta) = \sum_{s,a}   \left[ d^{\pi_{\theta}}(s) \nabla \pi_{\theta}(a|s)Q^{\pi_{\theta}}(s,a) \right]
    \end{align}
\end{proof}

\section{General Properties of Unichain Markov Decision Processes}
\label{app:unichain}

In this section, we provide detailed proofs of Lemmas \ref{lem:ergodic-avg} and \ref{lem:bias-variance-unichain} from the main paper, along with bounds on the value functions (Lemma \ref{lem:q-v-a-bound}) and show that the hitting time of the recurrent class remains bounded by $B$ with a high-probability (Lemma \ref{lem:prob-exponential}). We begin with a few useful preliminary results.

\begin{lemma}[Strong Markov Property, Theorem~1.2.5 of \citep{norris1998markov}]
Let \( \{s_t\}_{t \geq 0} \) be a discrete-time Markov chain on a countable state space with transition kernel \( P \). If \( T \) is a (possibly infinite) stopping time, then for all \( s \in \mathcal{S} \) and any bounded measurable function \( f \),
\[
\E \left[ f(s_{T + n}) \mid \mathcal{F}_T \right] = \E \left[ f(s_n) \mid s_T \right] \quad \text{a.s. on } \{ T < \infty \}, \quad \text{for all } n \geq 0.
\]
\end{lemma}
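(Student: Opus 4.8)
The plan is to establish the conditional-expectation identity by verifying the defining property of conditional expectation against arbitrary test sets in the stopped $\sigma$-algebra, after reducing the random time $T$ to a countable decomposition into deterministic times. Write $g(s) \coloneqq \E[f(s_n)\mid s_0 = s] = \sum_{s'} P^n(s,s') f(s')$, so that the right-hand side $\E[f(s_n)\mid s_T]$ is precisely the random variable $g(s_T)$, which is $\mathcal{F}_T$-measurable on $\{T<\infty\}$ since $s_T$ is $\mathcal{F}_T$-measurable there. The claim then reads $\E[f(s_{T+n})\mid\mathcal{F}_T] = g(s_T)$ a.s.\ on $\{T<\infty\}$, and it suffices to show that for every $A\in\mathcal{F}_T$,
\[
\E\!\left[f(s_{T+n})\,\mathbf{1}_{A\cap\{T<\infty\}}\right] = \E\!\left[g(s_T)\,\mathbf{1}_{A\cap\{T<\infty\}}\right].
\]

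First I would partition $\{T<\infty\} = \bigsqcup_{m\ge 0}\{T=m\}$ and use the defining stopping-time property $A\cap\{T=m\}\in\mathcal{F}_m$. Because $f$ is bounded, dominated convergence (equivalently Tonelli applied to the positive and negative parts) licenses interchanging the resulting countable sum with the expectation, giving
\[
\E\!\left[f(s_{T+n})\,\mathbf{1}_{A\cap\{T<\infty\}}\right] = \sum_{m\ge 0}\E\!\left[f(s_{m+n})\,\mathbf{1}_{A\cap\{T=m\}}\right].
\]
On each summand the conditioning event $A\cap\{T=m\}$ lies in $\mathcal{F}_m$, so I would invoke the ordinary (simple) Markov property at the deterministic time $m$, namely $\E[f(s_{m+n})\mid\mathcal{F}_m] = g(s_m)$, which follows directly from the transition-kernel definition of the chain together with the tower property. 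This replaces $f(s_{m+n})$ by $g(s_m)$ inside each expectation. Since $s_m = s_T$ on $\{T=m\}$, we have $g(s_m)\mathbf{1}_{\{T=m\}} = g(s_T)\mathbf{1}_{\{T=m\}}$, and reassembling the sum over $m$ recovers $\E[g(s_T)\,\mathbf{1}_{A\cap\{T<\infty\}}]$, which completes the verification and hence identifies $g(s_T)$ as the desired conditional expectation on $\{T<\infty\}$.

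The main obstacle—indeed essentially the only subtle point—is making the infinite decomposition rigorous: I must justify that $A\cap\{T=m\}\in\mathcal{F}_m$ (this is exactly the combination of $T$ being a stopping time and $A$ belonging to the stopped $\sigma$-algebra $\mathcal{F}_T$), and that the countable interchange of summation and integration is valid, for which the boundedness of $f$ in the hypothesis is precisely what makes dominated convergence applicable. The simple Markov property at deterministic times is treated as the primitive structural property defining the chain, so it requires no separate argument; everything else is careful bookkeeping over the value of $T$ on the event $\{T<\infty\}$.
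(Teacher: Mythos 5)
Your proof is correct, but note that the paper itself gives no proof of this lemma: it is imported verbatim as Theorem~1.2.5 of \citep{norris1998markov} and used as a black box. Your argument --- identify the right-hand side as $g(s_T)$ with $g(s)=\sum_{s'}P^n(s,s')f(s')$, verify the conditional-expectation identity against test sets $A\in\mathcal{F}_T$, decompose $\{T<\infty\}=\bigsqcup_m\{T=m\}$, apply the simple Markov property at each deterministic time $m$ using $A\cap\{T=m\}\in\mathcal{F}_m$, and reassemble via dominated convergence --- is exactly the standard textbook proof (and essentially the one in Norris), with the measurability of $g(s_T)\mathbf{1}_{\{T<\infty\}}$ and the interchange of sum and expectation correctly justified, so there is nothing to add.
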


The following intermediate result provides a bound on the convergence rate of the empirical state distribution to the stationary distribution when restricted to the recurrent class. 

\begin{lemma} 
\label{lem:rosenthal}
    Let Assumption \ref{assump_mdp} hold and consider any $\theta \in \Theta$. Then, the following bound holds:  
\begin{equation}
    \norm{\frac{1}{t} \sum_{i=1}^{t} (P^{\pi_{\theta}})^i(s_0, \cdot) - d^{\pi_{\theta}}(\cdot)}_{\TV} \leq \frac{C_{\mathrm{tar}}}{t}, \quad \forall s_0 \in \mathcal{S}_R^{\theta}.
\end{equation} 
\end{lemma}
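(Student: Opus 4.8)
The plan is to reduce to the irreducible chain on the recurrent class and then control the Cesàro average via the fundamental matrix together with a Poisson (telescoping) identity that, unlike the individual iterates $(P^{\pi_\theta})^i$, is insensitive to periodicity. First I would note that since $s_0 \in \mathcal{S}_R^\theta$ and recurrent classes are closed, the trajectory never leaves $\mathcal{S}_R^\theta$, so it suffices to study the restricted chain $P := P^{\pi_\theta}|_{\mathcal{S}_R^\theta}$, which is irreducible with unique stationary distribution $d := d^{\pi_\theta}$. Writing $D := \mathbf{1}\,d^\top$ for the rank-one matrix all of whose rows equal $d^\top$, I would introduce the fundamental matrix $Z := (I - P + D)^{-1}$. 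The key structural remark is that $Z$ is well-defined for \emph{every} finite irreducible chain, periodic or not: the all-ones vector is the only right eigendirection of $P$ fixed at eigenvalue $1$, and $D$ corrects precisely that direction, so $I - P + D$ is invertible even when periodicity places other eigenvalues of $P$ on the unit circle.

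Next I would use the dual characterization $\norm{\bar P_t(s_0,\cdot) - d}_{\TV} = \sup_{0 \le g \le 1}\big(\bar P_t(s_0,g) - \bar g\big)$, where $\bar P_t := \frac{1}{t}\sum_{i=1}^t P^i$ and $\bar g := d^\top g$. For a fixed test function $g$, the function $h := Zg$ solves the Poisson equation $(I-P)h = g - \bar g\,\mathbf{1}$, hence $P^i g - \bar g\,\mathbf{1} = P^i h - P^{i+1}h$, and summing over $i = 1,\dots,t$ telescopes to $\bar P_t(s_0,g) - \bar g = \frac{1}{t}\big[(Ph)(s_0) - (P^{t+1}h)(s_0)\big]$. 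Since $(Ph)(s_0)$ and $(P^{t+1}h)(s_0)$ are both expectations of $h$ under the one-step and $(t{+}1)$-step kernels, their difference is at most the oscillation $\mathrm{osc}(h) := \max_s h(s) - \min_s h(s)$, giving $\bar P_t(s_0,g) - \bar g \le \mathrm{osc}(h)/t$.

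The hard part will be the final step: showing $\mathrm{osc}(Zg) \le C_{\mathrm{tar}}$ uniformly over $0 \le g \le 1$, which is where $C_{\mathrm{tar}}$ must be made to appear. For this I would invoke the classical fundamental-matrix/hitting-time identity $Z_{ij} = Z_{jj} - d(j)\,\E_i^\theta[T_j]$, so that $Z_{is'} - Z_{js'} = d(s')\big(\E_j^\theta[T_{s'}] - \E_i^\theta[T_{s'}]\big)$ and therefore $h_i - h_j = \sum_{s'} d(s')\big(\E_j^\theta[T_{s'}] - \E_i^\theta[T_{s'}]\big)\,g_{s'}$. Bounding $g \in [0,1]$ by retaining only the positive contributions and discarding the nonpositive terms $-d(s')\E_i^\theta[T_{s'}]$ yields $h_i - h_j \le \sum_{s'} d(s')\,\E_j^\theta[T_{s'}] = C_{\mathrm{tar}}$, where the last equality is exactly the Random Target Lemma (Corollary 2.14 of \citep{aldous2002reversible}), which also guarantees independence from the reference state. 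By symmetry $\mathrm{osc}(h) \le C_{\mathrm{tar}}$, and taking the supremum over $g$ gives $\norm{\bar P_t(s_0,\cdot) - d}_{\TV} \le C_{\mathrm{tar}}/t$. The only genuine subtlety is periodicity, and it is dissolved by working with $Z$ and the telescoping identity rather than with $\lim_i P^i$, which need not exist.
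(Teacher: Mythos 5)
Your proof is correct, but it takes a genuinely different route from the paper. The paper's entire argument is two sentences: restrict to $\mathcal{S}_R^{\theta}$, observe that the restricted chain is irreducible, and cite Corollary~3 of \citep{Roberts01011997}, which supplies the $C_{\mathrm{tar}}/t$ bound for (possibly periodic) irreducible chains as a black box. You share the first step but then prove the quoted result from scratch: invertibility of $I-P+\mathbf{1}d^{\top}$ for irreducible (even periodic) chains via the simplicity of the Perron eigenvalue, the Poisson/telescoping identity reducing the Ces\`aro deviation to $\mathrm{osc}(Zg)/t$, the hitting-time identity $Z_{ij}=Z_{jj}-d(j)\,\E_i^{\theta}[T_j]$ (which indeed only needs Kac's formula and invertibility, so periodicity is harmless), and finally the Random Target Lemma of \citep{aldous2002reversible} to identify the oscillation bound with $C_{\mathrm{tar}}^{\theta}$. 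All of these steps check out, including the sign argument that discards the $-d(s')\E_i^{\theta}[T_{s'}]g_{s'}$ terms, and the dual characterization of $\TV$ is consistent with the paper's convention $\norm{\cdot}_{\TV}=\tfrac12\norm{\cdot}_1$. What your approach buys is a self-contained derivation that makes transparent \emph{why} $C_{\mathrm{tar}}$ is exactly the right constant (it is the oscillation of the Poisson solution), at the cost of about a page of classical Markov-chain machinery; the paper's citation-based proof buys brevity at the cost of opacity. Your argument is essentially the standard proof underlying the cited corollary, so the two are compatible, but yours is the more informative one to include if self-containment is desired.
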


\begin{proof}Since $\mathcal{S}_R^{\theta}$ is closed and communicating, the Markov chain restricted to $\mathcal{S}_R^{\theta}$ remains a Markov chain and is, in fact, irreducible. The result then follows from Corollary 3 of \citep{Roberts01011997}, which applies to (possibly periodic) irreducible Markov chains.
\end{proof}
\subsection{Proof of Lemma \ref{lem:ergodic-avg}}
Note that
\begin{align}
\label{eq:intermediate-erg-avg}
     \norm{\frac{1}{t} \sum_{i=1}^{t} (P^{\pi_{\theta}})^i(s_0, \cdot) - d^{\pi_{\theta}}(\cdot)}_{\TV} &= \frac{1}{2} \norm{\frac{1}{t} \sum_{i=1}^{t} (P^{\pi_{\theta}})^i(s_0, \cdot) - d^{\pi_{\theta}}(\cdot)}_{1} \nonumber \\
     &= \frac{1}{2t} \norm{\sum_{i=1}^{t} [(P^{\pi_{\theta}})^i(s_0, \cdot) - d^{\pi_{\theta}}(\cdot)]}_{1} 
\end{align}
  Thus, it suffices to show that the quantity $$ \norm{\sum_{i=1}^{t} [(P^{\pi_{\theta}})^i(s_0, \cdot) - d^{\pi_{\theta}}(\cdot)]}_{1}$$ is bounded by $C$, for all $t\geq 1$. Note that this term can be broken down in the following manner.
\begin{align}
     &\norm{\sum_{i=1}^{t} (P^{\pi_{\theta}})^i(s_0, \cdot) - d^{\pi_{\theta}}(\cdot)}_{1} \nonumber\\
    &=\sum_{s \in \mathcal{S}}\left|\sum_{i=1}^{t} (P^{\pi_{\theta}})^i(s_0,s) - d^{\pi_{\theta}}(s)\right| \nonumber\\
    &\overset{(a)}{=}\underbrace{\sum_{s \in \mathcal{S}_R^{\theta}}\left|\sum_{i=1}^{t} (P^{\pi_{\theta}})^i(s_0,s) - d^{\pi_{\theta}}(s)\right|}_{T_1}+ \underbrace{\sum_{s \in (\mathcal{S}_R^{\theta})^c}\sum_{i=1}^{t} (P^{\pi_{\theta}})^i(s_0,s)}_{T_2},
\end{align} 

where $(a)$ follows since $d^{\pi_{\theta}}(s)=0$ for all $s \in (\mathcal{S}_R^{\theta})^c$. Let $\E_{s_0}^{\theta}$ denote the expectation over the Markov chain generated from $\pi_{\theta}$ starting from state $s_0$.  Consider term $T_2$ and observe that
\begin{align}
\begin{split}
   \sum_{s \in (\mathcal{S}_R^{\theta})^c}\sum_{i=1}^{t} (P^{\pi_{\theta}})^i(s_0,s)&=\E_{s_0}^{\theta} \left[\sum_{i=1}^t \sum_{s \in (\mathcal{S}_R^{\theta})^c} \mathbf{1}(s_i = s)\right] = \E_{s_0}^{\theta} \left[\sum_{i=1}^{t} \mathbf{1}(s_i \in (\mathcal{S}_R^{\theta})^c) \right]\\
   &=\E_{s_0}^{\theta}[\min \{t,T_{\theta}-1\}] \leq \E_{s_0}^{\theta}[T_{\theta}] \leq  C_{\mathrm{hit}}.
\end{split}
\end{align}
We adopt the convention that the sum $\sum_{i=j}^k s_i$ is defined to be zero whenever $j > k$, regardless of the values of the summands $s_i$. The term $T_2$ can be bounded as follows. 
\begin{align*}
   &\sum_{s \in \mathcal{S}_R^{\theta}}\left|\sum_{i=1}^{t} [(P^{\pi_{\theta}})^i(s_0,s) - d^{\pi_{\theta}}(s)]\right|=  \sum_{s \in \mathcal{S}_R^{\theta}}\left|\E_{s_0}^{\theta}\left[\sum_{i=1}^{t} [\mathbf{1}(s_i=s) - d^{\pi_{\theta}}(s)] \right]\right|\\
   &\leq \E_{\theta}\left[\sum_{s \in \mathcal{S}_R^{\theta}}\left|\E_{T_{\theta}}\left[\sum_{i=1}^{t} [\mathbf{1}(s_i=s) - d^{\pi_{\theta}}(s)]\right]\right| \right]\\
   &\leq \E_{\theta}\Bigg[\underbrace{\sum_{s \in \mathcal{S}_R^{\theta}}\left|\E_{T_{\theta}}\left[\sum_{i=1}^{T_{\theta}-1} [\mathbf{1}(s_i=s) - d^{\pi_{\theta}}(s)] \right]\right|}_{T_3} +\underbrace{\sum_{s \in \mathcal{S}_R^{\theta}}\E_{T_{\theta}}\left[\sum_{i=T_{\theta}}^{t} [\mathbf{1}(s_i=s) - d^{\pi_{\theta}}(s)]\right] }_{T_4}\Bigg],
\end{align*} 
where $\E_{T_{\theta}}$ denotes the conditional expectation given the hitting time $T_{\theta}$, and $\E_{\theta}$ denotes the expectation over the random variable $T_{\theta}$. Finally, consider term $T_3$ 
\begin{align*}
    \E_\theta\left[\sum_{s \in \mathcal{S}_R^{\theta}}\left|\E_{T_\theta}\left[\sum_{i=1}^{T_\theta-1} [\mathbf{1}(s_i=s) - d^{\pi_{\theta}}(s)] \right]\right| \right] &\overset{(a)}{=} \E_\theta\left[\sum_{s \in \mathcal{S}_R^{\theta}}\E_{T_\theta}\left[\sum_{i=1}^{T_\theta-1}  d^{\pi_{\theta}}(s) \right]  \right] \leq\E_{\theta}[T_{\theta}]\leq C_{\mathrm{hit}},
\end{align*}
where $(a)$ follows since, by definition, no state $s \in \mathcal{S}_R^{\theta}$ is visited before $T_\theta$.
$T_4$ can be bounded as
\begin{align*}
    \E_\theta\left[\sum_{s \in \mathcal{S}_R^{\theta}}\left|\E_{T_\theta}\left[\sum_{i=T_{\theta}}^{t} [\mathbf{1}(s_i=s) - d^{\pi_{\theta}}(s)]\right]\right| \right]&\overset{(a)}{=}\E_\theta\left[\norm{\sum_{i=1}^{t-T_{\theta}+1}[(P^{\pi_{\theta}})^i(s_{T_{\theta}+1},s) - d^{\pi_{\theta}}(s)]}_1\right] \\
    &\overset{(b)}{\leq} 2 C_{\mathrm{tar}},
\end{align*}
where $(a)$ follows from the Strong Markov property and $(b)$ follows from the fact that $s_{T_{\theta}}\in \mathcal{S}_R^\theta$ and using Lemma~\ref{lem:rosenthal}. Combining the bounds on $T_1$, $T_2$, $T_3$, and $T_4$ gives
\begin{align}
\label{eq:l1-average}
     \norm{\sum_{i=1}^{t} [(P^{\pi_{\theta}})^i(s_0, \cdot) - d^{\pi_{\theta}}(\cdot)]}_{1}  \leq 2(C_{\mathrm{tar}}+C_{\mathrm{hit}}),
\end{align}

which yields the desired result using \eqref{eq:intermediate-erg-avg}. 

\subsection{Value function bounds}

Recall that the value function $V^{\pi}$ and the action-value function $Q^{\pi}$ are not uniquely defined for average-reward infinite-horizon MDPs, even in the ergodic case. We therefore consider $V^{\pi_{\theta}}$ such that

\begin{align}
\label{eq:v-cond}
\sum_{s \in \mathcal{S}} d^{\pi_{\theta}}(s) V^{\pi_{\theta}}(s) = 0.
\end{align}

Under this condition, $V^{\pi_{\theta}}(s)$ and $Q^{\pi_{\theta}}(s, a)$ are given by
\begin{align}
    \label{def_v_pi_theta_s}
    \begin{split}
        V^{\pi_{\theta}}(s) &=\lim_{N\to\infty}\frac{1}{N}\sum_{T=1}^N\E_{\theta}\left[\sum_{t=0}^{T}(r(s_t,a_t)-J(\theta))\bigg\vert s_0=s\right]\\
        Q^{\pi_{\theta}}(s,a)&=\lim_{N\to\infty}\frac{1}{N}\sum_{T=1}^N\E_{\theta}\left[\sum_{t=0}^{T}(r(s_t,a_t)-J(\theta))\bigg\vert s_0=s,a_0=a\right]
    \end{split}
\end{align}
where $\mathbb{E}_{\theta}[\cdot]$ denotes expectation with respect to trajectories generated by the policy $\pi_{\theta}$. These definitions employ the Cesàro limit to account for potential periodicity, which was absent in the ergodic case.

An equivalent expression for $V^{\pi_{\theta}}(s)$ is
\begin{align}
    \begin{split}
        V^{\pi_{\theta}}(s) &= \lim_{N\to\infty}\frac{1}{N}\sum_{T=1}^N\left[\sum_{t=0}^{T} \left[(P^{\pi_{\theta}})^t(s, \cdot) - d^{\pi_{\theta}}\right]^{\top}r^{\pi_{\theta}}\right].
    \end{split}
\end{align}

We now provide bounds for $V^{\pi}$, $Q^{\pi}$, and the advantage function $A^{\pi} = Q^{\pi} - V^{\pi}$ under the condition in \eqref{eq:v-cond}.

\begin{lemma}
\label{lem:q-v-a-bound}
For all $\theta \in \Theta$ and $(s,a)\in\mathcal{S}\times\mathcal{A}$, the following bounds hold:
\begin{enumerate}[label=(\roman*)]
    \item $V^{\pi_{\theta}}(s) \leq 2(C_{\mathrm{hit}}+C_{\mathrm{tar}})$
    \item $Q^{\pi_{\theta}}(s,a) \leq (1+2(C_{\mathrm{hit}}+C_{\mathrm{tar}}))$
    \item $A^{\pi_{\theta}}(s,a) \leq (1+4(C_{\mathrm{hit}}+C_{\mathrm{tar}}))$
\end{enumerate}

\end{lemma}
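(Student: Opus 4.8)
My plan is to bound $V^{\pi_{\theta}}$ first and then read off the bounds on $Q^{\pi_{\theta}}$ and $A^{\pi_{\theta}}$ as short consequences. The feature I would keep track of throughout is that the argument must produce a \emph{two-sided} bound on $V^{\pi_{\theta}}$: the advantage estimate needs a lower bound on $V^{\pi_{\theta}}(s)$ just as much as an upper bound on $Q^{\pi_{\theta}}(s,a)$, so I will insist on $|V^{\pi_{\theta}}(s)|\le 2C$ rather than a one-sided inequality.

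For the value function I would use the series representation given just before the statement,
$$V^{\pi_{\theta}}(s) = \lim_{N\to\infty}\frac{1}{N}\sum_{T=1}^{N}\sum_{t=0}^{T}\big[(P^{\pi_{\theta}})^t(s,\cdot) - d^{\pi_{\theta}}\big]^{\top} r^{\pi_{\theta}},$$
which is exactly the object Lemma~\ref{lem:ergodic-avg} was built to control. Since $r(s,a)\in[0,1]$ forces $r^{\pi_{\theta}}(s)\in[0,1]$, a coordinatewise Hölder bound gives $\big|\sum_{t=0}^{T}[(P^{\pi_{\theta}})^t(s,\cdot)-d^{\pi_{\theta}}]^{\top}r^{\pi_{\theta}}\big|\le \|\sum_{t=0}^{T}[(P^{\pi_{\theta}})^t(s,\cdot)-d^{\pi_{\theta}}]\|_1$. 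The key step is then the uniform-in-$T$ $\ell_1$ estimate \eqref{eq:l1-average} established inside the proof of Lemma~\ref{lem:ergodic-avg}, namely $\|\sum_{t=1}^{T}[(P^{\pi_{\theta}})^t(s,\cdot)-d^{\pi_{\theta}}]\|_1\le 2(C_{\mathrm{hit}}+C_{\mathrm{tar}})=2C$; the $t=0$ term contributes $\|(P^{\pi_{\theta}})^0(s,\cdot)-d^{\pi_{\theta}}\|_1\le 2$, which I would absorb into constants. Because every inner partial sum is bounded in modulus by $\cO(C)$, the Cesàro average and its limit inherit the bound, yielding $|V^{\pi_{\theta}}(s)|\le 2C$ for all $s$.

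The remaining two bounds are then immediate. For $Q^{\pi_{\theta}}$ I use the Bellman equation \eqref{eq_bellman}, $Q^{\pi_{\theta}}(s,a)=r(s,a)-J(\theta)+\E_{s'\sim P(\cdot|s,a)}[V^{\pi_{\theta}}(s')]$, and bound the three terms by $r(s,a)\le 1$, $-J(\theta)\le 0$, and $\E[V^{\pi_{\theta}}(s')]\le 2C$, so $Q^{\pi_{\theta}}(s,a)\le 1+2C$. For the advantage, $A^{\pi_{\theta}}(s,a)=Q^{\pi_{\theta}}(s,a)-V^{\pi_{\theta}}(s)\le(1+2C)-(-2C)=1+4C$, where the lower bound $V^{\pi_{\theta}}(s)\ge-2C$ is precisely the two-sidedness flagged above.

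The main obstacle is the bookkeeping for the constant: isolating the $t=0$ term of the deviation series and verifying that the $\ell_1$ estimate is genuinely uniform over all partial sums $T$, so that passing to the Cesàro limit is legitimate. A cleaner route that yields a sharper constant and sidesteps the deviation series entirely is a direct hitting-time argument. Using the normalization \eqref{eq:v-cond}, for $s\in\mathcal{S}_R^{\theta}$ one writes $V^{\pi_{\theta}}(s)=\sum_{s'}d^{\pi_{\theta}}(s')\,(V^{\pi_{\theta}}(s)-V^{\pi_{\theta}}(s'))$; a first-step (Poisson-equation) analysis identifies $V^{\pi_{\theta}}(s)-V^{\pi_{\theta}}(s')=\E_s^{\theta}[\sum_{t=0}^{T_{s'}-1}(r^{\pi_{\theta}}(s_t)-J(\theta))]$, whose modulus is at most $\E_s^{\theta}[T_{s'}]$ since $|r^{\pi_{\theta}}(s_t)-J(\theta)|\le 1$; summing against $d^{\pi_{\theta}}$ gives $|V^{\pi_{\theta}}(s)|\le\sum_{s'}d^{\pi_{\theta}}(s')\E_s^{\theta}[T_{s'}]=C_{\mathrm{tar}}^{\theta}\le C_{\mathrm{tar}}$. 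Extending to transient $s$ through the strong Markov property at $T_{\theta}$ adds at most $\E_s^{\theta}[T_{\theta}]\le C_{\mathrm{hit}}$, so $|V^{\pi_{\theta}}(s)|\le C_{\mathrm{hit}}+C_{\mathrm{tar}}=C$, comfortably inside the stated $2C$. Either route delivers the three bounds.
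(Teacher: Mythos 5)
Your primary argument is essentially the paper's own proof: the Cesàro series representation of $V^{\pi_\theta}$, a Hölder bound against $\norm{r^{\pi_\theta}}_\infty\le 1$, the uniform-in-$T$ $\ell_1$ estimate \eqref{eq:l1-average}, and then the Bellman equation for $Q^{\pi_\theta}$ and the difference for $A^{\pi_\theta}$. You are, however, more careful than the paper on two points that are worth recording. First, the deviation series in the value-function representation starts at $t=0$ while \eqref{eq:l1-average} only controls $\sum_{t\ge 1}$; the extra term $\norm{(P^{\pi_\theta})^0(s,\cdot)-d^{\pi_\theta}}_1\le 2$ means the honest constant from this route is $2+2C$ rather than the stated $2C$ — a harmless but real discrepancy that the paper's proof silently skips. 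Second, you correctly observe that part (iii) needs the lower bound $V^{\pi_\theta}(s)\ge -2C$ in addition to $Q^{\pi_\theta}\le 1+2C$; the Hölder step does deliver the two-sided bound $|V^{\pi_\theta}(s)|\le 2C$ even though the paper writes only the upper inequality, so this is a presentational gap in the paper that your proof closes explicitly. Your alternative hitting-time route (Poisson-equation identity $V^{\pi_\theta}(s)-V^{\pi_\theta}(s')=\E_s^\theta[\sum_{t=0}^{T_{s'}-1}(r^{\pi_\theta}(s_t)-J(\theta))]$, averaged against $d^{\pi_\theta}$, plus the strong Markov property at $T_\theta$ for transient states) is genuinely different from the paper, avoids the deviation series and the $t=0$ bookkeeping entirely, and yields the sharper bound $|V^{\pi_\theta}(s)|\le C_{\mathrm{hit}}+C_{\mathrm{tar}}$; its only cost is having to justify the hitting-time representation of the bias function on the recurrent class, which is standard for irreducible positive recurrent chains. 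Either route establishes the lemma.
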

\begin{proof}

Using Hölder's inequality and the bound on the reward function, we obtain the following
    \begin{align*}
       \left(\sum_{t=0}^{T}[(P^{\pi_{\theta}})^t(s,\cdot)-d^{\pi_{\theta}}]\right)^\top r^{\pi}\leq \norm{\sum_{t=0}^{T}[(P^{\pi_{\theta}})^t(s,\cdot)-d^{\pi_{\theta}}] }_1\|r^{\pi}\|_{\infty}\leq \norm{\sum_{t=0}^{T}[(P^{\pi_{\theta}})^t(s,\cdot)-d^{\pi_{\theta}} ]}_1.
     \end{align*}

Combining the above bound with \eqref{eq:l1-average} concludes $(i)$. Parts $(ii)$ and $(iii)$ now follow easily, since both $r(s, a)$ and, consequently, $J$ are confined to the interval $[0, 1]$, which gives
    \begin{align}
        Q^{\pi_\theta}(s,a) = r(s,a)-J^{\pi_\theta} + \E_{s' \sim P(\cdot|s,a)} [V^{\pi_\theta}(s')]\leq (1+2(C_{\mathrm{hit}}+C_{\mathrm{tar}}))
    \end{align}
and 
    \begin{align}
        A^{\pi_\theta}(s,a) = Q^{\pi_\theta}(s,a)-V^{\pi_\theta}(s) \leq (1+4(C_{\mathrm{hit}}+C_{\mathrm{tar}})).
    \end{align}
\end{proof}

\subsection{Proof of Lemma \ref{lem:bias-variance-unichain}}

For clarity and ease of reference, we re-state and prove parts $(i)$ and $(ii)$ of Lemma \ref{lem:bias-variance-unichain} separately. We also provide a sharper bound for the case where the Markov chain is restricted to the irreducible class.

\begin{lemma}[Bias of Markovian Sample Average]
\label{lem:bias-markov}
    Let Assumption~\ref{assump_mdp} hold, and let $f : \mathcal{S} \to \mathbb{R}^d$ satisfy $\|f(s)\| \leq C_f$ for all $s \in \mathcal{S}$, for some constant $C_f > 0$. Then, the following bound holds:
    \begin{align*}
   \norm{\E\left[\frac{1}{B}\sum_{i=1}^Bf(s_i)\right]-\mu}  \leq \frac{\sqrt{d}C_f C}{B} 
\end{align*}
where $\E$ denotes the expectation of the Markov chain $\{s_i\}$ induced by $\pi_{\theta}$ starting from any $s_0\in \mathcal{S}$ and $\mu = \E_{s \sim d^{\pi_{\theta}}}[f(s)]$. Furthermore, if $s_1\in \mathcal{S}_R^{\theta}$, then
 \begin{align*}
   \norm{\E\left[\frac{1}{B}\sum_{i=1}^Bf(s_i)\right]-\mu}  \leq \frac{\sqrt{d}C_f C_{\mathrm{tar}}}{B}. 
\end{align*}
\end{lemma}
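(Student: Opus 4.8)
The plan is to turn the bias into a single inner product between $f$ and the deviation of the \emph{time-averaged} state distribution from $d^{\pi_{\theta}}$, and then to feed that deviation into the already-established Lemma~\ref{lem:ergodic-avg}. For the chain started at the fixed state $s_0$ we have $\E[f(s_i)]=\sum_{s\in\mathcal{S}}(P^{\pi_{\theta}})^i(s_0,s)\,f(s)$, so by linearity of expectation,
\begin{align*}
\E\!\left[\frac1B\sum_{i=1}^B f(s_i)\right]-\mu
=\sum_{s\in\mathcal{S}} g_B(s)\,f(s),
\qquad
g_B(s):=\frac1B\sum_{i=1}^B (P^{\pi_{\theta}})^i(s_0,s)-d^{\pi_{\theta}}(s),
\end{align*}
where I have used $\mu=\sum_s d^{\pi_{\theta}}(s)f(s)$. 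Since both $\tfrac1B\sum_i(P^{\pi_{\theta}})^i(s_0,\cdot)$ and $d^{\pi_{\theta}}$ are probability vectors, $g_B$ is a signed measure of total mass zero, and Lemma~\ref{lem:ergodic-avg} supplies exactly the estimate I need, namely $\norm{g_B}_1=2\norm{g_B}_{\TV}\le 2C/B$.

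The remaining step is to pass this scalar $L_1$/TV control to the $\mathbb{R}^{d}$-valued quantity $\sum_s g_B(s)f(s)$. Writing $f=(f_1,\dots,f_d)$ and using $|f_j(s)|\le\norm{f(s)}\le C_f$, Hölder's inequality bounds each coordinate by $\big|\sum_s g_B(s)f_j(s)\big|\le C_f\,\norm{g_B}_1$. Squaring and summing over the $d$ coordinates, then taking square roots, produces the factor $\sqrt d$ and yields $\norm{\sum_s g_B(s)f(s)}\le \sqrt d\,C_f\,\norm{g_B}_1=\cO(\sqrt d\,C_f\,C/B)$, which is the first claim. (One may instead apply the triangle inequality directly, $\norm{\sum_s g_B(s)f(s)}\le C_f\norm{g_B}_1$, which even dispenses with the $\sqrt d$; I keep the coordinate-wise route since it matches the stated form.)

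For the sharper bound under $s_1\in\mathcal{S}_R^{\theta}$, I would use that the recurrent class is closed under $P^{\pi_{\theta}}$: once $s_1\in\mathcal{S}_R^{\theta}$, every later state stays in $\mathcal{S}_R^{\theta}$, so $(s_i)_{i\ge1}$ is a Markov chain on the \emph{irreducible} recurrent class started at $s_1$. Re-indexing $s_1$ as the initial state, the relevant time average $\tfrac1B\sum_{i=1}^B(P^{\pi_{\theta}})^{i-1}(s_1,\cdot)$ differs from $\tfrac1B\sum_{i=1}^B(P^{\pi_{\theta}})^i(s_1,\cdot)$ only through the two endpoint powers, whose $L_1$ discrepancy is at most $2/B$ and is absorbed into the constant. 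Invoking Lemma~\ref{lem:rosenthal} (valid precisely for starting states in $\mathcal{S}_R^{\theta}$) in place of Lemma~\ref{lem:ergodic-avg} then replaces $C$ by $C_{\mathrm{tar}}$ everywhere above, giving $\norm{\cdots}\le \sqrt d\,C_f\,C_{\mathrm{tar}}/B$.

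I expect the difficulty here to be bookkeeping rather than conceptual: the genuine work is entirely contained in the time-averaged mixing estimates of Lemmas~\ref{lem:ergodic-avg} and~\ref{lem:rosenthal}, and what remains is (i) correctly transferring a scalar TV bound to a vector-valued inner product, which is where the dimension factor $\sqrt d$ enters, and (ii) justifying the reduction to the recurrent class together with the harmless index shift in the second statement.
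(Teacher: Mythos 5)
Your proof is correct and follows essentially the same route as the paper's: express the bias coordinate-wise as an inner product with the deviation of the time-averaged distribution from $d^{\pi_{\theta}}$, control that deviation via Lemma~\ref{lem:ergodic-avg} (resp.\ Lemma~\ref{lem:rosenthal} when $s_1\in\mathcal{S}_R^{\theta}$), apply H\"older per coordinate with $\|f_j\|_\infty\le C_f$, and sum over the $d$ coordinates to pick up the $\sqrt d$. Your two side remarks --- that a direct triangle inequality removes the $\sqrt d$ entirely, and that the conditioning on $s_1$ introduces a harmless one-step index shift relative to Lemma~\ref{lem:rosenthal} --- are both valid and in fact slightly more careful than the paper's own write-up.
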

\begin{proof}
Let $f_j:\mathcal{S}\to \R$ denote the $j^{th}$ co-ordinate of $f$, i.e., $f(s)=[f_1(s),f_2(s),\cdots,f_d(s)]^\top$. We express the $j^{th}$ co-ordinate of $\E \left[\frac{1}{B}\sum_{i=1}^Bf(s_i)\right] $, $\E \left[\frac{1}{B}\sum_{i=1}^Bf_j(s_i)\right] $, as
\begin{align}
    \E \left[\frac{1}{B}\sum_{i=1}^Bf_j(s_i)\right] = p^\top \left[\frac{1}{B}\sum_{i=1}^B P^i\right]f_j,
\end{align}
 
Similarly, $\mu_j$ can be written as
\begin{align}
    \mu_j =\E_{s \sim d} [f_j(s)] = (d^{\pi_{\theta}})^\top f_j.
\end{align}
Then, by using Hölder's inequality for each $j$
\begin{align}
    \left|\E_{s_1\sim p}\left[\frac{1}{B}\sum_{i=1}^Bf_j(s_i)\right]-\mu_j\right|^2 = \norm{p^\top \left[\frac{1}{B}\sum_{i=1}^B P^i\right]-(d^{\pi_{\theta}})^\top}_1^2\|f_j\|_{\infty}^2 \overset{(a)}{\leq} \frac{C_f^2 C^2}{B^2},
\end{align}
where $(a)$ follows from Lemma~\ref{lem:ergodic-avg}. If $s_1 \in \mathcal{S}_R^{\theta}$, then Lemma~\ref{lem:rosenthal} can be applied instead, replacing $C$ with $C_{\mathrm{tar}}$. The result then follows by noting that
\begin{align}
     \norm{\E_{s_1\sim p}\left[\frac{1}{B}\sum_{i=1}^Bf(s_i)\right]-\mu}= \sqrt{\sum_{j=1}^d \left|\E_{s_1\sim p}\left[\frac{1}{B}\sum_{i=1}^Bf_j(s_i)\right]-\mu_j\right|^2}.
\end{align}
\end{proof}

\begin{lemma}[Variance of Markovian Sample Average]
\label{lem:variance-markov}
      Let Assumption~\ref{assump_mdp} hold, and let $f : \mathcal{S} \to \mathbb{R}^d$ satisfy $\|f(s)\| \leq C_f$ for all $s \in \mathcal{S}$, for some constant $C_f > 0$. Then, the following bounds hold:
    \begin{align*}
    \E \norm{\left[\frac{1}{B}\sum_{i=1}^Bf(s_i)\right]-\mu}^2 \leq \frac{C_f^2 + 2\sqrt{d}C_f^2C }{B}
\end{align*}
where $\E$ denotes the expectation of the Markov chain $\{s_i\}$ induced by $\pi_{\theta}$ starting from any $s_0\in \mathcal{S}$ and $\mu = \E_{s \sim d^{\pi_{\theta}}}[f(s)]$.  Furthermore, if $s_1\in \mathcal{S}_R^{\theta}$, then
 \begin{align*}
    \E \norm{\left[\frac{1}{B}\sum_{i=1}^Bf(s_i)\right]-\mu}^2 \leq \frac{C_f^2 + 2\sqrt{d}C_f^2C_{\mathrm{tar}} }{B}
\end{align*}
\end{lemma}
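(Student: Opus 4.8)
The plan is to expand the squared norm into a double sum over coordinates and time indices, then control the resulting diagonal and off-diagonal terms separately. Writing $\bar f = \frac{1}{B}\sum_{i=1}^B f(s_i)$, I would start from
\begin{align*}
\E\norm{\bar f - \mu}^2 = \frac{1}{B^2}\sum_{i=1}^B\sum_{j=1}^B \E\left[(f(s_i)-\mu)^\top(f(s_j)-\mu)\right].
\end{align*}
The diagonal terms ($i=j$) contribute at most $B\cdot(2C_f)^2/B^2 = O(C_f^2/B)$ since $\|f(s)-\mu\|\le 2C_f$; the main work is the off-diagonal terms, which I expect to dominate the analysis and form the central obstacle.

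For the off-diagonal terms I would exploit the Markov property to reduce a correlation $\E[(f(s_i)-\mu)^\top(f(s_j)-\mu)]$ with $i<j$ to a quantity controlled by the convergence of $(P^{\pi_\theta})^{j-i}$ toward $d^{\pi_\theta}$. Concretely, conditioning on $s_i$ and using the tower property, the inner expectation becomes $\E[(f(s_i)-\mu)^\top g_{j-i}(s_i)]$, where $g_{\ell}(s) = \E[f(s_{i+\ell})\mid s_i=s]-\mu = \sum_{s'}[(P^{\pi_\theta})^\ell(s,s')-d^{\pi_\theta}(s')]f(s')$. The key point is that summing these correlations over the time lag $\ell = j-i$ for fixed $i$ produces a \emph{telescoped} (time-averaged) deviation of the form $\sum_{\ell}[(P^{\pi_\theta})^\ell(s_i,\cdot)-d^{\pi_\theta}(\cdot)]$, which is exactly the object bounded by $\norm{\cdot}_1 \le 2C$ in \eqref{eq:l1-average}. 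This is where Lemma~\ref{lem:ergodic-avg} enters: rather than requiring geometric decay of each individual correlation (which fails under periodicity), I only need the cumulative sum of deviations to be $O(C)$, uniformly in the starting state.

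Assembling these pieces, the off-diagonal contribution is bounded by counting: for each fixed $i$, $\sum_{\ell\ge 1}|g_\ell(s_i)^\top(f(s_i)-\mu)| \le 2C_f \cdot C_f \cdot \sqrt{d}\,\norm{\frac{1}{\ell}\sum(P^\ell - d)}_{\mathrm{TV}}$-type control, giving a per-$i$ bound of order $\sqrt{d}\,C_f^2 C$, so that the full off-diagonal sum over $i\in\{1,\dots,B\}$ divided by $B^2$ yields $O(\sqrt{d}\,C_f^2 C / B)$. Adding the diagonal $O(C_f^2/B)$ term produces the stated bound $\frac{C_f^2 + 2\sqrt{d}C_f^2 C}{B}$. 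The furthermore clause, under $s_1\in\mathcal{S}_R^\theta$, is identical but invokes the sharper Lemma~\ref{lem:rosenthal} in place of Lemma~\ref{lem:ergodic-avg}, replacing $C$ by $C_{\mathrm{tar}}$ throughout. The main obstacle is bookkeeping the coordinate-wise Hölder estimates and the $\sqrt{d}$ factor cleanly while ensuring the cumulative-sum bound is applied with the correct (state-dependent) starting point after conditioning; care is needed so that the telescoping argument respects the Markov conditioning and the uniform-in-$s_0$ nature of \eqref{eq:l1-average}.
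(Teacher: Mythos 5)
Your proposal is correct and follows essentially the same route as the paper: expand the squared norm, bound the diagonal terms by $\mathcal{O}(C_f^2/B)$, and for the off-diagonal terms condition on $s_i$ so that the lag-sum of conditional deviations is controlled by the cumulative Cesàro bound of Lemma~\ref{lem:ergodic-avg} (the paper simply packages this step as an application of Lemma~\ref{lem:bias-markov} to the chain restarted at $s_i$, and uses Lemma~\ref{lem:rosenthal} for the $C_{\mathrm{tar}}$ refinement exactly as you do). One caution on your final paragraph: the absolute value must stay outside the sum over lags, i.e., you should bound $\bigl|(f(s_i)-\mu)^\top\sum_{\ell}g_\ell(s_i)\bigr|\le 2C_f\bigl\|\sum_{\ell}g_\ell(s_i)\bigr\|=\mathcal{O}(\sqrt{d}\,C_f^2C)$ rather than $\sum_{\ell}\bigl|g_\ell(s_i)^\top(f(s_i)-\mu)\bigr|$, since under periodicity the individual $g_\ell$ need not decay and the term-by-term sum of absolute values can grow linearly in $B$ --- your own preceding paragraph already states the correct (cumulative) version, so this is only a slip in the write-up.
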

Observe that
\begin{align}
\begin{split}
    \mathrm{Var}\left(\frac{1}{B}\sum_{i=1}^B f(s_i)\right) &= \E \norm{\frac{1}{B}\sum_{i=1}^B f(s_i)-\mu }^2\\
    &= \frac{\sum_{i=1}^B \E\|f(s_i)-\mu\|^2+2\sum_{i=1}^B\sum_{j=i+1}^B\E \langle f(s_i)-\mu, f(s_j)-\mu\rangle}{B^2} \\
    &= \frac{\sum_{i=1}^B \E\|f(s_i)-\mu\|^2+2\sum_{i=1}^B\E \langle f(s_i)-\mu, \E_i [\sum_{j=i+1}^B (f(s_j)-\mu)]\rangle}{B^2} \\
    &\leq \frac{\sum_{i=1}^B \E\|f(s_i)-\mu\|^2+2\sum_{i=1}^B \E\| f(s_i)-\mu\| \|\E_i [\sum_{j=i+1}^B (f(s_j)-\mu)]\|}{B^2} \\
    &\overset{(a)}{\leq} \frac{\sum_{i=1}^B \E\|f(s_i)-\mu\|^2+2\sum_{i=1}^B \sqrt{d}C_fC_{\mathrm{tar}}\E\| f(s_i)-\mu\| }{B^2}\\
    &\leq \frac{C_f^2 + 2\sqrt{d}C_f^2C}{B},
\end{split}
\end{align}
where $\E_i$ denotes the conditional expectation given $s_i$ and $(a)$ follows from Lemma~\ref{lem:bias-markov}. If $s_1 \in \mathcal{S}_R^{\theta}$, then $C$ can be replaced by $C_{\mathrm{tar}}$. 

\subsection{Bound on the Hitting time of the Recurrent Class}
\begin{lemma}
\label{lem:prob-exponential}
Let Assumption~\ref{assump_mdp} hold, and let $\{s_t\}_{t \geq 0}$ be the state sequence induced by policy $\pi_{\theta}$. Then, for any $B \geq 1$, the probability of hitting $\mathcal{S}_R^{\theta}$ within $B$ steps satisfies
\[
\Pr(T_{\theta} \leq B) \geq 1 - 2^{-\lfloor B / 2C_{\mathrm{hit}} \rfloor}.
\]
\end{lemma}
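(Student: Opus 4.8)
The plan is to reduce the statement to a geometric decay obtained by chaining over consecutive time-blocks, using the single fact that in any block of length $\approx 2C_{\mathrm{hit}}$ the chain enters $\mathcal{S}_R^{\theta}$ with probability at least $1/2$, uniformly over the current state. First I would dispose of the degenerate case $C_{\mathrm{hit}}=0$: then there are no transient states, so $T_{\theta}=0$ and the claim is immediate. Otherwise $C_{\mathrm{hit}}\geq 1$, since from any transient state $T_{\theta}\geq 1$ and hence $\E_s^{\theta}[T_{\theta}]\geq 1$.

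The key single-block estimate is a direct application of Markov's inequality. Setting the integer block length $L:=\lfloor 2C_{\mathrm{hit}}\rfloor\geq 1$, for every starting state $s'$,
\[
\Pr{}_{s'}(T_{\theta}>L)=\Pr{}_{s'}(T_{\theta}\geq L+1)\leq\frac{\E_{s'}^{\theta}[T_{\theta}]}{L+1}\leq\frac{C_{\mathrm{hit}}}{L+1}\leq\frac12,
\]
where the final inequality uses $L+1=\lfloor 2C_{\mathrm{hit}}\rfloor+1>2C_{\mathrm{hit}}$. Thus $\sup_{s'}\Pr_{s'}(T_{\theta}>L)\leq 1/2$ regardless of the (worst-case) entry state.

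Next I would chain this estimate across $m':=\lfloor B/L\rfloor$ disjoint blocks of length $L$ contained in $\{1,\dots,B\}$. Conditioning on $\mathcal{F}_{kL}$ at the deterministic time $kL$, the ordinary Markov property gives, on the event $\{T_{\theta}>kL\}$ (so that $s_{kL}\notin\mathcal{S}_R^{\theta}$),
\[
\Pr\big(T_{\theta}>(k+1)L\,\big|\,\mathcal{F}_{kL}\big)=\Pr{}_{s_{kL}}(T_{\theta}>L)\leq\tfrac12,
\]
so that $\Pr_s(T_{\theta}>(k+1)L)\leq\tfrac12\Pr_s(T_{\theta}>kL)$. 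Iterating from $\Pr_s(T_{\theta}>0)\leq 1$ yields $\Pr_s(T_{\theta}>m'L)\leq 2^{-m'}$.

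Finally, since $m'L\leq B$ we have $\{T_{\theta}>B\}\subseteq\{T_{\theta}>m'L\}$, and since $L\leq 2C_{\mathrm{hit}}$ we have $m'=\lfloor B/L\rfloor\geq\lfloor B/2C_{\mathrm{hit}}\rfloor$; combining these gives $\Pr(T_{\theta}>B)\leq 2^{-m'}\leq 2^{-\lfloor B/2C_{\mathrm{hit}}\rfloor}$, and taking complements proves the claim. I expect no serious obstacle here: the only real care is the bookkeeping with the floor functions and confirming the per-block failure probability stays at most $1/2$ uniformly over the entry state. The Markov-property step is routine because the block boundaries are \emph{deterministic} times, so the strong Markov property (needed elsewhere at the random time $T_{\theta}$) is not required for this lemma.
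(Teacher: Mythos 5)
Your proof is correct and follows essentially the same route as the paper's: a single-block bound $\Pr(T_{\theta}>2C_{\mathrm{hit}})\leq 1/2$ via Markov's inequality, then geometric decay by chaining the Markov property over $\lfloor B/2C_{\mathrm{hit}}\rfloor$ disjoint blocks. Your version is in fact slightly more careful than the paper's, since you handle the degenerate case $C_{\mathrm{hit}}=0$ and the integrality of the block length explicitly.
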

\begin{proof}

By Markov's inequality,  
\[
\Pr(T_\theta > 2C_{\mathrm{hit}}) \leq \frac{\E[T_\theta]}{2C_{\mathrm{hit}}} \leq \frac{1}{2},
\]  
so \( \Pr(T_\theta \leq 2C_{\mathrm{hit}}) \geq \frac{1}{2} \).  

Now partition the time horizon \( [0, B] \) into \( N := \lfloor B / 2C_{\mathrm{hit}} \rfloor \) disjoint intervals of length \( 2C_{\mathrm{hit}} \). By the Markov property and the uniform bound on the expected hitting time, the probability of not hitting \( \mathcal{S}_R^{\theta} \) in each interval is at most \( \frac{1}{2} \), regardless of the starting state. Therefore,  
\[
\Pr(T_\theta > B) \leq \left(\frac{1}{2}\right)^N = 2^{-\lfloor B / 2C_{\mathrm{hit}} \rfloor}.
\]

Taking the complement gives the desired bound:
\[
\Pr(T_\theta \leq B) \geq 1 - 2^{-\lfloor B / 2C_{\mathrm{hit}} \rfloor}.
\]
\end{proof}

\section{Analysis of a General Linear Recursion}
\label{app:general_linear}

Recall that the Critic and NPG subroutines in the algorithm can be viewed as instances of stochastic linear recursions. To analyze their behavior in a unified framework, we consider a general stochastic linear recursion that captures the essential structure of both subroutines. This approach allows us to derive bounds that apply to each case as a special instance. Specifically, we study the recursion of the form:
\begin{align}\label{eq:xt}
x_{h+1} = x_h - \bar{\beta} (\widehat{P}_h x_h - \widehat{q}_h),
\end{align}
where $\widehat{P}_h$ and $\widehat{q}_h$ are noisy estimates of $P \in \mathbb{R}^{n \times n}$ and $q \in \mathbb{R}^n$, respectively, and $h \in \{0, \dots, H-1\}$. Assume that the following bounds hold for all $h$:
\[
(A_1)\quad \E_h\left[\|\widehat{P}_h - P\|^2\right] \leq \sigma_P^2, \quad(A_2)\quad \|\E_h\left[\widehat{P}_h\right] - P\|^2 \leq \delta_P^2,
\]
\[
(A_3)\quad\E_h\left[\|\widehat{q}_h - q\|^2\right] \leq \sigma_q^2, \quad(A_4)\quad \|\E_h\left[\widehat{q}_h\right] - q\|^2 \leq \delta_q^2,
\]
and
\[
(A_5)\quad\|\E\left[\widehat{q}_h\right] - q\|^2 \leq \bar{\delta}_q^2.
\]
Here, $\E_h$ denotes the conditional expectation given the history up to step $h$. Since $\E[\widehat{q}_h] = \E[\E_h[\widehat{q}_h]]$, we have $\bar{\delta}_q^2 \leq \delta_q^2$. Additionally, assume:
\[
(A_6)\quad\|P\| \leq \Lambda_P, \quad(A_7)\quad  \|q\| \leq \Lambda_q,
\]
and
\[(A_8)\quad x^\top P x \geq \lambda_P \|x\|^2,\quad (A_9)\quad \ker( P) \subseteq \ker( \widehat{P}_h)  \]
for all $x \in \ker(P)^\perp$ and $h\geq1$. Let $x^* = P^\dagger q$, where $P^\dagger$ is the Moore-Penrose pseudoinverse of $P$ and $\Pi$ denote the projection operator onto the space $\ker(P)^{\perp}$, i.e., the orthogonal complement of the kernel (null space) of $P$. 

While a general linear analysis was conducted in \citep{ganesh2024order}, our approach differs significantly due to the fact that the critic matrix is not strictly positive definite. This necessitates the additional condition $(A_9)$, which was not required in the ergodic setting. Despite the lack of strict positive definiteness, we still achieve the same convergence rate.

\begin{theorem}
\label{thm_2}
Consider the recursion \eqref{eq:xt} and suppose $(A_1)-(A_9)$ hold. Further, let $\delta_P \leq \lambda_P / 8$ and $\bar{\beta} = \frac{\lambda_P}{\Lambda_P}$. Then, the following bounds hold for all $H\geq 1$:
\begin{align*}
(a)\quad\E\left[\|\Pi(x_H - x^*)\|^2\right] \leq &\cO\Bigg(\exp\left(-\tfrac{H \lambda_P^2}{4\Lambda_q}\right)R_0^2 +  \sigma_P^2 \lambda_P^{-2} \Lambda_q^2\Lambda_P^{-1} + \sigma_q^2 \Lambda_P^{-1}+ \delta_P^2 \lambda_P^{-4} \Lambda_q^2 +\lambda_P^{-2} \delta_q^2\Bigg).
\end{align*}
and
\begin{align*}
(b)\quad \|\Pi(\E[x_H] - x^*)\|^2 \leq \cO\Bigg(\exp\left(-\tfrac{H \lambda_P^2}{4\Lambda_q}\right) R_0^2 + \delta_P^2R_0^2\lambda_P^{-2}+\delta_P^2\Lambda_P^2\lambda_P^{-4}+\bar{\delta}_q^2\lambda_P^{-2}\Bigg),
\end{align*}
where $R_0=\|\Pi(x_0-x_*)\|^2$
\end{theorem}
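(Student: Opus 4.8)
The plan is to read \eqref{eq:xt} as a stochastic-approximation recursion and track only the projected error $e_h := \Pi(x_h - x^*)$. The first move is to show that this error obeys a \emph{closed} recursion on $\mathrm{Ker}(P)_\perp$: since $x_h - \Pi x_h \in \mathrm{Ker}(P)$, condition $(A_9)$ gives $\widehat P_h x_h = \widehat P_h \Pi x_h$, so after applying $\Pi$ the update becomes $e_{h+1} = e_h - \bar\beta\,\Pi\widehat P_h e_h - \bar\beta\,\Pi\epsilon_h$, where $\epsilon_h := \widehat P_h x^* - \widehat q_h$. Because $x^* = P^\dagger q$ and the systems in our instantiations are consistent ($q\in\mathrm{Range}(P)$, so $Px^*=q$), the residual $Px^*-q$ vanishes and $\Pi\epsilon_h = \Pi(\widehat P_h - P)x^* + \Pi(q - \widehat q_h)$ is a pure noise-plus-bias term, controlled through $(A_1)$--$(A_5)$ together with $\|x^*\| \le \Lambda_q/\lambda_P$ (which follows since $(A_8)$ forces the smallest nonzero singular value of $P$ to be at least $\lambda_P$, and $(A_7)$ bounds $\|q\|$). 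The coercivity $(A_8)$ transfers to the projected mean operator: for $e \in \mathrm{Ker}(P)_\perp$, $e^\top \Pi P e = e^\top P e \ge \lambda_P\|e\|^2$, and this is the contraction engine.

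\textbf{Part (a): second moment.} I would expand $\E_h\|e_{h+1}\|^2$ into the descent term $-2\bar\beta\,e_h^\top \Pi\E_h[\widehat P_h]\, e_h$, the quadratic term $\bar\beta^2\E_h\|\Pi\widehat P_h e_h\|^2$, and the noise terms. Writing $\E_h[\widehat P_h] = P + (\E_h[\widehat P_h]-P)$ and using $(A_8)$ for the leading part and $(A_2)$ for the remainder yields a descent of order $-2\bar\beta(\lambda_P - \delta_P)\|e_h\|^2$; the quadratic term is bounded via $\E_h\|\widehat P_h\|^2 \le 2\Lambda_P^2 + 2\sigma_P^2$; and the cross term with $\Pi\epsilon_h$ is split by Young's inequality so that its \emph{conditional-bias} part ($\delta_P,\delta_q$) is absorbed into the $\|e_h\|^2$ coefficient while its \emph{variance} part ($\sigma_P,\sigma_q$) enters additively. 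With the prescribed $\bar\beta = \lambda_P/\Lambda_P$ and the margin $\delta_P \le \lambda_P/8$, these combine into a one-step inequality $\E_h\|e_{h+1}\|^2 \le (1 - \Theta(\bar\beta\lambda_P))\|e_h\|^2 + \bar\beta^2 V + \bar\beta D$, with variance budget $V$ of order $\sigma_P^2\|x^*\|^2 + \sigma_q^2$ and bias budget $D$ of order $\lambda_P^{-1}(\delta_P^2\|x^*\|^2 + \delta_q^2)$. Taking total expectations and unrolling the geometric recursion over $H$ steps, the homogeneous part contributes the exponential factor and the inhomogeneous part sums to $V/\lambda_P + D/\lambda_P$; substituting $\|x^*\|^2 \le \Lambda_q^2/\lambda_P^2$ and $\bar\beta = \lambda_P/\Lambda_P$ reproduces each of the four noise terms in (a).

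\textbf{Part (b): bias of the iterate.} Here I track the deterministic quantity $\bar e_h := \E[e_h] = \Pi(\E[x_h]-x^*)$. Taking expectations of the $e_h$-recursion gives $\bar e_{h+1} = (I - \bar\beta\,\Pi P)\bar e_h - \bar\beta\,\E[\Pi(\E_h[\widehat P_h]-P)e_h] - \bar\beta\,\Pi(\E[\widehat q_h]-q)$. The operator $I - \bar\beta\,\Pi P$ contracts on $\mathrm{Ker}(P)_\perp$ with factor $1 - \Theta(\bar\beta\lambda_P)$ by $(A_8)$, and the $q$-bias term is bounded by $\bar\beta\,\bar\delta_q$ using $(A_5)$ (this is exactly why the \emph{unconditional} bias $\bar\delta_q$, not $\delta_q$, appears). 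The delicate term is the correlation $\E[(\E_h[\widehat P_h]-P)e_h]$: it does not factor into a product of expectations, so I bound it by $\delta_P\,\E\|e_h\|$ via $(A_2)$ and Cauchy--Schwarz, and then invoke the second-moment bound of part (a) through $\E\|e_h\| \le (\E\|e_h\|^2)^{1/2}$. Unrolling, the homogeneous part again gives the exponential term, the $q$-bias produces $\bar\delta_q^2\lambda_P^{-2}$, and the $\delta_P\,\E\|e_h\|$ term---driven by the two pieces of the part-(a) estimate (the $R_0$ transient and the stationary $\Lambda_P$-scale)---produces $\delta_P^2 R_0^2\lambda_P^{-2}$ and $\delta_P^2\Lambda_P^2\lambda_P^{-4}$.

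\textbf{Main obstacle.} The crux is that $P$ is singular and non-symmetric, with only the restricted coercivity $(A_8)$ available. Two points need care. First, one must ensure the iteration never excites the kernel direction, which is precisely what $(A_9)$ buys and why $e_h$ satisfies a closed recursion on $\mathrm{Ker}(P)_\perp$; without it the projected dynamics would not decouple. Second, in part (b) the conditional bias $\delta_P$ couples the random matrix $\widehat P_h$ to the random error $e_h$, so the mean recursion is \emph{not} self-contained and must borrow the second-moment control from part (a)---getting this coupling tight is what forces the $\Lambda_P$- and $R_0$-dependent terms. The remaining hurdle is bookkeeping: calibrating the Young's-inequality split and the step size so that the $\delta_P \le \lambda_P/8$ slack keeps the net per-step factor strictly below one while matching the stated constants.
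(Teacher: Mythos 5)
Your proposal is correct and follows essentially the same route as the paper's proof: project the recursion onto $\mathrm{Ker}(P)_\perp$ using $(A_9)$ to close it on that subspace, use the restricted coercivity $(A_8)$ (together with $\|x^*\|\le \Lambda_q/\lambda_P$ from the pseudoinverse bound) as the contraction engine, split the cross term by Young's inequality so that conditional bias is absorbed into the contraction factor and variance enters additively, and unroll. Your handling of part (b) — bounding the correlation $\E[(\E_h[\widehat P_h]-P)e_h]$ via $(A_2)$ and borrowing the second-moment control from part (a) — is exactly the mechanism behind the paper's $\delta_P^2 R_0^2$ term in its bound on $\|\Pi\,\E[M_{h+1}]\|^2$.
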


\subsection{Proof of Theorem~\ref{thm_2}\textit{(a)}}

Before proceeding, we provide some useful results with their proofs.
\begin{lemma}
Consider a matrix $ A $ such that $ x^\top A x \geq \lambda\|x\|^2 $ for all $ x \in (\ker(A))^\perp $. Then, we have
$$\|A^\dagger\| \leq \frac{1}{\lambda}$$
\end{lemma}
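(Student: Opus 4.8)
The plan is to reduce the operator-norm bound on the pseudoinverse to a lower bound on the smallest nonzero singular value of $A$, which itself follows from the restricted coercivity hypothesis through a single application of Cauchy--Schwarz. The whole argument is short; the only thing to be careful about is that $A$ is not assumed symmetric.

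First I would turn the quadratic-form hypothesis into a lower bound on $\|Ax\|$. For any $x \in (\mathrm{Ker}(A))_\perp$, Cauchy--Schwarz gives $\lambda\|x\|^2 \le x^\top A x \le \|x\|\,\|Ax\|$, and dividing by $\|x\|$ yields $\|Ax\| \ge \lambda\|x\|$. This step uses the hypothesis exactly as stated: $x^\top A x$ depends only on the symmetric part of $A$, but that is fine because the quantity we ultimately control is $\|Ax\|$, not any eigenvalue of a symmetrization.

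Next I would invoke the standard Moore--Penrose identities $\mathrm{Im}(A^\dagger) = (\mathrm{Ker}(A))_\perp$ and $A A^\dagger = \Pi_{\mathrm{Im}(A)}$, the orthogonal projector onto the range of $A$. Fixing an arbitrary $y$ and setting $x := A^\dagger y$, we have $x \in (\mathrm{Ker}(A))_\perp$ and $Ax = A A^\dagger y = \Pi_{\mathrm{Im}(A)} y$. Since orthogonal projections are nonexpansive, $\|Ax\| \le \|y\|$. Combining this with the first step gives $\lambda\|A^\dagger y\| = \lambda\|x\| \le \|Ax\| \le \|y\|$, so $\|A^\dagger y\| \le \|y\|/\lambda$. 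Taking the supremum over unit vectors $y$ yields $\|A^\dagger\| \le 1/\lambda$, as claimed. (Equivalently, one may phrase the second half in singular-value terms: the bound $\|Ax\|\ge\lambda\|x\|$ on $(\mathrm{Ker}(A))_\perp$ shows the smallest nonzero singular value satisfies $\sigma_{\min}^+(A)\ge\lambda$, and since $\|A^\dagger\|=1/\sigma_{\min}^+(A)$ the conclusion is immediate.)

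I do not expect a genuine obstacle here, only one point requiring care: the non-symmetry of $A$. One must apply the coercivity estimate directly to $x^\top A x$ and use the projection identity $AA^\dagger=\Pi_{\mathrm{Im}(A)}$ (rather than $A^\dagger A$), so that $Ax$ lands in $\mathrm{Im}(A)$ where nonexpansiveness of the projector is available; the argument never needs a spectral decomposition of $A$ itself.
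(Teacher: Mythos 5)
Your proof is correct, and it takes a genuinely different (and more elementary) route than the paper's. The paper first reduces the claim to a lower bound on the smallest nonzero singular value via $\|A^\dagger\| = 1/\sigma_r$ (citing Golub--Van Loan), then symmetrizes the quadratic form to conclude that the relevant eigenvalues of $\tfrac{1}{2}(A+A^\top)$ are at least $\lambda$, and finally transfers this to the singular values of $A$ through a majorization inequality (Prop.~III.5.1 of Bhatia). You instead extract the operative inequality $\|Ax\| \geq \lambda\|x\|$ on $(\mathrm{Ker}(A))_\perp$ directly from Cauchy--Schwarz applied to $x^\top A x = x^\top(Ax)$, and then close the argument using only the defining projector identities of the Moore--Penrose inverse, namely $\mathrm{Im}(A^\dagger) = (\mathrm{Ker}(A))_\perp$ and $AA^\dagger = \Pi_{\mathrm{Im}(A)}$ together with nonexpansiveness of orthogonal projections. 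What your approach buys is self-containedness: it needs no external matrix-analysis references, and it avoids the slightly delicate point in the paper's symmetrization step that $\mathrm{Ker}\bigl(\tfrac{1}{2}(A+A^\top)\bigr)$ need not coincide with $\mathrm{Ker}(A)$ (so "all nonzero eigenvalues of the symmetric part" requires a Courant--Fischer count over the $\mathrm{rank}(A)$-dimensional subspace rather than a literal reading). The paper's route, in exchange, yields the sharper structural statement $\sigma_r \geq \lambda$ as an explicit intermediate fact, though your parenthetical singular-value rephrasing recovers the same conclusion.
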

\begin{proof}
It is known that the operator norm of $ A^\dagger $ is $ \|A^\dagger\| = 1 / \sigma_r $, where $ \sigma_r $ is the smallest nonzero singular value of $ A $ (see Theorem 5.5.1 in \cite{golub2013matrix}).

Note that $ x^\top A x = x^\top A^\top x  $, which in turn implies $ x^\top A x = (1/2) x^\top (A+A^\top) x  $.
The condition $ x^\top A x \geq \lambda \|x\|^2 $ implies that $x^\top \frac{(A+A^\top)}{2} x    \geq \lambda \|x\|^2 $ for all $ x \in (\ker(A))^\perp $. Since $\frac{(A+A^\top)}{2}$ is a symmetric matrix, this implies that all non-zero eigenvalues of $\frac{(A+A^\top)}{2}$ are bounded below by $\lambda$. Now using Prop. III.5.1 in \cite{bhatia1997matrix}, it follows that $ \sigma_r \geq \lambda$ which concludes the proof.

\end{proof}

\begin{lemma}
\label{lem:proj}
Let $\Pi$ denote the projection operator onto the space $\ker(P)^{\perp}$, i.e., the orthogonal complement of the kernel (null space) of $P$. Then the following properties hold:
\begin{enumerate}
    \item[(i)] $\Pi^\top \Pi = \Pi^2 = \Pi$,
    \item[(ii)] If $\widetilde{P}$ is such that $\ker(P) \subseteq \ker(\widetilde{P})$, then $\widetilde{P}\Pi = \widetilde{P}$.
\end{enumerate}
\end{lemma}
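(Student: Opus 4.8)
The plan is to derive both claims from the two characterizing properties of an orthogonal projection—symmetry and idempotence—which themselves follow from the orthogonal decomposition $\mathbb{R}^n = \mathrm{Ker}(P) \oplus \mathrm{Ker}(P)_\perp$, and then to invoke the kernel inclusion for part (ii). Throughout I would write an arbitrary vector $x$ as $x = x_\parallel + x_\perp$ with $x_\parallel \in \mathrm{Ker}(P)_\perp$ and $x_\perp \in \mathrm{Ker}(P)$, so that by definition $\Pi x = x_\parallel$.

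For part (i), idempotence is immediate: since $x_\parallel \in \mathrm{Ker}(P)_\perp$ is fixed by $\Pi$, we get $\Pi^2 x = \Pi x_\parallel = x_\parallel = \Pi x$, hence $\Pi^2 = \Pi$. For symmetry $\Pi^\top = \Pi$, I would evaluate the bilinear form for arbitrary $x,y$, writing $y = y_\parallel + y_\perp$ as well. Using the orthogonality relations $\langle x_\parallel, y_\perp\rangle = \langle x_\perp, y_\parallel\rangle = 0$, both $\langle \Pi x, y\rangle$ and $\langle x, \Pi y\rangle$ collapse to $\langle x_\parallel, y_\parallel\rangle$. Since this identity holds for all $x,y$, we conclude $\Pi^\top = \Pi$, and combining the two facts gives $\Pi^\top \Pi = \Pi\Pi = \Pi$.

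For part (ii), I would decompose $x = \Pi x + (I-\Pi)x$ and observe that $I - \Pi$ is precisely the orthogonal projection onto $\mathrm{Ker}(P)$, so $(I-\Pi)x \in \mathrm{Ker}(P)$. The hypothesis $\mathrm{Ker}(P) \subseteq \mathrm{Ker}(\widetilde{P})$ then forces $\widetilde{P}(I-\Pi)x = 0$, whence $\widetilde{P}x = \widetilde{P}\Pi x$. As this holds for every $x$, the operator identity $\widetilde{P} = \widetilde{P}\Pi$ follows.

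These are standard linear-algebra facts, so I do not anticipate any substantive obstacle. The only points requiring mild care are correctly identifying $I - \Pi$ as the orthogonal projection onto $\mathrm{Ker}(P)$ (rather than onto some oblique complement), and phrasing each ``for all $x$'' computation as a genuine equality of operators rather than merely a pointwise statement.
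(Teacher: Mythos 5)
Your proposal is correct and follows essentially the same route as the paper: part (ii) is identical (the paper writes $\Pi_\perp = I - \Pi$ for the projection onto $\mathrm{Ker}(P)$ and uses the kernel inclusion to kill $\widetilde{P}\Pi_\perp x$), and for part (i) you merely spell out the symmetry and idempotence computations that the paper cites as standard properties of orthogonal projections. No gaps.
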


\begin{proof}  
The first property holds since $\Pi$ is a projection operator onto a linear subspace. Such operators are both symmetric and idempotent, which implies $\Pi^\top = \Pi$ and $\Pi^2 = \Pi$, hence $\Pi^\top \Pi = \Pi$.

For the second property, let $\Pi^{\perp}$ denote the projection operator onto $\ker(P)$. By the orthogonal decomposition of the space, we have
$$
\Pi + \Pi^{\perp} = I,
$$
where $I$ is the identity operator. For any $x$, it follows that
$$
\widetilde{P}x = \widetilde{P}(\Pi x + \Pi^{\perp} x) = \widetilde{P}\Pi x,
$$
where the last equality holds since $\Pi^{\perp} x \in \ker(P)\subseteq \ker(\widetilde{P})$ and thus $\widetilde{P}\Pi^{\perp} x = 0$. This establishes the result.
\end{proof}

Coming back to the proof, consider \eqref{eq:xt} and apply the projection operator $\Pi$ to both sides. Using the linearity of $\Pi$, we obtain the following:
\begin{align}
\begin{split}
    \Pi(x_{h+1} - x^*)  
    &= \Pi\left(x_h - x^* - \bar{\beta} (\widehat{P}_h x_h - \widehat{q}_h)\right) \\
    &= \Pi\left(x_h - x^* - \bar{\beta} (P x_h - q + M_{h+1})\right) \\
    &= \Pi\left(x_h - x^* - \bar{\beta} P (x_h - x^*) + \bar{\beta} M_{h+1}\right) \\
    &= \Pi(x_h - x^*) - \bar{\beta} \Pi P (x_h - x^*) + \bar{\beta} \Pi M_{h+1}, 
\end{split}
\end{align}
where $M_{h+1} = (\widehat{P}_h - P)x_h - (\widehat{q}_h - q)$.

Taking the square norm on both sides of the above equation and then taking expectation we obtain
\begin{align}
\label{eq:T-123}
\begin{split}
    &\E \|\Pi(x_{h+1} - x^*)\|^2 \\  
     &= \E \|\Pi(x_h - x^*) - \bar{\beta} \Pi P (x_h - x^*) + \bar{\beta} \Pi M_{h+1}   \|^2 \\  
     &= \underbrace{\E \|\Pi(x_h - x^*) - \bar{\beta} \Pi P (x_h - x^*)\|^2}_{T_1} + \underbrace{\beta^2\E \| \Pi M_{h+1}   \|^2}_{T_2} \\
     &\quad + \underbrace{ 2\E \langle \Pi(x_h - x^*) - \bar{\beta} \Pi P (x_h - x^*), \bar{\beta} \Pi M_{h+1}  \rangle }_{T_3}
\end{split}
\end{align}
We now analyze term $T_1$
\begin{align}
\begin{split}
    \E \|\Pi&(x_h - x^*) - \bar{\beta} \Pi P (x_h - x^*)\|^2 \\
    &= \E \left[\|\Pi(x_h - x^*) \|^2 -2\bar{\beta} \langle \Pi(x_h - x^*),  \Pi P (x_h - x^*) \rangle + \bar{\beta}^2 \| \Pi P (x_h - x^*)\|^2\right]\\
  &= \E \left[ \|\Pi(x_h - x^*) \|^2 -2\bar{\beta}  (x_h - x^*)^\top \Pi^\top  \Pi P (x_h - x^*)  + \bar{\beta}^2 \| \Pi P (x_h - x^*)\|^2\right]\\   
    &\overset{(a)}{=}\E \left[  \|\Pi(x_h - x^*) \|^2 -2\bar{\beta}  (x_h - x^*)^\top \Pi^\top P \Pi (x_h - x^*)  + \bar{\beta}^2 \| \Pi P (x_h - x^*)\|^2\right]\\  
     &= \E \left[\|\Pi(x_h - x^*) \|^2 -2\bar{\beta}  (\Pi (x_h - x^*))^\top P (\Pi (x_h - x^*))  + \bar{\beta}^2\| \Pi P (x_h - x^*)\|^2\right]\\ 
     &\overset{(b)}{=} (1-\lambda_P \bar{\beta} + \bar{\beta}^2 \Lambda_P)\E \|\Pi(x_h - x^*) \|^2,
\end{split}
\end{align}
where $(a)$ follows from $(A_9)$ and Lemma \ref{lem:proj} and $(b)$ follows from $(A_6)$ and $(A_8)$. Now, consider term $T_3$
\begin{align}
\begin{split}
    &\E \langle \Pi(x_h - x^*) - \bar{\beta} \Pi P (x_h - x^*), \bar{\beta} \Pi M_{h+1}  \rangle \\
    &\quad = \E \langle \Pi(x_h - x^*) - \bar{\beta} \Pi P (x_h - x^*), \bar{\beta} \E[\Pi M_{h+1}|x_h]  \rangle \\
    &\quad \overset{(a)}{\leq}  \frac{\bar{\beta} \lambda_P}{2}\E \|\Pi(x_h - x^*) - \bar{\beta} \Pi P (x_h - x^*)\|^2 + \frac{2\bar{\beta} }{\lambda_P} \E \| \Pi \E [M_{h+1}|x_h]\|^2\\
\end{split}
\end{align}
where $(a)$ follows from the linearity of $\Pi$, which allows it to be exchanged with the conditional expectation, and from applying Young's inequality.
Adding $T_1$ and $T_3$ gives
\begin{align}
\begin{split}
    & \E \|\Pi(x_h - x^*) - \bar{\beta} \Pi P (x_h - x^*)\|^2+ \E \langle \Pi(x_h - x^*) - \bar{\beta} \Pi P (x_h - x^*), \bar{\beta} \Pi M_{h+1}  \rangle \\
    &\leq \left( 1+\frac{\bar{\beta} \lambda_P}{2}\right)\E \|\Pi(x_h - x^*) - \bar{\beta} \Pi P (x_h - x^*)\|^2 + \frac{2\bar{\beta} }{\lambda_P} \| \Pi \E [M_{h+1}|x_h]\|^2\\
    &= \left( 1 + \frac{\bar{\beta} \lambda_P}{2}\right) \left(1-\lambda_P \beta+\beta^2 \Lambda_P\right)\E \|\Pi(x_h - x^*)\|^2 + \frac{2\bar{\beta} }{\lambda_P} \| \Pi \E [M_{h+1}|x_h]\|^2\\
    &= \left( 1 - \frac{\bar{\beta} \lambda_P}{2}+\frac{\bar{\beta}^2(2\Lambda_P- \lambda_P^2)}{2} +\frac{\bar{\beta}^3 \Lambda_P\lambda_P}{2}\right) \E \|\Pi(x_h - x^*)\|^2 + \frac{2\bar{\beta} }{\lambda_P} \| \Pi \E [M_{h+1}|x_h]\|^2
\end{split}
\end{align}

Consider the term $\Pi \E [M_{h+1}|x_h]$ in the above sum. 
\begin{align*}
     \Pi \E [M_{h+1}|x_h] &=\Pi ( \E [(\widehat{P}_h - P)x_h - (\widehat{q}_h - q)|x_h]) \\
     &=\Pi \E[(\widehat{P}_h - P)|x_h]x_h - \Pi \E [(\widehat{q}_h - q)|x_h]\\
     &=\Pi \E[(\widehat{P}_h - P)|x_h] (x_h-x^*)+\Pi \E[(\widehat{P}_h - P)|x_h]x^* - \Pi \E [(\widehat{q}_h - q)|x_h]\\
      &\overset{(a)}{=} \Pi \E[(\widehat{P}_h - P)|x_h] \Pi (x_h -x^*) +\Pi \E[(\widehat{P}_h - P)|x_h]x^* - \Pi \E [(\widehat{q}_h - q)|x_h]
\end{align*}
where $(a)$ follows from $(A_9)$ and Lemma \ref{lem:proj}. Using the above bound, we obtain the following using triangle inequality
\begin{align*}
     \E \|\Pi \E [M_{h+1}|x_h]\|^2 &\leq 3\E[\|\Pi \E[(\widehat{P}_h - P)|x_h] \Pi (x_h -x^*)\|^2] + 3\E[\|\Pi \E[(\widehat{P}_h - P)|x_h]x^*\|^2] \\
     &\quad+ 3\E[\|\Pi \E [(\widehat{q}_h - q)|x_h]\|^2]\\
     &\leq 3\E[\|\E[(\widehat{P}_h - P)|x_h]\|^2\| \Pi (x_h -x^*)\|^2] + 3\E [\| \E[(\widehat{P}_h - P)|x_h]\|^2\|x^*\|^2]\\
     &\quad + 3\E[\|\E [(\widehat{q}_h - q)|x_h]\|^2]\\
     &\overset{(a)}{\leq} 3 \delta_P^2 \E \| \Pi (x_h -x^*)\|^2 + 3 \delta_P^2 \lambda_P^{-2} \Lambda_q^2+ 3 \delta_q^2
\end{align*}
where, $(a)$ follows from $(A_2)$ and $(A_4)$. Similarly,
we analyze the term $T_2$ below.
\begin{align}
\begin{split}
    \bar{\beta}^2&\E \| \Pi M_{h+1}   \|^2 \\
    &\leq \bar{\beta}^2\E \| M_{h+1}   \|^2 \\
    &= \bar{\beta}^2 \E \|(\widehat{P}_h - P)x_h - (\widehat{q}_h - q)\|^2 \\
    &= \bar{\beta}^2 \E \|(\widehat{P}_h - P)\Pi(x_h-x^*)+(\widehat{P}_h - P)\Pi x^* - (\widehat{q}_h - q)\|^2\\
    &= 3\bar{\beta}^2 \left(\E[\E [\|\widehat{P}_h - P\|^2|x_h]\|\Pi (x_h-x^*)\|^2]+\E \|\widehat{P}_h - P\|^2\|x^*\|^2 + \E\|\widehat{q}_h - q\|^2\right)\\
    &\overset{(a)}{\leq} 3\bar{\beta}^2 \sigma_P^2\E[\|\Pi (x_h-x^*)\|^2]+ 3 \bar{\beta}^2 \sigma_P^2\lambda_P^{-2}\Lambda_q^2 + 3 \bar{\beta}^2\sigma_q^2 
\end{split}
\end{align}
where $(a)$ follows from $(A_1)$ and $(A_3)$. Combining bounds obtained for $T_1$, $T_2$ and $T_3$ with \eqref{eq:T-123}, we obtain the following bound.
\begin{align}
\begin{split}
    &\E \|\Pi(x_{h+1} - x^*)\|^2 \\
    &\leq \left(1 - \frac{\bar{\beta} ( \lambda_P-12\delta_P^2\lambda_P^{-1})}{2}+\frac{\bar{\beta}^2(3\sigma_P^2+2\Lambda_P - \lambda_P^2)}{2} +\frac{\bar{\beta}^3 \Lambda_P \lambda_P}{2}\right)\E \|\Pi (x_h-x^*)\|^2 \\
    &+ 3 \bar{\beta}^2 \sigma_P^2\lambda_P^{-2}\Lambda_q^2+ 3 \bar{\beta}^2\sigma_q^2 + 6\bar{\beta} \delta_P^2 \lambda_P^{-3} \Lambda_q^2 + 6\bar{\beta} \lambda_P^{-1} \delta_q^2 
\end{split}
\end{align}
Suppose $12\delta_P^2 \leq  \lambda_P^2/2$ and $\bar{\beta}$ is small enough such that $\bar{\beta} \leq 1$ and $\bar{\beta}  \leq \frac{\lambda_P}{2\left((3\sigma_P^2+2\Lambda_P- \lambda_P^2) +\Lambda_P\lambda_P\right)}$. Then, the above bound gives us
\begin{align}
    \E \|\Pi(x_{h+1} - x^*)\|^2 \leq \left(1- \frac{\bar{\beta} \lambda_P}{4}\right)\E \|\Pi(x_{h} - x^*)\|^2 +\epsilon_{h},
\end{align}
where 
\begin{align}
    \epsilon_{h} \coloneqq 3 \bar{\beta}^2 \sigma_P^2\lambda_P^{-2}\Lambda_q^2+ 3 \bar{\beta}^2\sigma_q^2 + 6\bar{\beta} \delta_P^2 \lambda_P^{-3} \Lambda_q^2 + 6\bar{\beta} \lambda_P^{-1} \delta_q^2 .
\end{align}
Unrolling the recursion yields
\begin{align}
    \E \|\Pi(x_{h+1} - x^*)\|^2 &\leq \left(1- \frac{\bar{\beta} \lambda_P}{4}\right)^{H}\E \|\Pi(x_{1} - x^*)\|^2 +\sum_{h=1}^H \left(1- \frac{\bar{\beta} \lambda_P}{4}\right)^{H-h+1} \epsilon_{h}\\
    &\leq \mathrm{exp}\left(- \frac{H \bar{\beta} \lambda_P}{4}\right)\E \|\Pi(x_{1} - x^*)\|^2 + \frac{4 \epsilon_{h} }{\bar{\beta} \lambda_P}
\end{align}
Setting $\bar{\beta} = \frac{\lambda_P}{\Lambda_P}$ in the above equation yields the final result.

\subsection{Proof of Theorem \ref{thm_2}\textit{(b)}}

We analyze the squared bias term $\|\Pi(\mathbb{E}[x_h] - x^*)\|^2$, where $x_h$ denotes the iterate at step $h$, $x^*$ is the optimal solution, and $\Pi$ is a projection operator. Recall that the update rule for $x_{h+1}$ is can be written as:
$$
x_{h+1} = x_h -\bar{\beta} P (x_h - x^*) +\bar{\beta} M_{h+1},
$$

Taking expectation and applying the projection operator $\Pi$ on both sides, we obtain:
$$
\|\Pi(\mathbb{E}[x_{h+1}] - x^*)\|^2 = \|\Pi \mathbb{E}[(x_h - x^*) -\bar{\beta} P(x_h - x^*) +\bar{\beta} M_{h+1}]\|^2.
$$

Expanding the norm using the properties of inner products yields:
\begin{align}
\begin{split}
\|\Pi(\mathbb{E}[x_{h+1}] - x^*)\|^2 &= \|\Pi \mathbb{E}[(x_h - x^*) -\bar{\beta} P (x_h - x^*)]\|^2 +\bar{\beta}^2 \|\Pi \mathbb{E}[M_{h+1}]\|^2 \\
&- 2 \langle \Pi \mathbb{E}[(x_h - x^*) -\bar{\beta} P (x_h - x^*)], \bar{\beta} \Pi \mathbb{E}[M_{h+1}] \rangle.
\end{split}
\end{align}

Using Young's inequality, we bound the cross-term as follows:
\begin{align*}
-2 \langle \Pi \mathbb{E}[(x_h - x^*) - \bar{\beta} P(x_h - x^*)], \bar{\beta} \Pi \mathbb{E}[M_{h+1}] \rangle &\leq \frac{\bar{\beta} \lambda_P}{2} \|\Pi \mathbb{E}[(x_h - x^*) - \bar{\beta} P (x_h - x^*)]\|^2\\
&+ \frac{2\bar{\beta}}{\lambda_P} \|\Pi \mathbb{E}[M_{h+1}]\|^2.
\end{align*}

Substituting this bound back, we find
$$
\|\Pi(\mathbb{E}[x_{h+1}] - x^*)\|^2 \leq \left(1 + \frac{\bar{\beta} \lambda_P}{2}\right) \|\Pi \mathbb{E}[(x_h - x^*) - \bar{\beta} P (x_h - x^*)]\|^2 + \left(\frac{2\bar{\beta}}{\lambda_P} + \bar{\beta}^2\right) \|\Pi \mathbb{E}[M_{h+1}]\|^2.
$$

Next, we analyze the term $\|\Pi \mathbb{E}[(x_h - x^*) - \bar{\beta} P (x_h - x^*)]\|^2$. Using the linearity of expectation, we have
$$
\|\Pi \mathbb{E}[(x_h - x^*) - \bar{\beta} P (x_h - x^*)]\|^2 = \|\Pi (\mathbb{E}[x_h] - x^*) - \bar{\beta} \Pi P (\mathbb{E}[x_h] - x^*)\|^2.
$$

Expanding this norm
\begin{align*}
\|\Pi (\mathbb{E}[x_h] - x^*) - \bar{\beta} P (\mathbb{E}[x_h] - x^*)\|^2 &= \|\Pi (\mathbb{E}[x_h] - x^*)\|^2 + \bar{\beta}^2 \|P (\mathbb{E}[x_h] - x^*)\|^2 \\
&- 2\bar{\beta} \langle \Pi (\mathbb{E}[x_h] - x^*), P \Pi (\mathbb{E}[x_h] - x^*) \rangle.
\end{align*}

Since $ \Pi (\mathbb{E}[x_h] - x^*) \in \ker(P)^{\perp}$, we can use $A_6$ and $A_8$ to obtain
$$
\|\Pi (\mathbb{E}[x_h] - x^*) - \bar{\beta} P (\mathbb{E}[x_h] - x^*)\|^2 \leq (1 - \bar{\beta} \lambda_P+ \bar{\beta}^2 \Lambda_P) \|\Pi (\mathbb{E}[x_h] - x^*)\|^2.
$$

Finally, we bound the noise term $\|\Pi \mathbb{E}[M_{h+1}]\|^2$. Using the conditional expectation property, we expand
$$
\|\Pi \mathbb{E}[M_{h+1}]\|^2 = \|\Pi \mathbb{E}[\mathbb{E}[M_{h+1} | x_h]]\|^2.
$$

Decomposing $M_{h+1}$ into its components
$$
M_{h+1} = (\widehat{P}_h - P)(x_h - x^*) + (\widehat{P}_h - P)x^* + (\widehat{q}_h - q),
$$
we bound each term separately. Using $A_2$ and $A_5$, we find
$$
\|\Pi \mathbb{E}[M_{h+1}]\|^2 \leq 3 \delta_P^2 R_0^2 + 3 \delta_P^2 \Lambda_P^2 \lambda_P^{-2} + 3 \bar{\delta}_q^2,
$$
where $R_0^2 = \mathbb{E}[\|\Pi (x_h - x^*)\|^2]$.

Combining all results, the squared bias term satisfies the recurrence:
\begin{align*}
\|\Pi(\mathbb{E}[x_{h+1}] - x^*)\|^2 &\leq \left(1 + \frac{\bar{\beta} \lambda_P}{2}\right)(1 - \bar{\beta} \lambda_P+ \bar{\beta}^2 \Lambda_P) \|\Pi(\mathbb{E}[x_h] - x^*)\|^2 + \bar{\epsilon}_h\\
&=\left( 1 - \frac{\bar{\beta} \lambda_P}{2}+\frac{\bar{\beta}^2(2\Lambda_P- \lambda_P^2)}{2} +\frac{\bar{\beta}^3 \Lambda_P\lambda_P}{2}\right)  \|\Pi(\mathbb{E}[x_h] - x^*)\|^2 + \bar{\epsilon}_h
\end{align*}
where
$$
\bar{\epsilon}_h = \left(\frac{2\bar{\beta}}{\lambda_P} + \bar{\beta}^2\right) \left(3 \delta_P^2 R_0^2 + 3 \delta_P^2 \Lambda_P^2 \lambda_P^{-2} + 3 \bar{\delta}_q^2\right).
$$
Suppose $\beta$ is such that $ \bar{\beta} < (2\Lambda_P)^{-1}$. Then, the above bound gives us
\begin{align}
\|\Pi(\mathbb{E}[x_{h+1}] - x^*)\|^2 &\leq \left( 1 - \frac{\bar{\beta} \lambda_P}{4}\right)  \|\Pi(\mathbb{E}[x_h] - x^*)\|^2 + \bar{\epsilon}_h
\end{align}
Unrolling this recursion over $h$ steps, we obtain:
\begin{align}
\|\Pi(\mathbb{E}[x_{H}] - x^*)\|^2 &\leq \left( 1 - \frac{\bar{\beta} \lambda_P}{4}\right)^H  \|\Pi(x_0 - x^*)\|^2 + \frac{\bar{\epsilon}_h}{\bar{\beta}\lambda_P}
\end{align}
Setting $\bar{\beta} = \frac{\lambda_P}{\Lambda_P}$ in the above equation yields the final result.

\section{Proof of Theorem \ref{thm:critic-final}}
\label{app:critic}
Recall the definition of $Z_\theta$
\begin{align}
Z_{\theta} \coloneqq \left\{ z \,\middle|\, [\Phi z](i) = [\Phi z](j) \quad \forall i,j \in \mathcal{S}_R^{\theta} \right\}.
\end{align} 

We now state and prove a few useful results for the critic analysis. Recall that $M_{\theta} \coloneqq \mathbb{E}_{\theta} \left[ \phi(s) \left( \phi(s) - \phi(s') \right)^\top \right]$ constitutes a submatrix of the critic matrix $A_v(\theta)$. We begin by characterizing the kernel of $M_{\theta}$ in Lemma~\ref{lem:critic_psd}.

\begin{lemma}
\label{lem:critic_psd}
Let Assumption \ref{assump_mdp} hold, and suppose $\Phi$ is full rank. Then $\ker(M_{\theta}) = Z_{\theta}$.
\end{lemma}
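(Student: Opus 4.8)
The plan is to reduce the claim to a Dirichlet-form computation for the chain restricted to its recurrent class. Write $D \coloneqq \mathrm{diag}(d^{\pi_{\theta}})$ and, for $z \in \mathbb{R}^m$, set $V \coloneqq \Phi z$ so that $V(s) = \phi(s)^\top z$. Unfolding the definition $M_{\theta} = \mathbb{E}_{\theta}[\phi(s)(\phi(s)-\phi(s'))^\top]$ with $s \sim d^{\pi_{\theta}}$, $s' \sim P^{\pi_{\theta}}(s,\cdot)$, I would first rewrite
\begin{equation*}
M_{\theta} z = \sum_{s} d^{\pi_{\theta}}(s)\, \phi(s)\, \big[(I - P^{\pi_{\theta}})V\big](s) = \Phi^\top D (I - P^{\pi_{\theta}}) V .
\end{equation*}
Contracting with $z^\top$ and invoking the balance equation $(P^{\pi_{\theta}})^\top d^{\pi_{\theta}} = d^{\pi_{\theta}}$ from Remark~\ref{rem:unichain}, the standard Dirichlet identity gives
\begin{equation*}
z^\top M_{\theta} z = V^\top D (I - P^{\pi_{\theta}}) V = \tfrac12 \sum_{s,s'} d^{\pi_{\theta}}(s)\, P^{\pi_{\theta}}(s,s')\,\big(V(s) - V(s')\big)^2 \ge 0 .
\end{equation*}

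From here I would read off the null set of this quadratic form. Since $d^{\pi_{\theta}}(s) > 0$ exactly on $\mathcal{S}_R^{\theta}$ and the recurrent class is closed, every transition with $d^{\pi_{\theta}}(s)P^{\pi_{\theta}}(s,s') > 0$ stays inside $\mathcal{S}_R^{\theta}$; because $\mathcal{S}_R^{\theta}$ is a single communicating class, the graph of positive-probability transitions restricted to it is strongly connected. Hence $z^\top M_{\theta} z = 0$ forces $V(s) = V(s')$ across every such edge, and strong connectivity propagates this to $V \equiv \text{const}$ on all of $\mathcal{S}_R^{\theta}$; that is, $\{z : z^\top M_{\theta} z = 0\} = Z_{\theta}$. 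In particular $M_{\theta} z = 0 \Rightarrow z^\top M_{\theta} z = 0$, which already yields the inclusion $\mathrm{Ker}(M_{\theta}) \subseteq Z_{\theta}$.

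The step I expect to be the main obstacle is the reverse inclusion, and the reason it is not automatic is that $M_{\theta}$ is \emph{not symmetric}: for a general matrix with positive-semidefinite symmetric part the kernel can be strictly smaller than the null set of the associated quadratic form, so PSD-ness alone does not close the gap. I would therefore establish $Z_{\theta} \subseteq \mathrm{Ker}(M_{\theta})$ by direct computation rather than through the quadratic form. If $z \in Z_{\theta}$, then $V = \Phi z$ is constant, say $V \equiv c$, on $\mathcal{S}_R^{\theta}$; for every $s \in \mathcal{S}_R^{\theta}$ closedness gives $\sum_{s'} P^{\pi_{\theta}}(s,s') V(s') = c = V(s)$, so $[(I - P^{\pi_{\theta}})V](s) = 0$. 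Since $d^{\pi_{\theta}}$ is supported on $\mathcal{S}_R^{\theta}$, the representation $M_{\theta} z = \Phi^\top D (I - P^{\pi_{\theta}}) V$ then vanishes term by term, giving $M_{\theta} z = 0$. Combining the two inclusions yields $\mathrm{Ker}(M_{\theta}) = Z_{\theta}$. I note that the full-rank hypothesis on $\Phi$ is not actually used in either inclusion; I would retain it only as the standing assumption under which $Z_{\theta}$ is the object of interest and under which the downstream critic analysis is carried out.
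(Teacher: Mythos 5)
Your proof is correct, and its core engine is the same as the paper's: identify the quadratic form $z^\top M_{\theta} z$ with the expected squared one-step increment of $V=\Phi z$, and use the fact that the recurrent class is closed and communicating to conclude that the form vanishes only when $V$ is constant on $\mathcal{S}_R^{\theta}$. Where you go beyond the paper is in the second half. The paper's proof establishes $z^\top M_{\theta}z>0$ whenever $\Phi z$ is non-constant on $\mathcal{S}_R^{\theta}$ and stops there, which only yields $\mathrm{Ker}(M_{\theta})\subseteq Z_{\theta}$; the reverse inclusion is left implicit, and you are right that it does not follow from the quadratic form alone, precisely because $M_{\theta}$ is not symmetric (the null set of $x\mapsto x^\top M x$ is the kernel of the symmetric part, which can strictly contain $\mathrm{Ker}(M)$). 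Your explicit factorization $M_{\theta}z=\Phi^\top D(I-P^{\pi_{\theta}})\Phi z$ together with the observation that closedness of $\mathcal{S}_R^{\theta}$ forces $[(I-P^{\pi_{\theta}})V](s)=0$ for every $s$ in the support of $d^{\pi_{\theta}}$ cleanly supplies the missing inclusion $Z_{\theta}\subseteq\mathrm{Ker}(M_{\theta})$, so your write-up is actually the more complete of the two. Two minor points of agreement with the source: the paper's final identity should carry a factor of $2$ (it shows $\E\bigl[(f(s)-f(s'))^2\bigr]=2\,z^\top M_{\theta}z$, not equality without the $2$, though this is immaterial for the kernel statement), and your remark that the full-rank hypothesis on $\Phi$ is never invoked is consistent with the paper's proof, which likewise does not use it.
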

\begin{proof}
Let $ f : \mathcal{S} \to \mathbb{R} $ be an arbitrary function. Consider the quantity
$$
\E\left[(f(s) - f(s'))^2\right],
$$
where the expectation is over $ s \sim d^{\pi_{\theta}} $ and $ s' \sim P^{\pi_{\theta}}(\cdot \mid s) $ is the next state under the transition kernel induced by policy $ \pi_{\theta} $. This expression captures the expected squared one-step difference in the values of $ f $ along the Markov chain.

It is known that if the chain is irreducible and $ f $ is non-constant, then $ \E_{s \sim d^{\pi_{\theta}}} \left[(f(s) - f(s'))^2\right] > 0 $ \citep{roy1997}. We extend this result to the setting where the chain induced by $ \pi_{\theta} $ has a single recurrent class (possibly with transient states). 

Suppose $ f $ is non-constant on this recurrent class. Then, we claim that $ \E_{s \sim d^{\pi_{\theta}}} \left[(f(s) - f(s'))^2\right] > 0 $. It suffices to show that there exist states \( s, s' \) within the recurrent class such that \( P^{\pi_{\theta}}(s, s') > 0 \) and \( f(s) \neq f(s') \). Suppose, for the sake of contradiction, that no such pair exists. Then, the recurrent class could be partitioned into disjoint subsets, each corresponding to a distinct constant value of \( f \), with no transitions between subsets. This, however, contradicts the fact that the recurrent class is closed and all states within it are communicating. Hence, such a pair \( (s, s') \) must exist, which implies that the expected squared difference is strictly positive since
\begin{align}
    \E_{s \sim d^{\pi_{\theta}}} \left[(f(s) - f(s'))^2\right] \geq  d^{\pi_{\theta}}(s)P^{\pi_{\theta}}(s, s') (f(s) - f(s'))^2 > 0.
\end{align}

Separately, note that $\E_{s \sim d^{\pi_{\theta}}} [(f(s)-f(s'))^2]$ can also be written as follows
    \begin{align}
    \begin{split}
        \E_{s \sim d^{\pi_{\theta}}} [(f(s)-f(s'))^2]  &= \E_{s \sim d^{\pi_{\theta}}} [f(s)^2-2f(s)f(s')+f(s')^2]\\
        &\overset{(a)}{=} 2 \E_{s \sim d^{\pi_{\theta}}} [f(s)(f(s)-f(s'))],
    \end{split}
    \end{align}
    where $(a)$ follows since $\sum_{s\in\mathcal{S}}d^{\pi_{\theta}}(s)P^{\pi_{\theta}}(s, s')=d^{\pi_{\theta}}(s')$  Taking $f = \Phi z$, it follows that $\E_{s \sim d^{\pi_{\theta}}} [([\Phi z](s)-[\Phi z](s'))^2]=z^\top M_{\theta} z$, which concludes the proof.
\end{proof}

The following result provides a lower bound on $\xi^\top A_v(\theta) \xi$ using Assumption \ref{assum:critic_positive_definite}.

\begin{lemma}
\label{lem:td-pd}
    For a large enough $c_{\beta}$, Assumption \ref{assum:critic_positive_definite} implies that $\xi^\top A_{v}(\theta)\xi \geq (\lambda/2)\|\xi\|^2$ for all $\xi =[\eta,\zeta]^\top$ such that $\zeta \in \ker(M_\theta)^{\perp}$, for all $\theta \in \Theta$.
\end{lemma}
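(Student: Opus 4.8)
The plan is to exploit the explicit block structure of the expected critic matrix. Taking the expectation in \eqref{eq_def_Az_matrix} over $z=(s,a,s')$ with $(s,a)\sim\nu^{\pi_\theta}$ and $s'\sim P(\cdot\mid s,a)$, the matrix $A_{v}(\theta)$ becomes
\begin{align}
\label{eq:Av-block}
A_{v}(\theta) = \begin{bmatrix} c_\beta & 0 \\ b_\theta & M_\theta \end{bmatrix}, \qquad b_\theta \coloneqq \E_{s\sim d^{\pi_\theta}}[\phi(s)].
\end{align}
Writing $\xi=[\eta,\zeta]^\top$ with scalar $\eta$ and $\zeta\in\mathbb{R}^m$, a direct computation (no symmetrization of $A_v$ is needed, since the quadratic form only sees the symmetric part) gives
\begin{align}
\label{eq:quad-form}
\xi^\top A_{v}(\theta)\xi = c_\beta\,\eta^2 + \eta\,\zeta^\top b_\theta + \zeta^\top M_\theta \zeta.
\end{align}
So the first step is simply to record \eqref{eq:Av-block}–\eqref{eq:quad-form} and then control the three resulting terms.

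The second step controls each term. Since $\zeta\in\mathrm{Ker}(M_\theta)_\perp$ by hypothesis, Assumption~\ref{assum:critic_positive_definite} gives $\zeta^\top M_\theta\zeta \geq \lambda\|\zeta\|^2$. The only term that can be negative is the cross term $\eta\,\zeta^\top b_\theta$, which I would bound using $\|b_\theta\|=\|\E_{s\sim d^{\pi_\theta}}[\phi(s)]\|\leq \E[\|\phi(s)\|]\leq 1$ (from $\|\phi(s)\|\leq1$) together with Young's inequality: for any $\epsilon>0$,
\begin{align}
|\eta\,\zeta^\top b_\theta| \leq |\eta|\,\|\zeta\|\,\|b_\theta\| \leq \frac{\epsilon}{2}\eta^2 + \frac{1}{2\epsilon}\|\zeta\|^2.
\end{align}
Choosing $\epsilon = 2/\lambda$ makes the $\|\zeta\|^2$ penalty equal to $\tfrac{\lambda}{4}\|\zeta\|^2$, so the $M_\theta$ term still dominates it with margin.

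The final step collects the bounds. Combining the above with \eqref{eq:quad-form} yields
\begin{align}
\xi^\top A_{v}(\theta)\xi \geq \left(c_\beta - \tfrac{1}{\lambda}\right)\eta^2 + \tfrac{3\lambda}{4}\|\zeta\|^2.
\end{align}
Taking $c_\beta$ large enough that $c_\beta - \tfrac{1}{\lambda}\geq \tfrac{\lambda}{2}$, i.e. $c_\beta \geq \tfrac{\lambda}{2}+\tfrac{1}{\lambda}$, and using $\tfrac{3\lambda}{4}\geq\tfrac{\lambda}{2}$, gives $\xi^\top A_{v}(\theta)\xi \geq \tfrac{\lambda}{2}(\eta^2+\|\zeta\|^2)=\tfrac{\lambda}{2}\|\xi\|^2$, uniformly in $\theta$ (the threshold on $c_\beta$ depends only on $\lambda$). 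The argument is short, and there is no serious obstacle: the asymmetry of $A_v(\theta)$ is irrelevant for the quadratic form, and the only delicate point is the indefinite cross term, which is tamed by the uniform bound $\|b_\theta\|\leq1$ and a Young's-inequality split, with $c_\beta$ then chosen to absorb the residual $\eta^2$ penalty. One should note that this lemma only certifies positive-definiteness on the subspace $\{\zeta\in\mathrm{Ker}(M_\theta)_\perp\}$, consistent with the fact (Lemma~\ref{lem:critic_psd}) that $M_\theta$, and hence $A_v(\theta)$, is genuinely singular in the unichain setting.
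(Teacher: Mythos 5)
Your proposal is correct and follows essentially the same route as the paper: expand the block quadratic form $\xi^\top A_v(\theta)\xi = c_\beta\eta^2 + \eta\,\zeta^\top\E_\theta[\phi(s)] + \zeta^\top M_\theta\zeta$, lower-bound the last term by $\lambda\|\zeta\|^2$ via Assumption~\ref{assum:critic_positive_definite}, bound the cross term by $|\eta|\|\zeta\|$ using $\|\phi(s)\|\leq 1$, and absorb it with a sufficiently large $c_\beta$. The only cosmetic difference is that you split the cross term by Young's inequality while the paper minimizes $c_\beta u - \sqrt{u(1-u)} + \lambda(1-u)$ over $u\in[0,1]$, leading to a slightly different (but equally valid) threshold on $c_\beta$.
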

\begin{proof}[Proof of Lemma \ref{lem:td-pd}]
     Recall that $A_{v}(\theta)=\E_{\theta}[A_{v}(z)]$ where $\E_{\theta}$ denotes expectation over the distribution of $z=(s, a, s')$ where $(s, a)\sim \nu^{\pi_\theta}$, $s'\sim P(\cdot|s, a)$. Hence, for any $\xi=[\eta, \zeta]$ such that $\zeta \in \ker(M_\theta)^{\perp}$, we have
    \begin{align}
    \begin{split}
    \xi^{\top}A_{v}(\theta)\xi &= c_{\beta}\eta^2 + \eta\zeta^{\top}\E_{\theta}\left[\phi(s)\right] + \zeta^{\top}\E_{\theta}\left[\phi(s)\left[\phi(s)-\phi(s')\right]^{\top}\right]\zeta\\
    &\overset{(a)}{\geq} c_{\beta}\eta^2 - |\eta|\norm{\zeta} + \lambda \norm{\zeta}^2\\
    &\geq \norm{\xi}^2\left\{\min_{u\in[0, 1]} c_{\beta} u - \sqrt{u(1-u)}+\lambda(1-u)\right\}\overset{(b)}{\geq}(\lambda/2)\norm{\xi}^2
    \end{split}
    \end{align}
    where $(a)$ is a consequence of Assumption \ref{assum:critic_positive_definite} and the fact that $\norm{\phi(s)}\leq 1$, $\forall s\in\mathcal{S}$. Finally, $(b)$ is satisfied when $c_{\beta} \geq \lambda + \sqrt{\frac{1}{\lambda^2}-1}$. This concludes the proof of Lemma \ref{lem:td-pd}.
\end{proof}

Given event $\mathcal{E}_B$, it follows that all states visited after the critic update, $s^{kh}_b$, where $h\geq 1$, $s^{kh}_b \in \mathcal{S}_R^{\theta_k}$. For $s\in \mathcal{S}_R^{\theta_k}$, we have the following result.
\begin{lemma}
\label{lem:kernel_recurrent}
Let $s \in \mathcal{S}_R^\theta$ and $ s' \sim P^{\pi_{\theta}}(\cdot \mid s) $ is the next state under the transition kernel induced by policy $ \pi_{\theta} $. Then $\ker(A_v(\theta)) \subset \ker(A_v(\theta,z))$, where $z=(s,a,s')$.
\end{lemma}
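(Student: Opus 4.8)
The plan is to prove the inclusion by first giving an explicit description of $\mathrm{Ker}(A_v(\theta))$ and then checking directly that every element of it is annihilated by the single-sample matrix $A_v(\theta,z)$. The key structural fact I would exploit is the block-lower-triangular form of the critic matrix: taking expectations in \eqref{eq_def_Az_matrix} gives
\begin{align*}
A_v(\theta) = \begin{bmatrix} c_\beta & 0 \\ \E_\theta[\phi(s)] & M_\theta \end{bmatrix},
\end{align*}
where $M_\theta$ is the submatrix whose kernel was identified in Lemma~\ref{lem:critic_psd}. So for $\xi = [\eta,\zeta]^\top$, the equation $A_v(\theta)\xi = 0$ reads $c_\beta \eta = 0$ together with $\eta\,\E_\theta[\phi(s)] + M_\theta\zeta = 0$. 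Since $c_\beta > 0$, the first forces $\eta = 0$, and the second then reduces to $M_\theta\zeta = 0$, i.e. $\zeta \in \mathrm{Ker}(M_\theta) = Z_\theta$ by Lemma~\ref{lem:critic_psd}. This yields the clean characterization $\mathrm{Ker}(A_v(\theta)) = \{[0,\zeta]^\top : \zeta \in Z_\theta\}$, which is the crucial intermediate step.

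Next I would use the hypothesis $s \in \mathcal{S}_R^\theta$. Because the recurrent class is closed under $P^{\pi_\theta}$, any $s'$ in the support of $P^{\pi_\theta}(\cdot\mid s)$ also lies in $\mathcal{S}_R^\theta$. Now take an arbitrary $\xi = [0,\zeta]^\top \in \mathrm{Ker}(A_v(\theta))$, so $\zeta \in Z_\theta$, and compute $A_v(\theta,z)\xi$ using the block form in \eqref{eq_def_Az_matrix}. The top entry is $c_\beta\cdot 0 = 0$, and the bottom block evaluates to
\begin{align*}
\phi(s)\,(\phi(s) - \phi(s'))^\top\zeta = \phi(s)\bigl([\Phi z\text{-value at }s] - [\Phi z\text{-value at }s']\bigr) = \phi(s)\bigl([\Phi\zeta](s) - [\Phi\zeta](s')\bigr).
\end{align*}
Since $s, s' \in \mathcal{S}_R^\theta$ and $\zeta \in Z_\theta$, the definition of $Z_\theta$ gives $[\Phi\zeta](s) = [\Phi\zeta](s')$, so this difference vanishes and $A_v(\theta,z)\xi = 0$. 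As $\xi$ was arbitrary in $\mathrm{Ker}(A_v(\theta))$, this establishes $\mathrm{Ker}(A_v(\theta)) \subseteq \mathrm{Ker}(A_v(\theta,z))$.

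The argument is short and largely mechanical once the kernel is characterized, so there is no serious computational obstacle; the substantive point to get right is the role of the recurrent-class hypothesis. The inclusion hinges entirely on both $s$ and $s'$ lying in $\mathcal{S}_R^\theta$, so that the constancy of $\Phi\zeta$ over the recurrent class (encoded by $Z_\theta$) can be invoked. If $s$ were transient, $s'$ could leave any set on which $\Phi\zeta$ is constant, and the bottom block need not vanish — which is precisely why condition \eqref{eq:cond_critic} can fail in the general unichain setting and why the analysis must be carried out on the high-probability event $\mathcal{E}_B$ under which the chain has already entered its recurrent class. I would therefore state explicitly, via the closedness of $\mathcal{S}_R^\theta$, that $s' \in \mathcal{S}_R^\theta$ almost surely, as this is the one place where more than routine algebra is needed.
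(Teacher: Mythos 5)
Your proposal is correct and follows essentially the same route as the paper's proof: characterize $\mathrm{Ker}(A_v(\theta))$ as $\{[0,\zeta^\top]^\top : \zeta \in Z_\theta\}$ (via $c_\beta>0$ forcing $\eta=0$ and Lemma~\ref{lem:critic_psd}), then use $s,s'\in\mathcal{S}_R^\theta$ and the definition of $Z_\theta$ to see that $\phi(s)(\phi(s)-\phi(s'))^\top\zeta=0$. Your explicit remark that $s'\in\mathcal{S}_R^\theta$ follows from closedness of the recurrent class makes precise a step the paper leaves implicit, but it is the same argument.
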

\begin{proof}
Let $z \in \ker(A_v(\theta))$. Then $z = [0,\, \zeta^\top]^\top$, where $\zeta \in \ker(M_\theta) = Z_\theta$. It follows that

$$
A_v(\theta) z = \begin{bmatrix} 0 \\ \phi(s)(\phi(s) - \phi(s'))^\top \zeta \end{bmatrix}.
$$

By the definition of $Z_\theta$, we have

$$
\phi(s)(\phi(s) - \phi(s'))^\top \zeta = \phi(s)(\phi(s)^\top \zeta - \phi(s')^\top \zeta) = 0,
$$

since $s, s' \in \mathcal{S}_R^{\theta}$.

\end{proof}

In particular, $(A_8)$ and $(A_9)$ for the critic update with 
$h\geq 1$ follow from Lemmas \ref{lem:td-pd} and \ref{lem:kernel_recurrent}, respectively. For any $z=(s, a, s')\in\mathcal{S}\times\mathcal{A}\times\mathcal{S}$, we have the following.
\begin{align}
\label{eq_appndx_washim_54}
    &\norm{A_{v}(\theta,z)}\leq |c_{\beta}|+\norm{\phi(s)} + \norm{\phi(s)(\phi(s)-\phi(s'))^{\top}}\overset{(a)}{\leq} c_{\beta}+3=\mathcal{O}(c_{\beta}),\\
    \label{eq_appndx_washim_55}
    &\norm{b_{v}(\theta,z)} \leq |c_{\beta}r(s, a)| + \norm{r(s, a)\phi(s)} \overset{(b)}{\leq} c_{\beta}+1 =\mathcal{O}(c_{\beta})
\end{align}
where $(a)$, $(b)$ hold since $|r(s, a)|\leq 1$ and $\norm{{\phi(s)}}\leq 1$, $\forall (s, a)\in\mathcal{S}\times\mathcal{A}$. This verifies $(A_6)$ and $(A_7)$. 
Using Lemmas \ref{lem:bias-markov} and \ref{lem:variance-markov} with the above bounds, we obtain
\begin{align}
    \norm{\E \left[\frac{1}{B}\sum_{i=1}^B A_v(\theta,z_i)\right]-A_v(\theta)}  \leq \cO\left(\frac{\sqrt{m}c_{\beta} C_{\mathrm{tar}}}{B}\right)
\end{align}
and
\begin{align}
     \E\norm{\left[\frac{1}{B}\sum_{i=1}^BA_v(\theta,z_i)\right]-A_v(\theta)}^2 \leq \cO\left( \frac{c_\beta^2 + \sqrt{m}c_\beta^2C_{\mathrm{tar}} }{B}\right).
\end{align}
Similarly,
\begin{align}
    \norm{\E \left[\frac{1}{B}\sum_{i=1}^B b_v(\theta,z_i)\right]-b_v(\theta)}  \leq \cO\left(\frac{\sqrt{m}c_{\beta} C_{\mathrm{tar}}}{B}\right)
\end{align}
and
\begin{align}
     \E\norm{\left[\frac{1}{B}\sum_{i=1}^Bb_v(\theta,z_i)\right]-b_v(\theta)}^2 \leq \cO\left( \frac{c_\beta^2 + \sqrt{m}c_\beta^2C_{\mathrm{tar}} }{B}\right).
\end{align}

These yield conditions $(A_1)$–$(A_4)$ for the critic update, and condition $(A_5)$ follows by setting $\bar{\delta}_q = \delta_q$. Since all conditions $(A_1)$–$(A_9)$ are now verified, we can apply Theorem~\ref{thm_2} with the critic learning rates $c_{\beta} = \lambda + \sqrt{\frac{1}{\lambda^2} - 1}$ and $\beta = \frac{\lambda^2}{2}$ to obtain the following bounds.

\begin{align*}
\E\left[\|\Pi(\xi_H - \xi^*)\|^2\right] \leq &\cO\Bigg(\exp\left(-\tfrac{H \lambda^3}{16}\right)\|\xi_1-\xi^*\|^2 +  \frac{C_{\mathrm{tar}}\sqrt{m}}{\lambda^{4}B}  + \frac{C_{\mathrm{tar}}\sqrt{m}}{\lambda B} + \frac{C_{\mathrm{tar}}^2 m}{\lambda^6 B^2}+\frac{C_{\mathrm{tar}}^2 m}{\lambda^2 B^2}\Bigg).
\end{align*}
and
\begin{align*}
\|\Pi(\E[x_H] - x^*)\|^2 \leq \cO\Bigg(\exp\left(-\tfrac{H \lambda^3}{16}\right) \|\xi_1-\xi^*\|^2 + \frac{C_{\mathrm{tar}}^2 m\|\xi_1-\xi^*\|^2}{\lambda^2 B^2}+\frac{C_{\mathrm{tar}}^2 m}{\lambda^6 B^2}+\frac{C_{\mathrm{tar}}^2 m}{\lambda^2 B^2}\Bigg).
\end{align*}

\section{Proof of Theorem \ref{thm:NPG-final}}
\label{app:npg-final}
We note that condition $(A_9)$ holds trivially for the NPG update, since $\ker(A_u(\theta)) = \{0\}$. Furthermore, by Assumption~\ref{assump:FND_policy}, condition $(A_8)$ is readily satisfied with $\lambda_P = \mu$. Finally, from Assumption~\ref{assump:score_func_bounds}, we have
\begin{align}
   \|A_u(\theta,z)\|= \|\nabla \log \pi_{\theta}(a|s) \nabla \log \pi_{\theta}(a|s)^\top\|\leq \|\nabla \log \pi_{\theta}(a|s) \|^2 \leq G_1^2,
\end{align}
for all $z=(s,a,s')$. Combining this bound with Lemmas \ref{lem:bias-markov} and \ref{lem:variance-markov}, it follows that
\begin{align}
    \norm{\E \left[\frac{1}{B}\sum_{i=1}^B A_u(\theta,z_i)\right]-A_u(\theta)}  \leq \cO\left(\frac{\sqrt{d}G_1^2 C_{\mathrm{tar}}}{B}\right)
\end{align}
and
\begin{align}
     \E\norm{\left[\frac{1}{B}\sum_{i=1}^BA_u(\theta,z_i)\right]-A_u(\theta)}^2 \leq \cO\left( \frac{G_1^4 + \sqrt{d}G_1^4C_{\mathrm{tar}} }{B}\right).
\end{align}

The above bounds yield $(A_6)$, $(A_2)$ and $(A_1)$, respectively. To obtain $(A_7)$, we combine the bound on the Advantage function from Lemma~\ref{lem:q-v-a-bound} with Assumption~\ref{assump:score_func_bounds} to bound the policy gradient as follows
\begin{align}
\begin{split}
    \|\nabla J(\theta)\|&=\|\E[A^{\pi_\theta}(s,a)\nabla\log \pi_\theta(a|s)]\|\leq \|A^{\pi_\theta}(s,a)\|\|\nabla\log \pi_\theta(a|s)\|\\
    &\leq (1+4(C_{\mathrm{hit}}+C_{\mathrm{tar}}))G_1.
\end{split}
\end{align}

To prove the other statements, recall the definition of $b_u(\theta_k, \xi_k, \cdot)$ from \eqref{eq:ut_exp}. Let $\E_{u,z}$ denote the expectation over $\{z_i\}_{i=1}^B$, given the entire history prior to $z_1$ (including $\xi_k$), $\E_{u}$ be denote the expectation given the entire history prior to $z_1$ with $s_1\sim d^{\pi_{\theta_k}}$ and $\E_{v}$ denote the expectation over the entire history prior to $z_1$. Observe the following relations for arbitrary $\theta_k, \xi_k$.
\begin{align*}
    &\E_{u}\left[\frac{1}{B}\sum_{i=1}^B b_u(\theta_k, \xi_k, z_i)\right] - \nabla_\theta J(\theta_k)\\
    &= \E_{u}\left[\frac{1}{B}\sum_{i=1}^B\bigg\{r(s_i, a_i) - \eta_k + \langle \phi(s_{i+1}) - \phi(s_i), \zeta_k \rangle\bigg\}\nabla_{\theta} \log_{\pi_{\theta_k}}(a_i|s_i) \right] - \nabla_\theta J(\theta_k)\\
    &\overset{(a)}{=}\underbrace{\E_{u}\left[\frac{1}{B}\sum_{i=1}^B\bigg\{\eta_k^* - \eta_k + \langle \phi(s_{i+1}) - \phi(s_i), \zeta_k-\zeta_k^* \rangle\bigg\}\nabla_{\theta} \log_{\pi_{\theta_k}}(a_i|s_i) \right]}_{T_0}+\\
    &\hspace{0.75cm}+ \underbrace{\E_{u}\left[\frac{1}{B}\sum_{i=1}^B\bigg\{ \left(\langle \phi(s_i), \zeta_k^*\rangle -V^{\pi_{\theta_k}}(s_i)\right) + \left(V^{\pi_{\theta_k}}(s_{i+1})-\langle \phi(s_{i+1}), \zeta_k^*\rangle \right)\bigg\}\nabla_{\theta} \log_{\pi_{\theta_k}}(a_i|s_i) \right]}_{T_1}\\
    &\hspace{0.75cm}+ \underbrace{\E_{u}\left[\frac{1}{B}\sum_{i=1}^B\bigg\{r(s_i, a_i)- \eta_k^*  +V^{\pi_{\theta_k}}(s_{i}) - V^{\pi_{\theta_k}}(s_{i+1})\bigg\}\nabla_{\theta} \log_{\pi_{\theta_k}}(a_i|s_i) \right]-\nabla_{\theta} J(\theta_k)}_{T_2}
\end{align*}
We have $T_2=0$ as a consequence of the Bellman's equation \eqref{eq_bellman}. Whereas,
\begin{align*}
    \|T_1\|^2\leq 2\E_{d^{\pi_{\theta_k}}}|\langle \phi(s_i), \zeta_k^*\rangle -V^{\pi_{\theta_k}}(s_i)|^2 G_1^2\leq 4 G_1^2 \epsilon_{\mathrm{app}}.
\end{align*}
Finally, note that
\begin{align*}
    T_0&=\E_{u}\left[\frac{1}{B}\sum_{i=1}^B\bigg\{\eta_k^* - \eta_k + \langle \phi(s_{i+1}) - \phi(s_i), \zeta_k-\zeta_k^* \rangle\bigg\}\nabla_{\theta} \log_{\pi_{\theta_k}}(a_i|s_i) \right]\\
    &=\E_{u}\left[\frac{1}{B}\sum_{i=1}^B\bigg\{\eta_k^* - \eta_k + \langle \phi(s_{i+1}) - \phi(s_i), \Pi (\zeta_k-\zeta_k^*) \rangle\bigg\}\nabla_{\theta} \log_{\pi_{\theta_k}}(a_i|s_i) \right]
\end{align*}
which yields
\begin{align}
    \|T_0\|^2&\leq G_1^2 \left(\|\eta_k-\eta_k^*\|^2+\|\Pi (\zeta_k-\zeta_k^*) \|^2\right)=G_1^2\|\Pi(\xi_k-\xi_k^*)\|^2.
\end{align}
Moreover, from Lemmas~\ref{lem:bias-markov} and \ref{lem:variance-markov}, we have
\begin{align}
    \norm{\E_{u,z} \left[\frac{1}{B}\sum_{i=1}^B b_u(\theta_k,\xi_k,z_i)\right]-\E_{u} \left[\frac{1}{B}\sum_{i=1}^B b_u(\theta_k,\xi_k,z_i)\right]}  \leq \cO\left(\frac{\sqrt{d}G_1^2\|\Pi \xi_k\|^2 C_{\mathrm{tar}}}{B}\right)
\end{align}
and
\begin{align}
     \E_{u,z}\norm{\left[\frac{1}{B}\sum_{i=1}^Bb_u(\theta_k,\xi_k,z_i)\right]-\E_{u} \left[\frac{1}{B}\sum_{i=1}^B b_u(\theta_k,\xi_k,z_i)\right]}^2 \leq \cO\left( \frac{\sqrt{d}G_1^2\|\Pi \xi_k\|^2C_{\mathrm{tar}} }{B}\right).
\end{align}
It follows that
\begin{align}
    &\E\norm{\frac{1}{B}\sum_{i=1}^Bb_u(\theta_k,\xi_k,z_i)-\nabla_{\theta} J(\theta_k)}^2 \nonumber\\
    &\quad \quad\leq \cO\left(\frac{\sqrt{d}G_1^2\E\|\Pi \xi_k\|^2 C_{\mathrm{tar}}}{B}+G_1^2\E\|\Pi(\xi_k-\xi_k^*)\|^2+G_1^2 \epsilon_{\mathrm{app}}\right)
\end{align}
and
\begin{align}
    &\norm{\E_{u,z}\left[\frac{1}{B}\sum_{i=1}^Bb_u(\theta_k,\xi_k,z_i)\right]-\nabla_{\theta} J(\theta_k)}^2 \nonumber \\
    &\quad \quad \leq \cO\left(\frac{dG_1^4\E\|\Pi \xi_k\|^2 C_{\mathrm{tar}}^2}{B^2}+G_1^2\E\|\Pi(\xi_k-\xi_k^*)\|^2+G_1^2 \epsilon_{\mathrm{app}}\right)
\end{align}
Conditions $(A_3)$ and $(A_4)$ now follow. In contrast to the critic analysis, we employ a sharper bound for $(A_5)$, which is necessary to obtain order-optimal regret. We derive this below.
\begin{align}
\begin{split}
       &\E\left[\frac{1}{B}\sum_{i=1}^B b_u(\theta_k, \xi_k, z_i)\right] - \nabla_\theta J(\theta_k)\\
       &=\E_{u}\left[\E_v\left[\frac{1}{B}\sum_{i=1}^B b_u(\theta_k, \xi_k, z_i)\right]\right] - \nabla_\theta J(\theta_k)\\
    &\overset{(a)}{=}\underbrace{\E_{u}\left[\frac{1}{B}\sum_{i=1}^B\bigg\{\eta_k^* - \E_v[\eta_k] + \langle \phi(s_{i+1}) - \phi(s_i), \E_v[\zeta_k]-\zeta_k^* \rangle\bigg\}\nabla_{\theta} \log_{\pi_{\theta_k}}(a_i|s_i) \right]}_{T_0}+\\
    &+ \underbrace{\E_{u}\left[\frac{1}{B}\sum_{i=1}^B\bigg\{ \left(\langle \phi(s_i), \zeta_k^*\rangle -V^{\pi_{\theta_k}}(s_i)\right) + \left(V^{\pi_{\theta_k}}(s_{i+1})-\langle \phi(s_{i+1}), \zeta_k^*\rangle \right)\bigg\}\nabla_{\theta} \log_{\pi_{\theta_k}}(a_i|s_i) \right]}_{T_1}\\
    &+ \underbrace{\E_{u}\left[\frac{1}{B}\sum_{i=1}^B\bigg\{r(s_i, a_i)- \eta_k^*  +V^{\pi_{\theta_k}}(s_{i}) - V^{\pi_{\theta_k}}(s_{i+1})\bigg\}\nabla_{\theta} \log_{\pi_{\theta_k}}(a_i|s_i) \right]-\nabla_{\theta} J(\theta_k)}_{T_2},
\end{split}
\end{align}
With the above decomposition, we obtain
\begin{align}
    &\norm{\E\left[\frac{1}{B}\sum_{i=1}^Bb_u(\theta_k,\xi_k,z_i)\right]-\nabla_{\theta} J(\theta_k)}^2 \nonumber \\
    &\quad \quad \leq \cO\left(\frac{dG_1^4\E\|\Pi \xi_k\|^2 C_{\mathrm{tar}}^2}{B^2}+G_1^2\|\Pi(\E[\xi_k]-\xi_k^*)\|^2+G_1^2 \epsilon_{\mathrm{app}}\right)
\end{align}
This concludes the verification of condition $(A_5)$. Note that the bound involves the term $\|\Pi(\mathbb{E}[\xi_k] - \xi_k^*)\|^2$, rather than $\mathbb{E}[\|\Pi(\xi_k - \xi_k^*)\|^2]$, making it significantly sharper. We can now invoke Theorem~\ref{thm_2} by setting the NPG step-size $\gamma \coloneqq \frac{\mu}{G_1^2}$ to derive bounds on both the second-order error and the bias of the NPG estimate $\omega_k$, as follows.
\begin{align}
\label{eq:NPG-Var_const}
\begin{split}
&\E\left[\|\omega_k - \omega_k^*\|^2\right]\\
&\leq \cO\Bigg(\exp\left(-\tfrac{H \mu^2}{4(1+4C)G_1}\right)\|\omega_0-\omega_k^*\|^2 +  \frac{\sqrt{d}G_1^6 C_{\mathrm{tar}}}{\mu^2B}+\E\|\Pi(\xi_k-\xi_k^*)\|^2+ \epsilon_{\mathrm{app}} \\
&\quad+ \frac{\sqrt{d}G_1^4C_{\mathrm{tar}}C^2}{\mu^2 B}+ \frac{dG_1^4 C^2 C_{\mathrm{tar}}^2}{\mu^4 B^2}+\frac{dG_1^8 C_{\mathrm{tar}}^2}{\mu^2 B^2}+G_1^2\mu^{-2}\E\|\Pi(\xi_k-\xi_k^*)\|^2+G_1^2 \mu^{-2}\epsilon_{\mathrm{app}}\Bigg)\\
&\quad \overset{(a)}{\leq} \cO\Bigg(\exp\left(-\tfrac{H \mu^2}{4(1+4C)G_1}\right)\|\omega_0-\omega_k^*\|^2 +  \frac{\sqrt{d}G_1^6 C_{\mathrm{tar}}}{\mu^2B}+ G_1^2 \mu^{-2}\exp\left(-\tfrac{H \lambda^3}{16}\right)\|\xi_1-\xi^*\|^2\\
&\quad +  G_1^2 \mu^{-2}\frac{C_{\mathrm{tar}}\sqrt{m}}{\lambda^{4}B}  + G_1^2 \mu^{-2} \frac{C_{\mathrm{tar}}\sqrt{m}}{\lambda B} + G_1^2 \mu^{-2}\frac{C_{\mathrm{tar}}^2 m}{\lambda^6 B^2}+G_1^2 \mu^{-2}\frac{C_{\mathrm{tar}}^2 m}{\lambda^2 B^2}+ G_1^2 \mu^{-2}\epsilon_{\mathrm{app}} \\
&\quad + \frac{\sqrt{d}G_1^4C_{\mathrm{tar}}C^2}{\mu^2 B}+ \frac{dG_1^4 C^2 C_{\mathrm{tar}}^2}{\mu^4 B^2}+\frac{dG_1^8 C_{\mathrm{tar}}^2}{\mu^2 B^2}+G_1^2 \mu^{-2}\epsilon_{\mathrm{app}}\Bigg),
\end{split}
\end{align}
where $(a)$ follows using the second-order bound in Theorem \ref{thm:critic-final} and
\begin{align}
\label{eq:NPG-bias_const}
\begin{split}
\|\E[\omega_k] - \omega_k^*\|^2 &\leq \cO\Bigg(\exp\left(-\tfrac{H \mu^2}{4(1+4C)G_1}\right)\|\omega_0-\omega_k^*\|^2+ \frac{dG_1^8 C_{\mathrm{tar}}^2}{\mu^4 B^2}+\frac{dG_1^8 C_{\mathrm{tar}}^2}{\mu^6 B^2}+\frac{dG_1^6 C_{\mathrm{tar}}^2}{\mu^4 B^2}\\
& +G_1^2\mu^{-2}\|\Pi(\E[\xi_k]-\xi_k^*)\|^2+G_1^2 \mu^{-2}\epsilon_{\mathrm{app}}\Bigg)\\
& \overset{(b)}{\leq} \cO\Bigg(\exp\left(-\tfrac{H \mu^2}{4(1+4C)G_1}\right)\|\omega_0-\omega_k^*\|^2+ \frac{dG_1^8 C_{\mathrm{tar}}^2}{\mu^4 B^2}+\frac{dG_1^8 C_{\mathrm{tar}}^2}{\mu^6 B^2}+\frac{dG_1^6 C_{\mathrm{tar}}^2}{\mu^4 B^2}\\
& +G_1^2\mu^{-2}\exp\left(-\tfrac{H \lambda^3}{16}\right) \|\xi_1-\xi^*\|^2 + G_1^2\mu^{-2}\frac{C_{\mathrm{tar}}^2 m\|\xi_1-\xi^*\|^2}{\lambda^2 B^2}\\
& +G_1^2\mu^{-2}\frac{C_{\mathrm{tar}}^2 m}{\lambda^6 B^2}+G_1^2\mu^{-2}\frac{C_{\mathrm{tar}}^2 m}{\lambda^2 B^2}+G_1^2 \mu^{-2}\epsilon_{\mathrm{app}}\Bigg),
\end{split}
\end{align}
where $(b)$ follows using the bias bound in Theorem \ref{thm:critic-final} and

\section{Proof of Lemma \ref{lem:regret-decomp}}
\label{app:proof-lemma1}

The regret can be decomposed as follows, following the standard approach used in \citep{wei2020model,bai2023regret}:
\begin{align}
\begin{split}
    \mathrm{Reg}_T &= \sum_{t=0}^{T-1} \left(J^* - r(s_t, a_t)\right) \\
    &= HB\sum_{k=1}^{K} \left(J^* - J(\theta_k)\right) + \sum_{k=1}^{K} \sum_{t \in \mathcal{I}_k} \left(J(\theta_k) - r(s_t, a_t)\right) \\
    &= HB\sum_{k=1}^{K} \left(J^* - J(\theta_k)\right) + \E \left[\sum_{k=1}^{K} V^{\pi_{\theta_{k+1}}}(s_{kH}) - V^{\pi_{\theta_k}}(s_{kH}) \right] + \E \left[ V^{\pi_{\theta_K}}(s_T) - V^{\pi_{\theta_0}}(s_0) \right].
\end{split}
\end{align}

Since $0 \leq V^{\pi}(s) \leq 2C$ for all $\pi \in \Pi$ and $s \in \mathcal{S}$ from Lemma \ref{lem:q-v-a-bound}, it follows that
\begin{align}
\label{eq:19}
\E[\mathrm{Reg}_T] \leq HB\sum_{k=1}^{K} \left(J^* - \E[J(\theta_k)]\right) + 2C(K+1).
\end{align}
We now decompose the term $\sum_{k=1}^{K} (J^* - \E[J(\theta_k)])$. There are several existing results for this. We use Lemma 1 from \citep{ganesh2024order}, re-stated below, since it is sharper than most existing results.
\begin{lemma}
    \label{lemma:local_global}
    Consider any policy update rule of form
    \begin{align}
        \theta_{k+1} = \theta_k + \alpha \omega_k.
    \end{align}
     If \ref{assump:score_func_bounds} holds, then the following inequality is satisfied for any $K$.
     \begin{align}
			J^{*}-\frac{1}{K}\sum_{k=0}^{K-1}\E[J(\theta_k)]\leq &\sqrt{\epsilon_{\mathrm{bias}}}+\frac{G_1}{K}\sum_{k=0}^{K-1}\E\Vert(\E_k\left[\omega_k\right]-\omega^*_k)\Vert +\frac{\alpha G_2}{2K}\sum_{k=0}^{K-1}\E\Vert \omega_k\Vert^2\nonumber \\
            &+\frac{1}{\alpha K}\E_{s\sim d^{\pi^*}}[\mathrm{KL}(\pi^*(\cdot\vert s)\Vert\pi_{\theta_0}(\cdot\vert s))]
		\end{align}
  where $\mathrm{KL}(\cdot \|\cdot)$ is the Kullback-Leibler divergence, $\omega^*_k$ is the NPG direction $F(\theta_k)^{-1}\nabla J(\theta_k)$, $\pi^*$ is the optimal policy, $J^*$ is the optimal value of the function $J(\cdot)$, and $\E_k[\cdot]$ denotes conditional expectation given the history up to epoch $k$.
\end{lemma}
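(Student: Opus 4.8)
The plan is to use the Kullback--Leibler divergence to the comparator $\pi^*$ as a potential function and show that its per-step decrease lower-bounds the optimality gap. Define $D_k \coloneqq \E_{s\sim d^{\pi^*}}[\mathrm{KL}(\pi^*(\cdot|s)\,\|\,\pi_{\theta_k}(\cdot|s))]$. First I would expand the one-step drift
\[
D_k - D_{k+1} = \E_{s\sim d^{\pi^*}}\Big[\textstyle\sum_a \pi^*(a|s)\big(\log\pi_{\theta_{k+1}}(a|s) - \log\pi_{\theta_k}(a|s)\big)\Big],
\]
and linearize the log-ratio around $\theta_k$. Using the update rule $\theta_{k+1}=\theta_k+\alpha\omega_k$ together with the Lipschitz continuity of the score (Assumption~\ref{assump:score_func_bounds}(b)), a second-order Taylor bound gives $\log\pi_{\theta_{k+1}}(a|s)-\log\pi_{\theta_k}(a|s) \geq \alpha\,\omega_k^\top\nabla_\theta\log\pi_{\theta_k}(a|s) - \tfrac{G_2\alpha^2}{2}\|\omega_k\|^2$, so that $D_k-D_{k+1} \geq \alpha\,\E_{(s,a)\sim\nu^{\pi^*}}[\omega_k^\top\nabla_\theta\log\pi_{\theta_k}(a|s)] - \tfrac{G_2\alpha^2}{2}\|\omega_k\|^2$, where $\nu^{\pi^*}(s,a)=d^{\pi^*}(s)\pi^*(a|s)$.

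Next I would connect the linear term to the optimality gap through the average-reward performance difference identity $J^*-J(\theta_k)=\E_{(s,a)\sim\nu^{\pi^*}}[A^{\pi_{\theta_k}}(s,a)]$, which follows from the Bellman equation~\eqref{eq_bellman} and the stationarity of $d^{\pi^*}$ under the unichain Assumption~\ref{assump_mdp}. Adding and subtracting this quantity leaves the residual $\E_{(s,a)\sim\nu^{\pi^*}}[\omega_k^\top\nabla_\theta\log\pi_{\theta_k} - A^{\pi_{\theta_k}}]$. Taking the conditional expectation $\E_k$, under which $\theta_k$, $A^{\pi_{\theta_k}}$ and $\nabla_\theta\log\pi_{\theta_k}$ are measurable while $\omega_k$ is not, replaces $\omega_k$ by $\E_k[\omega_k]$, and I would split the residual as
\[
(\E_k[\omega_k]-\omega_k^*)^\top\nabla_\theta\log\pi_{\theta_k} + \big((\omega_k^*)^\top\nabla_\theta\log\pi_{\theta_k} - A^{\pi_{\theta_k}}\big).
\]
The first piece is at most $G_1\|\E_k[\omega_k]-\omega_k^*\|$ by Cauchy--Schwarz and Assumption~\ref{assump:score_func_bounds}(a); the second, by Jensen's inequality and Definition~\ref{assump:function_approx_error}, is at most $\sqrt{2\,L_{\nu^{\pi^*}}(\omega_k^*;\theta_k)}\leq\sqrt{2\epsilon_{\mathrm{bias}}}$ (which yields the $\sqrt{\epsilon_{\mathrm{bias}}}$ term up to the normalization constant in the definition of $L_\nu$).

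Finally I would assemble and telescope. Rearranging the drift inequality and taking total expectations gives $\alpha(J^*-\E[J(\theta_k)]) \leq \E[D_k-D_{k+1}] + \alpha G_1\,\E\|\E_k[\omega_k]-\omega_k^*\| + \alpha\sqrt{2\epsilon_{\mathrm{bias}}} + \tfrac{G_2\alpha^2}{2}\E\|\omega_k\|^2$; summing over $k=0,\dots,K-1$, dropping $-D_K\leq 0$, and dividing by $\alpha K$ produces the claimed bound with $D_0=\E_{s\sim d^{\pi^*}}[\mathrm{KL}(\pi^*(\cdot|s)\|\pi_{\theta_0}(\cdot|s))]$. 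I expect the main obstacle to be the bookkeeping of conditional expectations in the second step: isolating the intrinsic NPG bias $\E_k[\omega_k]-\omega_k^*$ from the representational error requires freezing the score and advantage by conditioning on the history while averaging $\omega_k$, and one must evaluate the transferred error under the \emph{comparator's} occupancy $\nu^{\pi^*}$ rather than $\nu^{\pi_{\theta_k}}$, which is exactly the quantity that $\epsilon_{\mathrm{bias}}$ is defined to bound (cf.\ \citep{ganesh2024order}).
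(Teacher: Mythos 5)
Your proof is correct. Note that the paper does not actually prove this lemma itself; it imports it verbatim as Lemma~1 of \citep{ganesh2024order}, and your argument is precisely the standard KL-potential (mirror-descent) argument underlying that result: the second-order Taylor lower bound on the log-policy drift via Assumption~\ref{assump:score_func_bounds}(b), the average-reward performance difference identity $J^*-J(\theta_k)=\E_{(s,a)\sim\nu^{\pi^*}}[A^{\pi_{\theta_k}}(s,a)]$, the split of the residual into the estimation bias $\E_k[\omega_k]-\omega_k^*$ and the transferred compatible-approximation error under $\nu^{\pi^*}$, and the telescoping of $D_k$. The only discrepancy is the constant you already flag: with the paper's normalization $L_\nu=\tfrac12\E[(\cdot)^2]$, Jensen gives $\sqrt{2\epsilon_{\mathrm{bias}}}$ rather than $\sqrt{\epsilon_{\mathrm{bias}}}$, a harmless constant-factor mismatch inherited from how the cited source normalizes $\epsilon_{\mathrm{bias}}$.
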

Using the above result, we obtain the following.
   \begin{align}
   \label{eq:20}
   \begin{split}
			HB\sum_{k=0}^{K-1}(J^{*}-\E[J(\theta_k)])\leq &T\sqrt{\epsilon_{\mathrm{bias}}}+\frac{BH G_1}{K}\sum_{k=0}^{K-1}\E\Vert(\E_k\left[\omega_k\right]-\omega^*_k)\Vert +\frac{\alpha G_2BH}{K}\sum_{k=0}^{K-1}\E\Vert \omega_k\Vert^2 \\
            &+\frac{BH}{\alpha }\E_{s\sim d^{\pi^*}}[\mathrm{KL}(\pi^*(\cdot\vert s)\Vert\pi_{\theta_0}(\cdot\vert s))]
        \end{split}
		\end{align}

 The term containing $\E\|\omega_k\|^2$ can be further decomposed as
\begin{align}
\label{eq:eq_21}
    \begin{split}
        \frac{\alpha G_2BH}{K}\E\Vert \omega_k \Vert^2&\leq \frac{2\alpha G_2BH}{K}\E\Vert \omega_k -\omega_k^*\Vert^2 + \frac{2\alpha G_2BH}{K}\E\Vert \omega_k^* \Vert^2\\
        &\overset{(a)}{\leq}\frac{2\alpha G_2BH}{K}\E\Vert \omega_k -\omega_k^*\Vert^2 + \frac{2\alpha G_2BH}{\mu^2 K}\E\Vert \nabla_{\theta} J(\theta_k) \Vert^2
    \end{split}
\end{align}
where $(a)$ follows from Assumption \ref{assump:FND_policy} and the definition that $\omega_k^*=F(\theta_k)^{-1}\nabla_\theta J(\theta_k)$. The result now follows by substituting \eqref{eq:20} and \eqref{eq:eq_21} in \eqref{eq:19}.


\section{Proof of Theorem \ref{thm:main}}
\label{app:final}
  We shall now derive a bound for \(\textstyle{\sum_{k=0}^{K-1}\Vert\nabla_\theta J(\theta_k)\Vert^2}\). Given that the function \(J\) is \(L\)-smooth, we obtain:
\begin{align}
\label{eq:eq_26}
\begin{split}
    &J(\theta_{k+1})\\
    &\geq J(\theta_k)+\left<\nabla_\theta J(\theta_k),\theta_{k+1}-\theta_k\right>-\frac{L}{2}\Vert\theta_{k+1}-\theta_k\Vert^2\\
    &=J(\theta_k)+\alpha\left<\nabla_\theta J(\theta_k),\omega_k\right>-\frac{\alpha^2 L}{2}\Vert \omega_k\Vert^2\\
    &=J(\theta_k)+\alpha\left<\nabla_\theta J(\theta_k),\omega_k^*\right>+\alpha\left<\nabla_\theta J(\theta_k),\omega_k-\omega_k^*\right>-\frac{\alpha^2 L}{2}\Vert \omega_k-\omega_k^*+\omega_k^*\Vert^2\\
    &\overset{(a)}{\geq} J(\theta_k) +\alpha\left<\nabla_\theta J(\theta_k),F(\theta_k)^{-1}\nabla_\theta J(\theta_k)\right> +\alpha\left<\nabla_\theta J(\theta_k),\omega_k-\omega_k^*\right> \\
    &\hspace{1cm}-\alpha^2 L \Vert \omega_k-\omega_k^*\Vert^2 - \alpha^2L\Vert \omega_k^*\Vert^2\\
    &\overset{(b)}{\geq} J(\theta_k) +\dfrac{\alpha}{G_1^2}\Vert\nabla_\theta J(\theta_k)\Vert^2 +\alpha\left<\nabla_\theta J(\theta_k),\omega_k-\omega_k^*\right> -\alpha^2 L \Vert \omega_k-\omega_k^*\Vert^2 - \alpha^2L\Vert \omega_k^*\Vert^2\\
    &=J(\theta_k) +\dfrac{\alpha}{2G_1^2}\Vert\nabla_\theta J(\theta_k)\Vert^2 + \dfrac{\alpha}{2G_1^2}\Vert\nabla_\theta J(\theta_k)+G_1^2(\omega_k-\omega_k^*)\Vert^2-\left(\dfrac{\alpha G_1^2}{2}+\alpha^2 L\right) \Vert \omega_k-\omega_k^*\Vert^2 \\
    &\hspace{1cm}- \alpha^2L\Vert \omega_k^*\Vert^2 \\
    &\geq J(\theta_k) +\dfrac{\alpha}{2G_1^2}\Vert\nabla_\theta J(\theta_k)\Vert^2 -\left(\dfrac{\alpha G_1^2}{2}+\alpha^2 L\right) \Vert \omega_k-\omega_k^*\Vert^2 - \alpha^2L\Vert F(\theta_k)^{-1}\nabla_\theta J(\theta_k)\Vert^2 \\
    &\overset{(c)}{\geq} J(\theta_k) +\left(\dfrac{\alpha}{2G_1^2}-\dfrac{\alpha^2 L}{\mu^2}\right)\Vert\nabla_\theta J(\theta_k)\Vert^2 -\left(\dfrac{\alpha G_1^2}{2}+\alpha^2 L\right) \Vert \omega_k-\omega_k^*\Vert^2  
\end{split}
\end{align}
 where $(a)$ use the Cauchy-Schwarz inequality and the definition that $\omega_k^*=F(\theta_k)^{-1}\nabla_\theta J(\theta_k)$. Relations $(b)$, and $(c)$ are consequences of Assumption \ref{assump:score_func_bounds}(a) and \ref{assump:FND_policy} respectively. Summing the above inequality over $k\in\{0,\cdots, K-1\}$, rearranging the terms and substituting $\alpha = \frac{\mu^2}{4G_1^2L}$, we obtain
 \begin{align}
 \label{eq:local-temp}
 \begin{split}
     \dfrac{\mu^2}{16G_1^4 L}\left(\sum_{k=0}^{K-1}\Vert\nabla_\theta J(\theta_k)\Vert^2\right)&\leq J(\theta_K)-J(\theta_0) + \left(\dfrac{\mu^2}{8L}+\dfrac{\mu^4}{16G_1^4 L}\right)\left(\sum_{k=0}^{K-1}\Vert \omega_k-\omega_k^*\Vert^2\right)\\
     &\overset{(a)}{\leq}2+\left(\dfrac{\mu^2}{8L}+\dfrac{\mu^4}{16G_1^4 L}\right)\left(\sum_{k=0}^{K-1}\Vert \omega_k-\omega_k^*\Vert^2\right)
 \end{split}
 \end{align}
where $(a)$ uses the fact that $J(\cdot)$ is absolutely bounded above by $1$. Inequality \eqref{eq:local-temp} can be simplified as follows.
\begin{align}
\label{eq_appndx_wash_62}
 \begin{split}
     &\left(\sum_{k=0}^{K-1}\E\Vert\nabla_\theta J(\theta_k)\Vert^2\right)\\
     &\hspace{1cm}\leq  \dfrac{32LG_1^4\mu^2}{\mu^{4} }+\left(2G_1^4+\mu^2\right)\left(\sum_{k=0}^{K-1}\E \Vert \omega_k-\omega_k^*\Vert^2\right)\\
     &\hspace{1cm}\leq \dfrac{32LG_1^4\mu^2}{\mu^{4} }+\left(2G_1^4+\mu^2\right)K\Bigg(   \exp\left(-\tfrac{H \mu^2}{4(1+4C)G_1}\right)\|\omega_0-\omega_k^*\|^2 \\
     &\hspace{1cm}+   G_1^2 \mu^{-2}\exp\left(-\tfrac{H \lambda^3}{16}\right)\|\xi_1-\xi^*\|^2 +  G_1^2 \mu^{-2}\frac{C_{\mathrm{tar}}\sqrt{m}}{\lambda^{4}B}   + G_1^2 \mu^{-2}\frac{C_{\mathrm{tar}}^2 m}{\lambda^6 B^2}\\
     &\hspace{1cm}+  \frac{\sqrt{d}G_1^4C_{\mathrm{tar}}C^2}{\mu^2 B}+ \frac{dG_1^4 C^2 C_{\mathrm{tar}}^2}{\mu^4 B^2}+\frac{dG_1^8 C_{\mathrm{tar}}^2}{\mu^2 B^2}+G_1^2 \mu^{-2}\epsilon_{\mathrm{app}} \Bigg)\\.
 \end{split}
 \end{align}

Furthermore, using $\alpha = \frac{\mu^2}{4G_1^2L}$, we find that $HB\sum_{k=0}^{K-1}(J^{*}-\E[J(\theta_k)])$ can be written as follows.
   \begin{align}
   \label{eq:second-last}
   \begin{split}
			&\E[\mathrm{Reg}_T] \\
            &\leq T\sqrt{\epsilon_{\mathrm{bias}}}+T G_1\E\Vert(\E_k\left[\omega_k\right]-\omega^*_k)\Vert +\frac{\mu^2G_2T}{4G_1^2L}\E\Vert \omega_k\Vert^2\nonumber \\
            &+\frac{4G_1^2L \sqrt{T}\log T}{\mu^2} \E_{s\sim d^{\pi^*}}[\mathrm{KL}(\pi^*(\cdot\vert s)\Vert\pi_{\theta_0}(\cdot\vert s))]\\
            & \leq T\sqrt{\epsilon_{\mathrm{bias}}}+T G_1\E\Vert(\E_k\left[\omega_k\right]-\omega^*_k)\Vert + \frac{\mu^2G_2T}{2G_1^2L}\E\Vert \omega_k -\omega_k^*\Vert^2 + \frac{G_2T}{2G_1^2L}\E\Vert \nabla_{\theta} J(\theta_k) \Vert^2\nonumber \\
            &+\frac{4G_1^2L \sqrt{T}\log T}{\mu^2} \E_{s\sim d^{\pi^*}}[\mathrm{KL}(\pi^*(\cdot\vert s)\Vert\pi_{\theta_0}(\cdot\vert s))]+C(K+1).
        \end{split}
		\end{align}
Using the bound on $\mathbb{E}\|\nabla_{\theta} J(\theta_k)\|^2$ derived earlier, along with the second-order error and bias of the NPG estimates, including all constants, from \eqref{eq:NPG-Var_const} and \eqref{eq:NPG-bias_const}, we obtain the following regret bound, omitting all non-dominant terms.
\begin{align}
 \begin{split}
     &\E[\mathrm{Reg}_T]\\
     &\leq \cO\Bigg(G_2G_1^2\sqrt{T}(\log T)\bigg[\dfrac{\mu^2+L}{\mu^{2}L}\bigg]+\frac{G_2\sqrt{T}}{G_1^2L}\bigg[ \frac{G_1^2 C_{\mathrm{tar}}\sqrt{m}}{ \mu^{2}\lambda^{4}}   +  \frac{\sqrt{d}G_1^4C_{\mathrm{tar}}C^2}{\mu^2 }\bigg]\\
     &+\sqrt{T}\Bigg[\frac{\sqrt{d}G_1^5 C_{\mathrm{tar}}}{\mu^3 }+ \frac{C_{\mathrm{tar}} G_1^2\sqrt{m} R_0}{\mu \lambda }+\frac{G_1^2C_{\mathrm{tar}} \sqrt{m}}{\mu \lambda^3 }\Bigg]+\frac{TG_1^2\sqrt{\epsilon_{\mathrm{app}}}}{ \mu}+T \sqrt{\epsilon_{\mathrm{bias}}}\Bigg).
 \end{split}
 \end{align}
The regret bound above is obtained by setting the learning rates as follows: policy update rate $\alpha = \frac{\mu^2}{4G_1^2L}$, critic learning rates $(\beta,c_{\beta}) = \left(\frac{\lambda^2}{2},\lambda + \sqrt{\frac{1}{\lambda^2} - 1}\right)$ and NPG estimation rate $\gamma = \frac{\mu}{G_1^2}$. 

\begin{remark}[Scalability of $C_{\mathrm{hit}}$ and $C_{\mathrm{tar}}$]
The constants $C_{\mathrm{hit}}$ and $C_{\mathrm{tar}}$ reflect the difficulty of exploration under a given policy class and may scale with structural properties of the MDP such as state-space size or connectivity. Such dependence is inherent, as any algorithm must visit all states to avoid constant per-iteration regret, and even in ergodic MDPs the mixing time $t_{\mathrm{mix}}$ can scale with the state space. Our result extends existing ergodic-MDP guarantees to the more general unichain setting without degrading these constants. Specifically, while state-of-the-art actor–critic methods achieve complexity $O(t_{\mathrm{mix}}^{3}\sqrt{T})$~\cite{ganesh2024order}, our bound $O(C^{3}\sqrt{T})$ involves a constant $C$ that characterizes the Cesàro mixing time~\cite[Sec.~6.6]{levin2017markov}, which satisfies $C \le 7t_{\mathrm{mix}}$ for ergodic chains~\cite[Ex.~6.11]{levin2017markov}. In certain Markov chains, such as the biased random walk on the $n$-cycle~\cite[Ex.~24.2]{levin2017markov}, the Cesàro mixing time is only $O(n)$ versus the standard mixing time which is $\Theta(n^2)$, illustrating that averaging distributions over time accelerates convergence compared to single-step mixing.
\end{remark}

\begin{remark}[On the Unichain Assumption]\label{rem:ext}
To the best of our knowledge, no existing policy gradient methods (with general policy parametrization) have theoretical guarantees for average-reward, infinite-horizon MDPs beyond the unichain case. Our work is the first to establish such guarantees under the general unichain assumption, which is the weakest known condition under which the policy gradient theorem holds \cite{sutton1999policy}. While value-based methods admit guarantees beyond this setting, policy gradient approaches require fundamentally different analyses, making existing techniques inapplicable. A possible, though still conjectural, direction for relaxing the unichain assumption is to restrict policy search to unichain-inducing policies by maintaining strong exploration initially and annealing it as the algorithm converges. Notably, weakly communicating MDPs, the most general class solvable from a single stream of experience, always admit an optimal unichain policy \cite{puterman2014markov}.
\end{remark}

\end{document}